\documentclass{article}

\usepackage[preprint]{neurips_2026}
\usepackage{amsmath}
\usepackage{soul}
\usepackage{graphicx}
\usepackage{subcaption}
\usepackage{natbib}
\usepackage[utf8]{inputenc} % allow utf-8 input
\usepackage[T1]{fontenc}    % use 8-bit T1 fonts
\usepackage{hyperref}       % hyperlinks
\usepackage{url}            % simple URL typesetting
\usepackage{booktabs}       % professional-quality tables
\usepackage{amsfonts}       % blackboard math symbols
\usepackage{amssymb}
\usepackage{amsthm}
\usepackage{nicefrac}       % compact symbols for 1/2, etc.
\usepackage{microtype}      % microtypography
\usepackage{xcolor}         % colors
\newtheorem{theorem}{Theorem}
\newtheorem{proposition}{Proposition}  % add this line
\newtheorem{remark}{Remark}

\usepackage{booktabs}
\usepackage{multirow}

\usepackage{overpic}

\definecolor{revcolor}{RGB}{0,70,190}
\newcommand{\rev}[1]{{\color{black}#1}}

\title{A General Kernel Framework for Non-CND Distance Measures Using $|\mathcal{D}|$-Dimensional Sparse Landmark Embeddings}

\author{%
  Marcus M.~Noack \\
  Applied Mathematics and Computational Research Division, \\
  Lawrence Berkeley National Laboratory \\
  Berkeley, CA 94720, USA \\
  \texttt{MarcusNoack@lbl.gov} \\
   \And
  Maher B. Alghalayini \\
  Applied Mathematics and Computational Research Division, \\
  Lawrence Berkeley National Laboratory \\
  Berkeley, CA 94720, USA \\
   \AND
  Mark D. Risser \\
  Climate and Ecosystem Sciences Division, \\
  Lawrence Berkeley National Laboratory \\
  Berkeley, CA 94720, USA \\
}

\begin{document}

\maketitle

\begin{abstract}
Kernel methods, and Gaussian Processes (GPs) in particular, require a Hilbertian distance measure---one whose square is conditionally negative definite (CND)---to guarantee positive semi-definiteness (PSD) of the kernel matrix; a condition that fails for many natural input spaces, including smooth manifolds and spaces of probability distributions. We propose the Sparse Landmark Embedding (SLE) kernel, which eliminates this requirement entirely. Each input is embedded into a sparse feature vector via compactly supported bump functions centered at \rev{all $|\mathcal{D}|$ training points}; applying any standard PSD kernel in this embedding space yields a kernel that is provably PSD for arbitrary distance measures. The compact support automatically controls embedding sparsity, keeping kernel matrices well-conditioned and computationally tractable despite the high ambient dimension. We provide theoretical guarantees on PSD, sparsity, stability, and universal approximation, and demonstrate, using geodesic and Wasserstein distances, that the SLE kernel matches or substantially exceeds domain-specific baselines in both predictive accuracy and uncertainty quantification.
\end{abstract}

\section{Introduction}
Modern machine learning applications increasingly require flexible, probabilistic models that can handle diverse data structures while quantifying uncertainty. Gaussian Process (GP) regression has emerged as a flexible kernel-based method for approximating unknown functions from limited observed data \citep{rasmussen2006gaussian}.
A GP defines a normal prior probability distribution $\mathcal{N}(\mathbf{m}, \mathbf{K})$ over an arbitrary set of function values 
$
\mathbf{f} = [f(x_1), f(x_2), \dots, f(x_N)]^T,
$
where $x \in \mathcal{X}$, with a mean function \(m(x)\) --- often assumed to be zero for simplicity --- and a covariance matrix
$
\mathbf{K} = \text{Cov}(\mathbf{f}, \mathbf{f}), \quad \mathbf{K} \in \mathbb{R}^{N \times N}.
$
The true underlying function generating the data is assumed to be \(f(x)\). The observed dataset 
$
\mathcal{D} = \{ (x_i, y_i) \}_{i=1}^{|\mathcal{D}|}
$
with cardinality $|\mathcal{D}|$ is assumed to result from the functional relationship
$
y_i = f(x_i) + \epsilon(x_i),
$
where the noise \(\epsilon(x_i)\) is drawn from a Gaussian distribution with zero mean. The Gaussian prior is commonly assumed to be defined over function values at the data points; that means $N=|\mathcal{D}|$. We denote the collection of all inputs by \(\boldsymbol{X}\) and the corresponding outputs by \(\mathbf{y}\). The covariance matrix is calculated by applying a positive semi-definite (PSD) kernel function to positional arguments; i.e., $\mathbf{K} = \text{Cov}(\mathbf{f}, \mathbf{f}) = [k(x_i, x_j)]_{i,j=1}^{N}$. Stationary kernels depend only on the distance between inputs; $k(x_i, x_j) = k(|x_i - x_j|)$; most non-stationary kernels also use some form of distance between input pairs in their formulation. Positive definiteness of such stationary and non-stationary kernels is guaranteed when the square of the underlying distance metric is conditionally negative definite (CND)---equivalently, that it be Hilbertian~\citep{berg1984harmonic}. This property is not generally satisfied for distance measures on non-Euclidean spaces.

This issue is particularly apparent when input data lie on manifolds (e.g., endowed with the geodesic distance), are probability distributions (Wasserstein distance), strings, trees, or graphs, many of which naturally admit distance measures that are not CND. For example, the Wasserstein distance $W_2$ between distributions is not CND in general~\citep{bachoc2021gaussian}, and geodesic distances on Riemannian manifolds similarly fail this property~\citep{feragen2015geodesic,haasdonk2007invariant}. As a result, naively applying kernels can yield indefinite Gram matrices, violating the mathematical requirements of GPs and other kernel methods.

A variety of workarounds have been developed. For instance, for smooth manifolds, intrinsic heat or diffusion kernels provide PSD alternatives at the expense of increased computational burden and the need for geometric information~\citep{lafon2004diffusion}. For distributions, the sliced Wasserstein distance~\citep{bonneel2015sliced} offers a computationally tractable and CND alternative, but may sacrifice accuracy and efficiency.

%%%landmark embedding kernels as a solution
One particularly interesting approach to handle non-CND distances without distorting geometry is to move from distance-based kernels to landmark-based embedding kernels, which map inputs into finite-dimensional feature spaces defined by distances to a selected set of reference points. Rather than relying on a single pairwise distance, these methods construct feature vectors of the form $\phi(x) = [d(x, \ell_1), \dots, d(x, \ell_m)]$, where $\{\ell_i\}_{i=1}^m$ are landmark points drawn from the data or placed strategically. Once embedded, a Euclidean-distance kernel --- such as any Mat\'ern kernel --- can be applied to these representations, ensuring positive semidefiniteness even when the original distance is not CND. This idea connects to early Nystr\"om and inducing-point approximations in kernel methods \citep{williams2001using, drineas2005nystrom, titsias2009variational, hensman2013gaussian}, as well as more recent landmark-based constructions for learning on manifolds and distributions \citep{jayasumana2013kernel}. Unlike sliced or projected distances, landmark embeddings preserve richer structural information, making them a well-suited candidate for extending Gaussian processes to non-Euclidean and non-CND settings.

%%% what is not going well with landmark embedding kernels
Despite their conceptual appeal, landmark-based kernels introduce significant practical challenges. The first difficulty lies in choosing or learning landmark locations. If landmarks are selected heuristically (e.g., via $k$-means or random subsampling), they may fail to capture important geometric or topological features of the data manifold, leading to degraded predictive performance \citep{Alaoui2015, musco2017recursive}. Conversely, jointly learning landmark positions as model parameters introduces a highly nonconvex optimization problem, in which gradients must propagate through distance computations and kernel evaluations, often resulting in unstable training dynamics. A second major challenge stems from the embedding's dimensionality. As the number of landmarks grows, each input is mapped to a feature vector in $\mathbb{R}^m$, where $m$ may be in the hundreds or thousands. While increasing $m$ improves geometric fidelity, it exacerbates the curse of dimensionality, leading to poor predictive performance and uncertainty quantification. Moreover, high-dimensional embeddings tend to yield poorly conditioned kernel matrices, which complicates both hyperparameter learning and posterior inference. Thus, practical deployment of landmark-based Gaussian processes requires a careful balance between expressivity (large $m$) and tractability (small $m$), a regime for which no principled selection mechanisms currently exist.

In this paper, we propose Sparse Landmark Embedding (SLE) kernels that treat all data points as landmarks --- therefore avoiding the need to select their positions --- and leverage bump-function-based embeddings for automatic dimensionality reduction. Our kernel operates on arbitrary distance measures, mitigates the need for CND distance approximations, and is computationally stable. \rev{An overview is given in Figure~\ref{fig:comp}. Our contributions are: (i) the SLE kernel construction itself, which renders the use of \emph{all} $|\mathcal{D}|$ training points as landmarks tractable and thereby eliminates the landmark-selection problem; (ii) theoretical guarantees on positive semi-definiteness, sparsity, conditioning, local geometric fidelity, and universal approximation (Theorems~\ref{thm:psd}--\ref{thm:scaling}, Proposition~\ref{prop:injective}); (iii) a non-stationary extension that preserves the PSD guarantee for arbitrary spatially varying bump parameters (Appendix~\ref{app:ns_kernel}); and (iv) an experimental evaluation against three domain-specific baselines on manifold- and distribution-valued GP regression, together with controlled ablation studies that isolate the effect of the bump embedding relative to raw-distance landmark embeddings (Appendix~\ref{app:ablation}).}

\begin{figure}[t]
\centering
% TODO (revision, Reviewer [5]): reduce the font size of the panel letters a)-e) in pics/Figure1.pdf (figure-file edit, not a LaTeX edit).
\begin{overpic}[width=\textwidth]{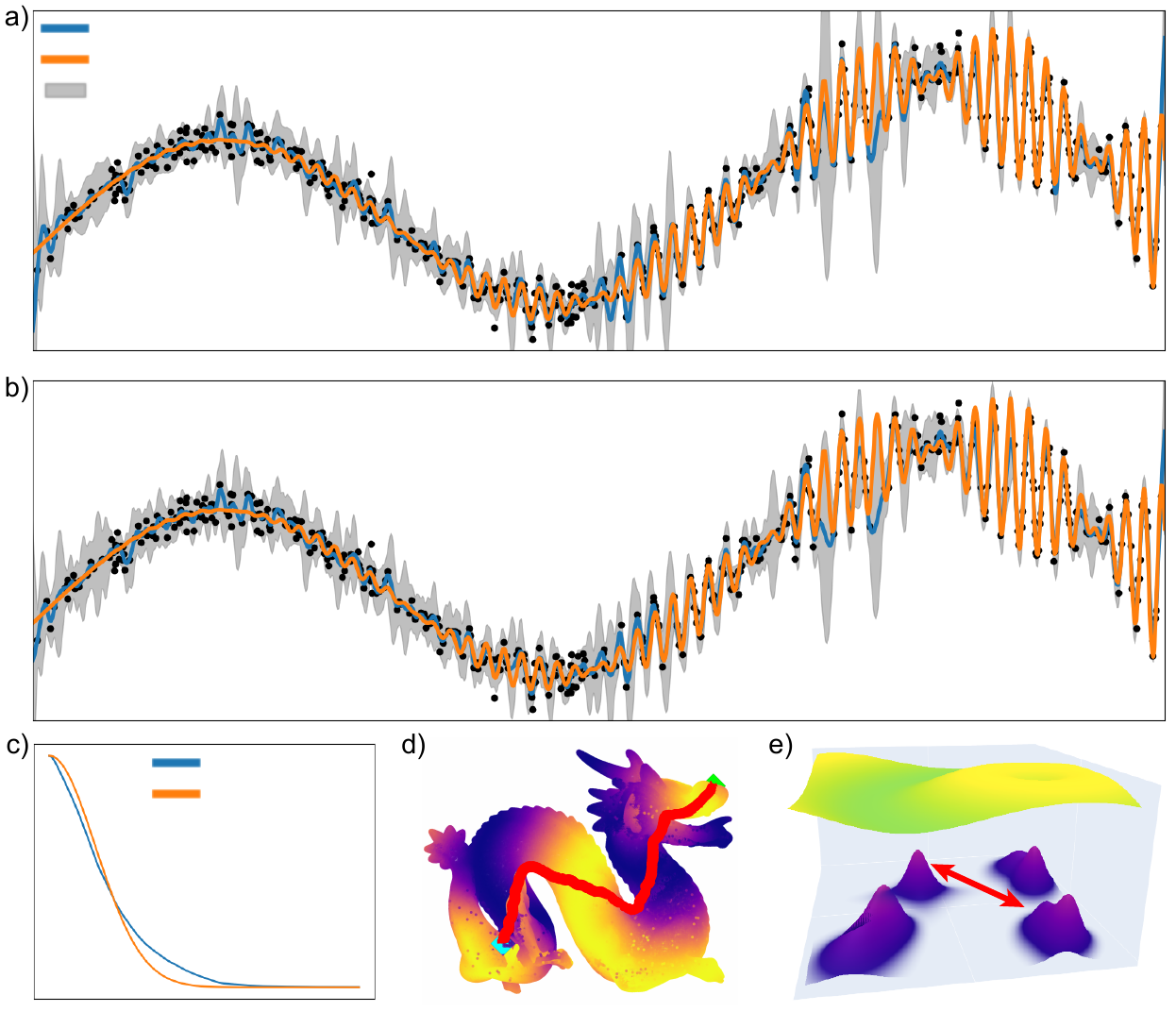}

    % ── Panel a: legend labels (right of existing swatches) ─────────────────
    \put(8, 83){\small Posterior Mean}
    \put(8, 80.5){\small Latent Function}
    \put(8, 77.5){\small Variance}

    % ── Panel a: title + metrics, centred in panel ──────────────────────────
    \put(50, 83){\makebox[0pt][c]{\small Proposed SLE Kernel}}
    \put(50, 80){\makebox[0pt][c]{\small $RMSE=0.11821$}}
    \put(50, 77){\makebox[0pt][c]{\small $CRPS=0.04921$}}

    % ── Panel b: title + metrics (moved down into panel b) ──────────────────
    \put(50, 51){\makebox[0pt][c]{\small 3/2 Mat\'ern Kernel}}
    \put(50, 48){\makebox[0pt][c]{\small $RMSE=0.13942$}}
    \put(50, 45){\makebox[0pt][c]{\small $CRPS=0.05531$}}

    % ── Panel c: labels placed where curves have flattened ──────────────────
    \put(18, 20.5){\small SLE Kernel}
    \put(18, 18){\small Mat\'ern Kernel}

     % ── Panel d: Geodesic Distance (top-left white space above dragon) ───────
    \put(42, 21){\makebox[0pt][c]{\small Geodesic}}
    \put(42, 19){\makebox[0pt][c]{\small Distance}}

    % ── Panel e: Wasserstein Distance (bottom white space below distributions)─
    \put(82, 3){\makebox[0pt][c]{\small Wasserstein}}
    \put(82, 1){\makebox[0pt][c]{\small Distance}}

\end{overpic}
\caption{We propose the Sparse Landmark Embedding kernel $k_{SLE}$, a general
(non-)stationary kernel for Gaussian Processes (GPs). Panels (a) and (b) show
a standard GP regression task performed with the proposed SLE kernel~(a) and a
Mat\'ern kernel ($\nu=3/2$)~(b), demonstrating the comparable behavior of the
two kernels in a standard regression scenario, with the proposed kernel
yielding a lower prediction error (RMSE) and better uncertainty quantification
(CRPS). In simple cases, our kernel structurally and empirically resembles
well-known stationary kernels such as the RBF and other Mat\'ern kernels~(c).
Unlike standard stationary kernels, the proposed SLE kernel
does not rely on a conditionally negative-definite (CND) distance metric,
making it applicable to input spaces that lack such a metric. This includes smooth
manifolds, where geodesic distances can be used without restriction~(d), as
well as sets of distributions, where exact Wasserstein distances can be
employed directly~(e).}
\label{fig:comp}
\end{figure}

%%%%%%%%%%%%%%%%%%%%%
\section{Related work} \label{sec:relatedwork}
A foundational requirement for kernel methods in general, and GPs in particular, is that the kernel be positive semi-definite (PSD), with distance-based kernels (e.g., RBF, Matérn) being PSD only when the associated distance metric is conditionally negative definite (CND)~\citep{berg1984harmonic}. When this condition is violated, applying standard stationary and non-stationary kernels can yield indefinite covariance matrices and unstable inference. This phenomenon has been studied in several disciplines. 

\textbf{Manifolds.} In the context of smooth Riemannian manifolds, the geodesic distance $d_\mathcal{M}(x, x')$ is not CND for general manifolds~\citep{feragen2015geodesic}. This prohibits the direct use of stationary radial kernels. Extrinsic approaches seek to embed the manifold in Euclidean space and use chordal distances, at the cost of geometric distortion. Intrinsically, heat kernels and spectral Laplace-Beltrami kernels leverage the underlying geometry to guarantee the PSD property, following from the spectral properties of the associated self-adjoint operators~\citep{borovitskiy2021matern, borovitskiy2020Matern}. However, these methods can be computationally demanding and require access to geometric features such as Laplacian eigenfunctions~\citep{lafon2004diffusion}.

\textbf{Distributions.} Probability distributions endowed with the Wasserstein metric face an analogous challenge: $W_2$ is not CND except in specific cases, for example, in one dimension~\citep{bachoc2021gaussian}. Consequently, kernels such as $\exp(-W_2^2(\mu,\nu)/\sigma^2)$ are indefinite in general. The sliced Wasserstein distance~\citep{bonneel2015sliced} improves the situation as it is CND, but can incur both computational cost and reduced accuracy. Other alternatives, such as entropically regularized Sinkhorn distances, partly restore the practical PSD property but are not guaranteed in all cases.

\textbf{Structured Data.} For data structured as sequences (strings) or trees, edit distances like Levenshtein and tree edit distance are common, but not CND. The resulting radial kernels are usually indefinite. To circumvent this, structured kernels that rely on feature vectors of n-grams, subsequences, or subtrees have been proposed, guaranteeing PSD via explicit inner-product constructions~\citep{lodhi2002text, chen2023fastkassim}.

\textbf{Graphs.} In graph domains, shortest-path distances are not generally CND, rendering direct radial kernels indefinite. PSD kernels such as diffusion, resistance, or commute-time kernels are constructed from the spectral or stochastic properties of graph Laplacians~\citep{von2008consistency, nikolentzos2021graph, ralaivola2005graph}, ensuring mathematical validity.

\textbf{Landmark/Nystr\"om Methods.} A prominent set of alternatives use landmarks or Nystr\"om methods, embedding each point as a feature vector of its distances to selected landmarks~\citep{williams2001using, drineas2005nystrom, jayasumana2013kernel}. Once embedded, a standard Euclidean kernel can be applied, preserving PSDness independently of the original metric's CND property. These ideas underlie scalable GP approximations like inducing points~\citep{titsias2009variational, hensman2013gaussian}. However, landmark-based embeddings inevitably involve critical choices about the number and placement of landmarks. Heuristic selections (e.g., $k$-means, subsampling) may not capture underlying geometric features, reducing predictive performance~\citep{Alaoui2015, musco2017recursive}, while learning landmarks involves challenging non-convex optimization. Embedding dimensionality also introduces a tradeoff between expressivity and tractability, complicated by the curse of dimensionality and associated numerical instabilities.

Closely related to this line of work, \citep{wu2018d2ke} propose D2KE, a general framework for constructing PSD kernels from arbitrary dissimilarities by embedding inputs via distances to a reference set and applying a standard kernel in the resulting feature space. The SLE kernel can be viewed as a specific instantiation of this framework, distinguished by three design choices: (i) the use of compactly supported bump functions rather than raw distances as the embedding map, which induces automatic sparsity and avoids the curse of dimensionality; (ii) the use of all training points as landmarks, eliminating the landmark selection problem; and (iii) a specific focus on the GP setting with theoretical guarantees on conditioning, stability, and universal approximation. \rev{We emphasize that these differences are structural rather than incremental. D2KE embeds inputs via raw distances to a small, randomly sampled reference set, producing a \emph{dense} embedding whose dimension must be kept limited to avoid ill-conditioning and distance concentration --- precisely the expressivity/tractability tradeoff described in Section~\ref{sec:limitations_landmark}. The bump construction changes the character of the embedding --- sparse, compactly supported, and local --- and it is exactly this change that renders the use of all $|\mathcal{D}|$ training points as landmarks feasible, eliminates landmark selection, and yields the conditioning and sparsity guarantees of Theorems~\ref{thm:stability}--\ref{thm:concentration}, none of which have analogues in the D2KE framework. The ablation studies in Appendix~\ref{app:ablation} constitute a controlled empirical comparison against precisely this raw-distance (D2KE-style) embedding.}

In summary, while a rich ecosystem of kernels and workarounds has been developed to extend GP regression beyond Euclidean domains, most require nontrivial geometric information, incur high computational cost, or sacrifice expressive fidelity. Our work contributes to this landscape by proposing a computationally efficient\rev{, provably PSD kernel that operates directly on the native distance measure of the input space --- without projections, slicing, or surrogate metrics (made precise in Proposition~\ref{prop:injective}) --- and is} suitable for arbitrary distances and learning in abstract non-Euclidean spaces.

\section{Background}\label{sec:background}

A major challenge in kernel design is ensuring the covariance function remains PSD 
when non-Euclidean or data-driven distances are used. Kernels built from a distance 
$d(x, x')$ whose square is not conditionally negative definite (CND) may yield indefinite 
Gram matrices, compromising both mathematical consistency and numerical stability of 
GP inference \citep{berg1984harmonic}. Hilbertian distance metrics guarantee, via Schoenberg's theorem \citep{berg1984harmonic}, 
that kernels of the form $k(x, x') = \exp(-d(x,x')^2/\sigma^2)$ are PSD.
A distance $d(x, x')$ is called \emph{Hilbertian} if $(X, d)$ can be isometrically 
embedded into a Hilbert space $\mathcal{H}$, i.e., there exists $\phi: X \to 
\mathcal{H}$ such that
$
    d(x, x') = \|\phi(x) - \phi(x')\|_{\mathcal{H}}.
$
Not all common distances satisfy this property: while the Euclidean
distance is Hilbertian, the Wasserstein-2 distance in dimensions greater than one 
is not \citep{gabriel2019computational}, and neither are geodesic, $l_1$ (in dimension $\geq2$), and other common distances, 
precluding their direct use in standard 
kernel methods. In this work, we propose a kernel construction that guarantees PSD 
even when the underlying distance is not CND.

\section{Methodology}\label{sec:methodology}

Our goal is to construct a kernel that (i) is provably positive semi-definite (PSD) 
for arbitrary, potentially non-CND distance measures, (ii) \rev{operates directly on 
the native distance $d$ of the input space, without replacing it by a projected, 
sliced, or otherwise distorted surrogate (made precise in 
Proposition~\ref{prop:injective})}, and (iii) remains 
computationally tractable. We build toward this construction in three steps: first 
establishing the core theoretical insight that motivates landmark embeddings, then 
identifying the practical obstacles of naive implementations, and finally showing 
how compactly supported bump functions resolve both obstacles simultaneously.

\subsection{The core insight: geometry-free PSD kernels via embeddings}

The fundamental observation underlying our approach is the following. Let 
$\mathcal{X}$ be any input space equipped with an arbitrary distance $d(\cdot, 
\cdot)$, not necessarily CND, and let $\phi : \mathcal{X} \rightarrow \mathbb{R}^m$ 
be any mapping into a Euclidean space. If \rev{$h$} $: \mathbb{R}^m \times \mathbb{R}^m 
\rightarrow \mathbb{R}$ is a PSD kernel on $\mathbb{R}^m$, then the composed kernel
\begin{equation}
    k(x, x') = \rev{h}(\phi(x), \phi(x'))
\end{equation}
is automatically PSD on $\mathcal{X}$, for any choice of $\phi$. This follows 
directly from the definition of positive semi-definiteness: for any finite set 
$\{x_1, \ldots, x_N\} \subset \mathcal{X}$ and any $\mathbf{c} \in \mathbb{R}^N$,
\begin{equation}
    \sum_{i=1}^N \sum_{j=1}^N c_i c_j k(x_i, x_j) 
    = \sum_{i=1}^N \sum_{j=1}^N c_i c_j \rev{h}(\phi(x_i), \phi(x_j)) \geq 0,
\end{equation}
since \rev{$h$} is PSD on $\mathbb{R}^m$ and $\{\phi(x_i)\}$ is simply a finite 
collection of points in $\mathbb{R}^m$. Crucially, this guarantee is entirely 
independent of the geometry of $\mathcal{X}$ and of whether $d(\cdot, \cdot)$ is 
CND. The geometry of $\mathcal{X}$ enters only through $\phi$, which can be 
constructed from $d$ in any way we choose.

\rev{\begin{remark}[Requirements on $d$]\label{rem:distreq}
No properties whatsoever are required of $d$ for the validity of the resulting kernel: Theorem~\ref{thm:psd} places no conditions on $d(\cdot,\cdot)$, since positive semi-definiteness is inherited entirely from the kernel applied in the embedding space. In particular, $d$ need not satisfy the triangle inequality, need not be symmetric, and may be noisy or inconsistent. Only auxiliary results require more of $d$ (Theorems~\ref{thm:smooth}, \ref{thm:sparsity}, \ref{thm:sparsity_scaling}).
\end{remark}}

This insight suggests a general strategy: encode the geometry of $(\mathcal{X}, d)$ 
into $\phi$, and then apply a standard Euclidean kernel \rev{$h$} in the embedding 
space. The remaining question is how to design $\phi$ so that it faithfully 
represents the geometry of $\mathcal{X}$ while keeping the kernel computationally 
tractable.

\subsection{Landmark embeddings and their limitations}\label{sec:limitations_landmark}

A natural choice for $\phi$ is a \emph{landmark embedding} \citep{gao2019gaussian, Scholkopf2002}: given a set of 
reference points $\{\ell_1, \ldots, \ell_m\} \subset \mathcal{X}$, embed each 
input by its distances to all landmarks,
\begin{equation}
    \phi(x) = [d(x, \ell_1), \ldots, d(x, \ell_m)]^\top \in \mathbb{R}^m.
\end{equation}
This construction is intuitive and general: it uses only the distance $d$, makes 
no assumptions about the geometry of $\mathcal{X}$, and produces a Euclidean 
feature vector to which any standard kernel can be applied. However, naive landmark 
embeddings face two fundamental and coupled difficulties.

\rev{\textbf{Landmark selection.} Heuristic selection (e.g., $k$-means, random
subsampling) may miss important geometric or topological features of the data,
degrading predictive performance \citep{Alaoui2015, musco2017recursive}, while
jointly optimizing landmark positions introduces a highly nonconvex problem with
gradients propagating through distance and kernel evaluations, often yielding
unstable training dynamics.

\textbf{Dimensionality.} Increasing the number of landmarks $m$ improves geometric
fidelity but subjects the embedding to the curse of dimensionality: pairwise
Euclidean distances concentrate, kernel matrices become poorly conditioned, and
both hyperparameter learning and posterior inference deteriorate
\citep{Alaoui2015}.}

These two problems are fundamentally coupled. Good geometric coverage of 
$\mathcal{X}$ requires many landmarks, but many landmarks produce high-dimensional, 
ill-conditioned embeddings. Any principled solution must address both simultaneously.

\subsection{Bump-function embeddings: resolving both problems at once}

We resolve both problems through a single design choice: replacing the raw distance embedding with a \emph{compactly supported bump-function embedding}, and using all $|\mathcal{D}|$ training points as landmarks. Note that $|\mathcal{D}|$ therefore serves simultaneously as the dataset cardinality and the embedding dimension — this is not a notational coincidence but a deliberate design choice that eliminates the need to separately specify or optimize the number of landmarks.

\rev{Throughout this work, a \emph{bump function} $b(\cdot)$ is any function of the
distance that is (i) compactly supported on $[0, r)$ for a radius $r>0$, (ii)
smooth on its support, and (iii) strictly positive (and strictly decreasing) on
its support. Our specific choice, defined in Eq.~\eqref{eq:bump} below and
visualized in Appendix~\ref{app:bump_vis}, is one member of this admissible
family; any other function with these properties may be substituted without
affecting the guarantees of this paper.}

The compact support of the bump functions \citep{noack2017hybrid} is the key mechanism. Because each bump 
function is exactly zero beyond a radius $r$ from its center, a given input $x$ 
will activate only the bump functions of nearby landmarks --- those within distance 
$r$. The embedding vector $\phi(x)$ is therefore \emph{sparse}: most of its $|\mathcal{D}|$ 
entries are exactly zero, with only a small number of nonzero entries corresponding 
to the local neighborhood of $x$. This sparsity has two immediate consequences.

First, it resolves the dimensionality problem. Although the ambient dimension of the 
embedding is $|\mathcal{D}|$, the effective dimension --- the number of nonzero entries --- 
remains small and controlled by the radius $r$, independently of $|\mathcal{D}|$. The curse 
of dimensionality is therefore avoided: pairwise distances in the embedding space 
remain informative, and kernel matrices remain well-conditioned as $|\mathcal{D}|$ grows 
(Theorems~\ref{thm:stability} and~\ref{thm:concentration}\rev{; empirically verified in Appendix~\ref{app:conditioning}}).

Second, it makes landmark selection trivial. Because the embedding is sparse, using 
all $|\mathcal{D}|$ training points as landmarks is computationally tractable. This choice 
guarantees maximal geometric coverage by construction, entirely bypassing the 
landmark selection problem and its associated nonconvex optimization.

\rev{Finally, the embedding is faithful to the native distance in the following
precise sense: no projection, slicing, or
surrogate metric is ever introduced --- the embedding is a function of the exact
native distance profile --- and the map from local distance profiles to embeddings
is injective.

\begin{proposition}[Local geometric fidelity]\label{prop:injective}
Fix landmarks $\{x_i\}_{i=1}^{|\mathcal{D}|}$ and bump parameters, and let
$b(\cdot;a,r_i,\beta)$ be strictly decreasing on its support $[0,r_i)$ for every
$i$. Then for any $x, x' \in \mathcal{X}$,
\[
\phi(x) = \phi(x')
\quad\Longleftrightarrow\quad
d(x, x_i) = d(x', x_i)\ \ \text{for every } i \text{ with } \min\bigl(d(x,x_i),\, d(x',x_i)\bigr) < r_i.
\]
That is, two inputs receive identical embeddings if and only if their distances to
all landmarks within reach agree exactly; the embedding discards only far-field
information (distances beyond the bump radii), which is a deliberate consequence of
locality, and distorts nothing within it. The proof is given in
Appendix~\ref{app:geo_proof}. An empirical distortion analysis comparing
embedding-space distances to native distances, for SLE versus the sliced
Wasserstein surrogate, is provided in Appendix~\ref{app:distortion}.
\end{proposition}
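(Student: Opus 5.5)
The plan is to prove the equivalence coordinatewise. By construction (Eq.~\eqref{eq:bump}), the embedding is $\phi(x) = [b(d(x,x_1);a,r_1,\beta),\ldots,b(d(x,x_{|\mathcal{D}|});a,r_{|\mathcal{D}|},\beta)]^\top$. Hence $\phi(x)=\phi(x')$ holds if and only if $b(d(x,x_i);a,r_i,\beta)=b(d(x',x_i);a,r_i,\beta)$ for every index $i$. It therefore suffices to fix a single $i$, abbreviate $t=d(x,x_i)$ and $t'=d(x',x_i)$, and show
\[
b(t)=b(t') \quad\Longleftrightarrow\quad \bigl(\min(t,t')\ge r_i \ \text{ or }\ t=t'\bigr).
\]
The statement's condition is exactly the conjunction over $i$ of the right-hand side: indices with $\min(t,t')\ge r_i$ impose no constraint, and all other indices require $t=t'$.

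The two properties of $b$ that I will use are that $b$ vanishes identically on $[r_i,\infty)$, by compact support, and that $b$ is strictly positive and strictly decreasing on $[0,r_i)$, by the hypothesis of the proposition together with the admissibility conditions (i) and (iii). I will split into three cases.

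\emph{Case 1: both $t,t'\ge r_i$.} Then $b(t)=b(t')=0$, so the $i$th coordinates agree. The right-hand side also holds, since $\min(t,t')\ge r_i$.

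\emph{Case 2: exactly one of $t,t'$ lies below $r_i$.} Say $t<r_i\le t'$. Then $b(t)>0=b(t')$, so the coordinates differ. The right-hand side fails as well: $\min(t,t')=t<r_i$, and $t\neq t'$. So both sides are false.

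\emph{Case 3: both $t,t'<r_i$.} Strict monotonicity on $[0,r_i)$ makes $b$ injective there, so $b(t)=b(t')$ if and only if $t=t'$. This matches the right-hand side, because here $\min(t,t')<r_i$.

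Collecting the three cases over all $i$ gives both directions of the claimed equivalence. Since $d$ takes values in $[0,\infty)$, the arguments always lie in the domain of $b$.

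The only step needing care is Case 2, since it shows the two sides are genuinely equivalent rather than linked by a one-way implication. This case rests on $b$ being \emph{strictly} positive on the whole open support, including near $r_i$; otherwise some $t<r_i$ would give $b(t)=0=b(t')$. I would check this against Eq.~\eqref{eq:bump}. A typical bump such as $\exp\bigl(-a/(1-(t/r_i)^2)\bigr)$ is positive for every $t<r_i$ and tends to $0$ only in the limit $t\to r_i$, so the required positivity holds. Strict decrease is assumed in the proposition, so no further verification of monotonicity is needed. Finally, no metric properties of $d$ are used anywhere, consistent with Remark~\ref{rem:distreq}.
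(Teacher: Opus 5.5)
Your proof is correct and is essentially the paper's argument: a coordinatewise comparison that uses vanishing on $[r_i,\infty)$, strict positivity on $[0,r_i)$ to rule out the mixed case (your Case~2, which is the paper's WLOG step in the ($\Rightarrow$) direction), and injectivity from strict monotonicity (your Case~3). One cosmetic slip: the bump in Eq.~\eqref{eq:bump} is $a\exp\bigl(-\beta/(1-d^2/r^2)+\beta\bigr)$ with $a>0$, not the form you quoted, though positivity on $[0,r_i)$ holds either way.
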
}

\subsection{The Sparse Landmark Embedding (SLE) kernel}

Let $\{x_1, x_2, \ldots, x_{|\mathcal{D}|}\}$ denote the training data points and let 
$d(\cdot, \cdot)$ be a possibly non-CND distance metric, e.g., a geodesic distance 
on a manifold or the Wasserstein distance between distributions. We define the 
normalized bump function as
\begin{equation}
    b(d;\, a, r, \beta) = 
    \begin{cases} 
        a \exp\!\left( -\dfrac{\beta}{1 - d^2/r^2} + \beta \right) & \text{if } d < r, \\[6pt]
        0 & \text{otherwise,}
    \end{cases}
    \label{eq:bump}
\end{equation}
where $a > 0$ is the amplitude, $r > 0$ is the support radius, and $\beta > 0$ is 
a shape parameter controlling the flatness of the bump. This gives rise to the 
sparse landmark embedding
\begin{equation}
    \phi(x) = \bigl[b(d(x,x_1);\,a,r,\beta),\; b(d(x,x_2);\,a,r,\beta),\; 
    \ldots,\; b(d(x,x_{|\mathcal{D}|});\,a,r,\beta)\bigr]^\top \in \mathbb{R}^{|\mathcal{D}|}.
    \label{eq:embedding}
\end{equation}
Any stationary and non-stationary kernel can now be applied to the embedding space.
For example, applying the RBF kernel in the embedding space yields the particular SLE kernel:
\begin{equation}
    k_{\mathrm{SLE-RBF}}(x, x') = \sigma^2 \exp\!\left( -\frac{\|\phi(x) - \phi(x')\|^2}{2\ell^2} \right),
    \label{eq:sle}
\end{equation}
where $\sigma^2 > 0$ is the signal variance and $\ell > 0$ is the length scale. 
By the argument of Theorem \ref{thm:psd}, $k_{\mathrm{SLE}}$ is 
immediately PSD on $\mathcal{X}$ for any distance metric $d$. Any other standard kernel 
that is PSD on $\mathbb{R}^{|\mathcal{D}|}$ --- including the entire Mat\'{e}rn family --- can be 
used in place of the RBF kernel, yielding a corresponding SLE variant. 
\rev{We write SLE-RBF, SLE (Mat\'ern $\nu=3/2$), etc.\ to
indicate the inner kernel applied to the embedding, and simply SLE when the inner
kernel is clear from context; all experiments in Section~\ref{sec:experiments} use
Mat\'ern inner kernels, matched to the kernel order of the respective baseline.}
The SLE kernel maintains the original kernel's expressivity and universal approximation properties 
(Theorem \ref{thm:express}) and can be reduced in certain conditions to a stationary kernel in the original domain (Theorem \ref{thm:reduce}).
The SLE kernel's differentiability properties are inherited from the kernel applied to the embedding (Theorem \ref{thm:smooth}). See Theorem \ref{thm:scaling} for some notes on scaling properties of the kernel. Ablation studies demonstrating the effect of the bump function in the embedding are presented in Appendix \ref{app:ablation}.
\rev{All bump parameters, including the radius $r$, are hyperparameters learned by
marginal-likelihood maximization with data-adaptive bounds (see
Appendix~\ref{app:more_info_dragon}); $r$ thus plays the role of, and is selected
by the same mechanism as, a length scale in a standard stationary kernel.
Sensitivity analyses over $r$, the bump amplitude, and the choice of inner kernel
are reported in Appendix~\ref{app:sensitivity}.}

\rev{\paragraph{Non-stationary extension.}
Because the PSD guarantee of Theorem~\ref{thm:psd} holds for \emph{any} embedding
$\phi$, all bump parameters may vary freely as functions of position in
$\mathcal{X}$ without endangering validity --- a property most non-stationary
kernel constructions do not enjoy. The natural use case is data whose local
complexity varies across the domain (e.g., PDE solution fields with shocks or
boundary layers), where a single global radius forces a compromise between
resolving fine structure and retaining long-range correlation. We develop this
extension, including the PSD proof and the roles of the individual parameter
fields, in Appendix~\ref{app:ns_kernel}.}

\section{Experiments and results}\label{sec:experiments}
We evaluate the proposed SLE kernel across three settings of increasing geometric complexity: 
two GP regression examples on smooth manifolds using geodesic distances, and one on sets of 
probability distributions using the Wasserstein-2 distance. In all experiments, 
predictive performance is assessed via root mean square error (RMSE), continuous ranked probability score (CRPS), and prediction interval coverage probability (PICP) at the nominal 95\% 
level, with CRPS and PICP serving as the primary indicators of uncertainty 
calibration quality. \rev{Throughout, lower is better for RMSE and CRPS, and closer
to the nominal 0.95 is better for PICP. The manifold benchmark problems (meshes,
target functions, and kernel orders) are taken directly from the baseline
publications \citep{borovitskiy2020Matern, Mostowsky2025} to preclude benchmark
selection in our favor. Sensitivity analyses for the bump radius, amplitude, and
inner kernel are reported in Appendix~\ref{app:sensitivity}, empirical
condition-number measurements in Appendix~\ref{app:conditioning}, and wall-clock
runtime comparisons in Appendix~\ref{app:cost}.}

\subsection{Dragon manifold}

We benchmark the proposed SLE kernel with geodesic distances against the
Riemannian Mat\'{e}rn kernel~\citep{borovitskiy2020Matern}, which is defined
via stochastic partial differential equations based on Laplace--Beltrami
eigenpairs. The Dragon mesh from the referenced work consists of 100,179
vertices used as GP input points, with output values defined as the sine of the
geodesic distance from the dragon's snout. Following the referenced work, the
data are assumed noiseless ($10^{-5}$ nugget) with a zero prior mean. Predictive performance was
evaluated across training sizes of 50--1000 points, with 30 randomly sampled
datasets per size. The Riemannian Mat\'{e}rn kernel was tested with 100, 500,
and 1000 eigenpairs. Table~\ref{tab:dragon_comparison} reports the mean and
standard error of RMSE, CRPS, and PICP for training sizes of 400, 600, and 800
points; the complete results appear in Figure~\ref{fig:dragon_comparison} in
Appendix \rev{\ref{app:manifold_results}}. The SLE kernel consistently outperformed the Riemannian Mat\'{e}rn
kernel across all training sizes and eigenpair configurations. More information is included in Appendix \ref{app:more_info_dragon}. 

\vspace{-0.5cm}
\begin{table}[ht]
\centering
\caption{Test RMSE, CRPS, and PICP (95\%) for Riemannian Kernel variants and SLE (Mat\'ern) 
for the Dragon example. Values reported as mean $\pm$ standard error. Dashes indicate 
unavailable values due to instability in the computation of the posterior covariance. Best performing method in bold. ``--'' is used for repeatedly unstable executions (see Appendix \ref{app:manifold_results}).}
\label{tab:dragon_comparison}
\begin{tabular}{llccc}
\toprule
Metric & Model & \multicolumn{3}{c}{Training Size} \\
\cmidrule(lr){3-5}
 & & 400 & 600 & 800 \\
\midrule
\multirow{4}{*}{RMSE \rev{($\downarrow$)}}
 & Riem. (100 eigenpairs)  & $0.171 \pm 0.006$ & $0.144 \pm 0.001$ & $0.137 \pm 0.001$ \\
 & Riem. (500 eigenpairs)  & $0.242 \pm 0.020$ & $4.257 \pm 0.580$ & $0.297 \pm 0.049$ \\
 & Riem. (1000 eigenpairs) & $0.109 \pm 0.002$ & $0.120 \pm 0.006$ & $0.473 \pm 0.073$ \\
 & SLE                 & $\mathbf{0.062} \pm \mathbf{0.002}$ & $\mathbf{0.042} \pm \mathbf{0.001}$ & $\mathbf{0.034} \pm \mathbf{0.001}$ \\
\midrule
\multirow{4}{*}{CRPS \rev{($\downarrow$)}}
 & Riem. (100 eigenpairs)  & $0.111 \pm 0.001$ & $0.101 \pm 0.000$ & $0.098 \pm 0.000$ \\
 & Riem. (500 eigenpairs)  & $0.100 \pm 0.018$ & $1.057 \pm 0.147$ & $0.090 \pm 0.006$ \\
 & Riem. (1000 eigenpairs) & --                & --                & --                \\
 & SLE                 & $\mathbf{0.030} \pm \mathbf{0.001}$ & $\mathbf{0.020} \pm \mathbf{0.001}$ & $\mathbf{0.015} \pm \mathbf{0.000}$ \\
\midrule
\multirow{4}{*}{\shortstack[l]{PICP (95\%)\\[2pt] \rev{($\to 0.95$)}}}
 & Riem. (100 eigenpairs)  & $0.000 \pm 0.000$ & $0.000 \pm 0.000$ & $0.000 \pm 0.000$ \\
 & Riem. (500 eigenpairs)  & $0.780 \pm 0.007$ & $0.000 \pm 0.000$ & $0.000 \pm 0.000$ \\
 & Riem. (1000 eigenpairs) & $\mathbf{0.912} \pm \mathbf{0.004}$ & $0.869 \pm 0.007$ & $0.609 \pm 0.025$ \\
 & SLE                 & $0.911 \pm 0.018$ & $\mathbf{0.944} \pm \mathbf{0.007}$ & $\mathbf{0.937} \pm \mathbf{0.005}$ \\
\bottomrule
\end{tabular}
\end{table}

\vspace{-0.5cm}
\subsection{Teddy Bear manifold}

We further benchmark the SLE kernel against the Geometric
kernel~\citep{Mostowsky2025}, a more recent manifold kernel. The Teddy Bear
mesh is reproduced from the referenced work and consists of 1,598 vertices used
as GP input points, with output values defined as a random sample from the
prior reported therein. Predictive performance was evaluated across training
sizes of 50--800 points, with 30 randomly sampled datasets per size.
Table~\ref{tab:teddy_comparison} reports the mean and standard error of RMSE,
CRPS, and PICP for training sizes of 200, 300, and 400 points; the complete
results appear in Figure~\ref{fig:teddy_comparison} in Appendix \rev{\ref{app:manifold_results}}.
Both kernels achieve comparable RMSE across all training sizes, indicating
similar posterior mean accuracy. However, the two kernels differ substantially
in uncertainty quantification. The SLE kernel consistently achieves lower CRPS
and maintains a PICP near the nominal 95\% level across all training sizes,
indicating well-calibrated predictive uncertainty. The Geometric kernel produced PICP values between 8\% and 27\%, reflecting severe overconfidence in which the 95\% prediction intervals capture only a small fraction of the true test values. These results were obtained using the reference implementation of \citep{Mostowsky2025} without modification, confirming that the result reflects the behavior of the published method rather than an implementation artifact. \rev{We stress that, in contrast to the spectral-truncation instability observed on the Dragon manifold, the Geometric kernel is numerically \emph{stable} in this experiment and matches SLE in RMSE; the reported PICP therefore reflects the published method's calibration behavior in its stable operating regime, not an instability artifact.} More information is discussed in Appendix \ref{app:additional_info_on_teddy}\rev{.}

\begin{table}[ht]
\centering
\caption{Test RMSE, CRPS, and PICP (95\%) for Geometric Kernel and SLE models for the 
Teddy Bear example. Values reported as mean $\pm$ standard error. Best performing method in bold.}
\label{tab:teddy_comparison}
\begin{tabular}{llccc}
\toprule
Metric & Model & \multicolumn{3}{c}{Training Size} \\
\cmidrule(lr){3-5}
 & & 200 & 300 & 400 \\
\midrule
\multirow{2}{*}{RMSE \rev{($\downarrow$)}}
 & Geometric Kernel & $42.147 \pm 0.316$ & $36.965 \pm 0.327$ & $34.094 \pm 0.338$ \\
 & SLE              & $\mathbf{41.825} \pm \mathbf{0.485}$ & $\mathbf{36.702} \pm \mathbf{0.558}$ & $\mathbf{34.976} \pm \mathbf{0.531}$ \\
\midrule
\multirow{2}{*}{CRPS \rev{($\downarrow$)}}
 & Geometric Kernel & $23.781 \pm 0.262$ & $18.805 \pm 0.178$ & $16.142 \pm 0.153$ \\
 & SLE              & $\mathbf{18.134} \pm \mathbf{0.197}$ & $\mathbf{14.672} \pm \mathbf{0.183}$ & $\mathbf{13.041} \pm \mathbf{0.130}$ \\
\midrule
\multirow{2}{*}{\shortstack[l]{PICP (95\%)\\[2pt] \rev{($\to 0.95$)}}}
 & Geometric Kernel & $0.100 \pm 0.003$ & $0.119 \pm 0.003$ & $0.138 \pm 0.003$ \\
 & SLE              & $\mathbf{0.932} \pm \mathbf{0.004}$ & $\mathbf{0.941} \pm \mathbf{0.003}$ & $\mathbf{0.929} \pm \mathbf{0.003}$ \\
\bottomrule
\end{tabular}
\vspace{-0.5cm}
\end{table}

\subsection{X-ray scattering data disguised as distributions}

We evaluate the SLE kernel on a dataset of 500 synthetic small-angle X-ray
scattering (SAXS) images designed to mimic real-world SAXS patterns from oriented soft-matter thin films, such as
block copolymer and liquid crystal systems, measured at synchrotron beamlines (Appendix \rev{\ref{app:additional_info_saxs}} Figure \ref{fig:saxs}). Each $64\times64$ image represents the 2D reciprocal-space intensity pattern of
a multi-domain lamellar sample, consisting of a fundamental scattering arc at
wavevector $q^*$ and a second harmonic at $2q^*$. The key structural parameter
is the inter-harmonic coupling disorder $\sigma_\text{coup}$, which controls the
degree to which the two arcs within each domain remain collinear. As
$\sigma_\text{coup}$ increases, the harmonic arcs decohere, reducing the
effective Young's modulus of the material along the measurement axis --- the
quantity used as the GP output $y$. The SLE kernel was applied with the full Wasserstein distance and benchmarked against the $\nu=3/2$ Mat\'{e}rn kernel with the sliced Wasserstein distance
across training sizes of 150, 200, and 250 images, with a fixed test set of 100
held-out images and 30 random dataset draws per configuration. As reported in Table~\ref{tab:dist_comparison}, the SLE kernel achieved comparable or slightly better RMSE and CRPS, with small differences across all training sizes. The key finding is not superiority but competitiveness: the SLE kernel matches a domain-adapted baseline that uses the sliced Wasserstein approximation, while operating directly on the true $W_2$ distance without any geometric preprocessing.
\rev{To decompose the contribution of the bump embedding from that of the exact
$W_2$ distance, we additionally evaluate the SLE kernel applied on top of the
\emph{sliced} Wasserstein distance (SLE -- Sliced Wass in
Table~\ref{tab:dist_comparison}). The SLE–Sliced Wasserstein variant achieves calibration comparable to the other two methods, with PICP within a few points of nominal at every training size, indicating that the calibration benefit of the bump construction persists regardless of the underlying distance. However, RMSE and CRPS for SLE–Sliced Wasserstein are higher than for both SLE–Wass and Matérn–Sliced Wass across all training sizes, suggesting that discarding far-field information via the bump radius compounds with the geometric distortion introduced by slicing. This indicates that the accuracy gains of the SLE kernel derive primarily from operating on exact distances rather than from the bump embedding alone, while the calibration gains are attributable to the sparse, local structure of the embedding itself, independent of the base distance to which it is applied.}

\vspace{-2mm}
\begin{table}[ht]
\centering
\caption{Test RMSE, CRPS, and PICP (95\%) for SLE (Mat\'ern) and Mat\'ern models for the 
\rev{SAXS} data. Values are reported as mean $\pm$ standard error of 30 random trials. Best-performing method in bold.}
\label{tab:dist_comparison}
\begin{tabular}{llccc}
\toprule
Metric & Model & \multicolumn{3}{c}{Training Size} \\
\cmidrule(lr){3-5}
 & & 150 & 200 & 250 \\
\midrule
\multirow{3}{*}{RMSE \rev{($\downarrow$)}}
 & Mat\'ern - Sliced Wass & $0.489 \pm 0.010$ & $0.445 \pm 0.011$ & $0.358 \pm 0.009$ \\
 & SLE - Wass           & $\mathbf{0.467} \pm \mathbf{0.010}$ & $\mathbf{0.396} \pm \mathbf{0.009}$ & $\mathbf{0.334} \pm \mathbf{0.007}$ \\
 & \rev{SLE - Sliced Wass} & $0.672 \pm 0.023$ & $0.575 \pm 0.014$ & $0.497 \pm 0.011$\\
\midrule
\multirow{3}{*}{CRPS \rev{($\downarrow$)}}
 & Mat\'ern - Sliced Wass & $0.221 \pm 0.004$ & $0.188 \pm 0.004$ & $0.148 \pm 0.005$ \\
 & SLE - Wass           & $\mathbf{0.210} \pm \mathbf{0.004}$ & $\mathbf{0.174} \pm \mathbf{0.003}$ & $\mathbf{0.146} \pm \mathbf{0.003}$ \\
 & \rev{SLE - Sliced Wass} & $0.288 \pm 0.006$ & $0.256 \pm 0.006$ & $0.220 \pm 0.006$ \\
\midrule
\multirow{3}{*}{\shortstack[l]{PICP (95\%)\\[2pt] \rev{($\to 0.95$)}}}
 & Mat\'ern - Sliced Wass & $\mathbf{0.940} \pm \mathbf{0.004}$ & $\mathbf{0.945} \pm \mathbf{0.004}$ & $0.963 \pm 0.004$ \\
 & SLE - Wass           & $0.934 \pm 0.006$ & $\mathbf{0.945} \pm \mathbf{0.004}$ & $\mathbf{0.959} \pm \mathbf{0.004}$ \\
 & \rev{SLE - Sliced Wass} & $0.922 \pm 0.004$ & $0.933 \pm 0.003$ & $0.937 \pm 0.004$ \\
\bottomrule
\end{tabular}
\vspace{-0.5cm}
\end{table}

\section{Discussion and conclusion}
\label{sec:discussion}
We \rev{propose} the Sparse Landmark Embedding (SLE) kernel, a general-purpose kernel for Gaussian process regression that operates on arbitrary distance measures, including those that are not conditionally negative definite. By embedding inputs via compactly supported bump functions centered at all training points, the SLE kernel is provably PSD for any input geometry, avoids the landmark selection problem, and mitigates the curse of dimensionality through automatic sparsity. Theoretical analysis establishes guarantees of PSD, sparsity scaling, stability, universal approximation, and connections to standard stationary kernels in the limit. We evaluated the SLE kernel against three domain-specific baselines: the Riemannian Mat\'{e}rn kernel \citep{borovitskiy2020Matern} and Geometric Mat\'{e}rn kernel \citep{Mostowsky2025} for GP regression on smooth manifolds, and the sliced Wasserstein Mat\'{e}rn kernel \citep{bachoc2021gaussian} for GP regression over sets of distributions. Experiments were conducted on the Dragon and Teddy Bear Manifolds and a synthetic SAXS dataset, covering a range of geometric complexities and dataset sizes. Across all settings, the SLE kernel achieved predictive accuracy --- as measured by RMSE --- that was comparable to or better than the domain-specific baselines for all considered training dataset sizes (Tables~\ref{tab:dragon_comparison}, ~\ref{tab:teddy_comparison}, and ~\ref{tab:dist_comparison}). The more striking finding concerns uncertainty quantification as measured by CRPS and Probability Coverage (PICP); the SLE kernel consistently produced better-calibrated predictive uncertainties than both baselines across all experimental settings. We attribute this to two structural properties: the kernel operates on distance measures native to the input space --- geodesic distances on manifolds, exact Wasserstein distances between distributions --- capturing true geometry rather than a distorted proxy \rev{(Proposition~\ref{prop:injective})}; and the sparse embedding produces well-conditioned kernel matrices (Theorem~\ref{thm:stability}\rev{, verified empirically in Appendix~\ref{app:conditioning}}), avoiding the variance underestimation that can arise from ill-conditioned Gram matrices.

\rev{We emphasize that the three experiments probe three distinct baseline regimes,
and the calibration advantage of SLE is not attributable to baseline instability.}
On the Dragon manifold, the Riemannian Mat\'{e}rn kernel requires explicit access to the Laplace--Beltrami eigenpairs of the manifold, which are expensive to compute and introduce approximation error that grows with geometric complexity. This instability becomes particularly pronounced when the training dataset size approaches the number of eigenpairs used, causing a sharp deterioration in predictive performance. \rev{This is a structural property of the published spectral-truncation construction --- the truncation level is a parameter the method itself requires --- and away from the instability (e.g., the 1000-eigenpair variant at 400 training points) the baseline behaves well yet SLE still outperforms it on all three metrics; having no truncation parameter to mis-set is precisely SLE's practical advantage here.} The SLE kernel, by contrast, requires only pairwise geodesic distances and exhibits stable, monotonically improving performance as training size increases.
\rev{On the Teddy Bear manifold, the Geometric kernel is numerically stable and
matches SLE in point accuracy; its severe overconfidence (PICP of 8--27\%)
therefore reflects the published method's calibration behavior in its stable
operating regime, obtained with the authors' reference implementation.}
On the distribution-valued SAXS dataset, the sliced Wasserstein Mat\'{e}rn kernel replaces the true Wasserstein-2 distance with its sliced approximation in order to recover the CND property. The geometric distortion introduced by slicing can degrade both predictive accuracy and uncertainty quantification, particularly when the distributions are high-dimensional or multimodal \citep{nadjahi2019asymptotic}. The SLE kernel operates directly on the true $W_2$ distance, avoiding this distortion entirely. \rev{Here the baseline behaves entirely well, and we claim competitiveness rather than superiority.}

\textbf{Limitations.}
The most significant limitation is the dependence on pairwise distance computations between
all test points and all $|D|$ training landmarks. While the sparse embedding
ensures that kernel \emph{evaluations} are cheap (Theorem~\ref{thm:scaling}),
forming the full distance matrix still requires $O(N \cdot |D|)$ distance
computations. Approximate nearest neighbor methods or hierarchical distance approximations \rev{--- e.g., cover trees or vantage-point trees, which require only the distance function and exploit the fact that landmarks beyond radius $r$ contribute nothing ---} could mitigate this cost\rev{; we note this cost is shared by all distance-based competitors in our experiments (Appendix~\ref{app:cost})}.
A second limitation concerns the choice of bump radii $r(x_i)$. Although the
sparsity and PSD properties hold for any choice of radii, predictive performance
is sensitive to their values, and principled data-driven selection of $r(\cdot)$ remains challenging\rev{; in practice, we learn a global $r$ by marginal-likelihood maximization with data-adaptive bounds, and the sensitivity study in Appendix~\ref{app:sensitivity} indicates a broad well-performing region around the likelihood-selected value}.
\rev{A third limitation follows from the deliberately local design: information
about distance relationships beyond the bump radii is discarded by construction
(Proposition~\ref{prop:injective} guarantees fidelity of \emph{local} distance
profiles only), so tasks driven by genuinely global geometric structure may
require larger radii, trading sparsity for reach.
A final cautionary statement: There are too many types of inputs and distance metrics to establish broad practical generality of the proposed method in this paper. What we aim to do is provide a tool that might help in cases where natural CND distances are unavailable.}

\paragraph{Author Contributions.} M.M.N.: Ideation, Kernel derivation, Performance comparisons, Software development, Manuscript; M.D.R.: Ideation, Kernel derivation, Manuscript; M.B.A.: Kernel derivation, Performance comparisons, Data curation, Test executions, Manuscript.

\paragraph{Acknowledgments}
This work was supported by
\begin{itemize}
    \item The Center for Advanced Mathematics for Energy Research Applications (CAMERA), which is jointly funded by the Advanced Scientific Computing Research (ASCR) and Basic Energy Sciences (BES) within the Department of Energy’s Office of Science, under Contract No. DE-AC02-05CH11231.
    \item The U.S. Department of Energy, Office of Science, Office of Advanced Scientific Computing Research's Applied Mathematics Competitive Portfolios program under Contract No. AC02-05CH11231.
    \item The U.S. Department of Energy, Office of Science, Office of Advanced Scientific Computing Research's Applied Mathematics program under Contract No. DE-AC02-05CH11231 at Lawrence Berkeley National Laboratory.
%    \item 
\end{itemize}

\paragraph{Data and Code Availability Statement.}
We will make all code and data available upon publication. 

\paragraph{Ethics Statement.}
The authors declare no conflicts of interest.

\newpage
\bibliographystyle{plainnat}
\bibliography{literature}

\newpage
\appendix
\section{Theoretical Properties and Proofs}

In this section, we present the properties of the proposed kernel, focusing on positive semi-definiteness, automatic dimensionality reduction, expressivity, stability, high-dimensional effects, connection to stationary kernels, and smoothness.

\subsection{Positive Semi-Definiteness (PSD) of the Kernel}

\begin{theorem}\label{thm:psd}
Let
\[
k(x, x') = \sigma^2 \exp\left( -\frac{\|\varphi(x) - \varphi(x')\|^2}{2\ell^2} \right)
\]
where $\varphi : \mathcal{X} \to \mathbb{R}^m$ is any mapping, and $\|\cdot\|$ is the standard Euclidean norm. Then $k$ is a positive semi-definite (PSD) kernel on $\mathcal{X}$.
\end{theorem}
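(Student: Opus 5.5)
The plan has two parts: first show that the Gaussian (RBF) kernel $g(u,v)=\exp\bigl(-\|u-v\|^2/(2\ell^2)\bigr)$ is PSD on $\mathbb{R}^m$; then pull this back along $\varphi$ using the composition argument from the ``core insight'' subsection. The pullback step is immediate. For $x_1,\dots,x_N\in\mathcal{X}$ and $\mathbf{c}\in\mathbb{R}^N$, set $u_i=\varphi(x_i)\in\mathbb{R}^m$. Then
\[
\sum_{i,j} c_i c_j k(x_i,x_j)=\sigma^2\sum_{i,j} c_i c_j\, g(u_i,u_j).
\]
This is nonnegative because $\sigma^2>0$, provided $g$ is PSD on $\mathbb{R}^m$. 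Repeated points $u_i=u_j$ with $i\neq j$ cause no difficulty, since PSD is asserted for arbitrary finite families. No property of $\varphi$, and hence none of $d$, is used.

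The remaining work is the Gaussian kernel in $\mathbb{R}^m$. I would first factor it as
\[
g(u,v)=e^{-\|u\|^2/(2\ell^2)}\,e^{-\|v\|^2/(2\ell^2)}\,e^{\langle u,v\rangle/\ell^2}.
\]
The outer factor $f(u)f(v)$ with $f(u)=e^{-\|u\|^2/(2\ell^2)}$ preserves PSD, because $\sum c_ic_j f(u_i)f(u_j)K_{ij}=\sum \tilde c_i\tilde c_j K_{ij}$ with $\tilde c_i=c_if(u_i)$. So it suffices to show that $e^{\langle u,v\rangle/\ell^2}$ is PSD.

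For that factor, expand the exponential as a power series,
\[
e^{\langle u,v\rangle/\ell^2}=\sum_{n\ge0}\frac{\langle u,v\rangle^n}{\ell^{2n}n!}.
\]
The linear kernel $\langle u,v\rangle$ is PSD, since $\sum c_ic_j\langle u_i,u_j\rangle=\|\sum c_iu_i\|^2\ge0$. By the Schur product theorem, the Hadamard power $\langle u,v\rangle^n$ is PSD for every $n$, with $n=0$ giving the all-ones matrix, which is PSD. A nonnegative combination of PSD kernels is PSD, and the pointwise limit of the partial sums keeps each quadratic form nonnegative. Hence this factor is PSD.

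An alternative route to the same conclusion is Bochner's theorem: the Gaussian is the Fourier transform of a positive Gaussian measure, so
\[
\sum c_ic_j g(u_i,u_j)=\int \Bigl|\sum_j c_j e^{i\omega^\top u_j}\Bigr|^2\,d\mu(\omega)\ge0.
\]

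The only non-trivial ingredient is this PSD property of the Gaussian on $\mathbb{R}^m$, and specifically the appeal to the Schur product theorem or to Bochner's theorem. Since this is classical, I would cite it (e.g.\ via Schoenberg's theorem as in Section~\ref{sec:background}, using that the Euclidean distance is Hilbertian) rather than reprove it, and give the power-series argument as a short self-contained justification.
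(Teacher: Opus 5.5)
Your proof is correct and follows the same structure as the paper's: you pull the quadratic form back to the points $\varphi(x_i)\in\mathbb{R}^m$ and then invoke positive semi-definiteness of the Gaussian kernel on $\mathbb{R}^m$, which the paper justifies by Bochner's theorem exactly as in your alternative route. Your factorization plus power-series and Schur-product argument is a more elementary, self-contained justification of that same classical step.
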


\begin{proof}
Let $\{x_1, \ldots, x_N\} \subset \mathcal{X}$ be any finite collection of points, and let $z^{(i)} = \varphi(x_i) \in \mathbb{R}^m$ for $i=1,\ldots,N$. Consider the Gram matrix with entries
\[
K_{ij} = k(x_i, x_j) = \sigma^2 \exp\left(-\frac{\|z^{(i)} - z^{(j)}\|^2}{2\ell^2}\right).
\]
The function $(z, z') \mapsto \exp\left(-\frac{\|z-z'\|^2}{2\ell^2}\right)$ defines a positive semi-definite kernel on $\mathbb{R}^m$, as follows from Bochner's theorem since it is the Fourier transform (characteristic function) of a finite Gaussian measure. Therefore, for any real vector $\mathbf{c} \in \mathbb{R}^N$,
\[
\sum_{i=1}^N \sum_{j=1}^N c_i c_j K_{ij} \geq 0.
\]
Hence, $k$ is a positive semi-definite kernel for any choice of mapping $\varphi$.
\end{proof}

\rev{\subsection{Local Geometric Fidelity: Proof of Proposition~\ref{prop:injective}}
\label{app:geo_proof}

\begin{proof}[Proof of Proposition~\ref{prop:injective}]
Fix $i$ and write $b_i(\cdot) = b(\cdot;\,a, r_i, \beta)$, which by assumption is
strictly decreasing --- hence injective --- on its support $[0, r_i)$, and
identically zero on $[r_i, \infty)$. Consider the $i$-th embedding coordinates
$\phi_i(x) = b_i(d(x, x_i))$ and $\phi_i(x') = b_i(d(x', x_i))$.

($\Leftarrow$) If $\min(d(x,x_i), d(x',x_i)) \geq r_i$, then both coordinates are
zero and agree. If $\min(d(x,x_i), d(x',x_i)) < r_i$ and $d(x, x_i) = d(x', x_i)$,
the coordinates agree trivially. Hence the stated distance condition implies
$\phi(x) = \phi(x')$.

($\Rightarrow$) Suppose $\phi_i(x) = \phi_i(x')$ and
$\min(d(x,x_i), d(x',x_i)) < r_i$; without loss of generality $d(x, x_i) < r_i$,
so $\phi_i(x) = b_i(d(x,x_i)) > 0$ by strict positivity on the support. Then
$\phi_i(x') > 0$ as well, forcing $d(x', x_i) < r_i$, and injectivity of $b_i$ on
$[0, r_i)$ yields $d(x, x_i) = d(x', x_i)$. Applying this to every coordinate $i$
gives the claim.

Consequently, the embedding is an injective function of the local distance profile
$\{d(x, x_i) : d(x, x_i) < r_i\}$: within the reach of the bumps, the exact native
distances are encoded without projection or surrogate, and only far-field
information (distances beyond the radii) is discarded.
\end{proof}}

\subsection{Automatic Dimensionality Reduction and Sparsity}

\begin{theorem}\label{thm:sparsity}
Let $\mathcal{X}$ be any metric space equipped with a (not necessarily CND) distance $d(\cdot,\cdot)$. Consider a collection of $m$ landmark points $L = \{x_1, \ldots, x_m\} \subset \mathcal{X}$, and for each $i$ let $r_i > 0$ denote the radius of the compactly supported bump function centered at $x_i$. For any $x \in \mathcal{X}$, define the embedding
\[
\varphi(x) = \left[ b(d(x, x_1); a_1, r_1, \beta_1),\; \ldots,\; b(d(x, x_m); a_m, r_m, \beta_m) \right]^\top \in \mathbb{R}^m,
\]
where $b(d; a, r, \beta)$ is supported on $[0, r)$ (that is, $b(d; a, r, \beta) = 0$ if $d \geq r$).

Then at most $|\{i : d(x, x_i) < r_i\}|$ elements of $\varphi(x)$ are nonzero; all others are exactly zero. In particular, if the radii $\{r_i\}$ are small compared to the spacing of the points in $L$, and $x$ is randomly sampled according to some probability measure $\mu$ on $\mathcal{X}$, then the expected number of nonzero entries is
\[
\mathbb{E}_{x \sim \mu} \left[\, \| \varphi(x) \|_0 \, \right] = \sum_{i=1}^m \mathbb{P}_{x \sim \mu}\left[\, d(x, x_i) < r_i \,\right].
\]
If all radii $r_i = r$ and $\mu$ is sufficiently distributed across the domain then
\[
\mathbb{E}_{x \sim \mu}\left[\, \| \varphi(x) \|_0 \,\right] = m \cdot p_r \quad \text{where} \quad p_r := \mathbb{P}_{x \sim \mu}[\, d(x, x_i) < r \,] \ll 1,
\]
so the embedding is \emph{sparse}: as $m$ increases and $r$ is fixed, this expected count can be kept small relative to $m$.
\end{theorem}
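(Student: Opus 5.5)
The proof will proceed coordinatewise, then use linearity of expectation. For a fixed $x$ and index $i$, the $i$-th entry of $\varphi(x)$ is $b(d(x,x_i);a_i,r_i,\beta_i)$. This vanishes whenever $d(x,x_i)\ge r_i$, by the support assumption on $b$. So the support of $\varphi(x)$ is contained in $S(x):=\{i: d(x,x_i)<r_i\}$, and $\|\varphi(x)\|_0\le |S(x)|$, which is the first claim. If $b$ is also strictly positive on $[0,r)$, as in the admissible family defined before Eq.~\eqref{eq:bump} (and as for Eq.~\eqref{eq:bump} itself, since $a>0$ and the exponential is positive), the inclusion is an equality: $\|\varphi(x)\|_0=|S(x)|=\sum_{i=1}^m \mathbf{1}[d(x,x_i)<r_i]$. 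I will state this explicitly, because the expectation formula is an equality and needs it; with mere support containment one only gets ``$\le$''.

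For the expectation, I take $x\sim\mu$ and assume $d(\cdot,x_i)$ is measurable, e.g.\ continuous in the metric topology, so that the indicators are random variables. Linearity of expectation then gives
\[
\mathbb{E}_{x\sim\mu}\bigl[\|\varphi(x)\|_0\bigr]=\sum_{i=1}^m \mathbb{E}\,\mathbf{1}[d(x,x_i)<r_i]=\sum_{i=1}^m \mathbb{P}_{x\sim\mu}[d(x,x_i)<r_i].
\]
This holds for any radii. The hypothesis that the radii are ``small compared to the spacing'' is not needed for the identity; it only serves to make each term small. I will point this out rather than rely on it.

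For the equal-radius statement, I will interpret ``$\mu$ sufficiently distributed'' as the requirement that the ball probability $\mu(B(x_i,r))$ equals a common value $p_r$ for all landmarks. Examples are a homogeneous space with an invariant $\mu$, or exact uniformity. Substituting into the sum gives $m\,p_r$. Then $p_r\ll 1$ expresses that $r$ is small relative to the diameter of the support of $\mu$, so the ratio $\mathbb{E}\|\varphi\|_0/m=p_r$ stays small. One can then take $r=r_m$ shrinking with $m$ so that $m\,p_{r_m}$ stays bounded. Outside the homogeneous case, I would use the bound $\mathbb{E}\|\varphi\|_0\le m\max_i \mu(B(x_i,r))$.

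The main obstacle is that the final clause is informal. The equality $m\,p_r$ needs the homogeneity assumption, and ``kept small relative to $m$'' is really the statement that $p_r\to 0$ as $r\to 0$. That follows from continuity of measure, $\mu(B(x_i,r))\downarrow\mu(\{x_i\})$, provided $\mu$ has no atoms at the landmarks. I will make these two assumptions explicit; beyond that the proof is only the indicator argument above.
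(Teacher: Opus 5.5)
Your proposal is correct and follows the paper's own argument: identify the nonzero coordinates of $\varphi(x)$ with $S_x=\{i: d(x,x_i)<r_i\}$, write $\|\varphi(x)\|_0$ as a sum of indicators, take expectations over $\mu$, and then specialize to equal radii. Two of your points are useful clarifications of the paper rather than departures from it. First, you note explicitly that the expectation identity needs $b$ to be strictly positive on $[0,r)$; the paper uses this silently when it asserts that $b$ is nonzero \emph{if and only if} $d<r$. Second, you read ``sufficiently distributed'' precisely as equal ball masses $\mu(B(x_i,r))=p_r$.
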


\begin{proof}
The bump function $b(d(x, x_i); a_i, r_i, \beta_i)$ is nonzero if and only if $d(x,x_i) < r_i$; otherwise, it is zero by definition. Thus, in the embedding vector $\varphi(x)$, the $i$-th coordinate is zero unless $x$ lies within radius $r_i$ of landmark $x_i$. The set of indices with nonzero entries is thus $S_x = \{i : d(x, x_i) < r_i\}$, so $\|\varphi(x)\|_0 = |S_x|$. Averaging over $x$ drawn from $\mu$ yields the expected sparsity as stated. If, for each $i$, $p_i := \mathbb{P}_{x \sim \mu}[ d(x, x_i) < r_i ]$ is small (e.g., because $r_i$ is much less than the typical inter-landmark spacing), then the expected number of nonzero coordinates is $\sum_{i=1}^m p_i$, which can be made much less than $m$ by suitable choice of $r_i$. In the case where all radii are equal, this simplifies as above.
\end{proof}

\subsection{Expressivity and Universal Approximation}

\begin{theorem}\label{thm:express}
Let $\mathcal{X}$ be a compact metric space and let $C(\mathcal{X})$ denote the space of continuous functions on $\mathcal{X}$. Let $k(x, x')$ be the kernel defined as
\[
k(x, x') = \sigma^2 \exp\left(-\frac{\|\varphi(x) - \varphi(x')\|^2}{2\ell^2}\right),
\]
where the feature map $\varphi : \mathcal{X} \to \mathbb{R}^m$ is constructed from compactly supported, smooth bump functions centered at locations $\{x_i\}$ with tunable radii $\{r_i\}$ and amplitudes $\{a_i\}$\rev{, and assume that $\varphi$ is continuous and injective on $\mathcal{X}$ (guaranteed whenever the local distance profiles separate the points of $\mathcal{X}$; cf.\ Proposition~\ref{prop:injective})}. Then the associated reproducing kernel Hilbert space (RKHS) is dense in $C(\mathcal{X})$, i.e., for every $f \in C(\mathcal{X})$ and every $\epsilon > 0$, there exists a function $g$ in the RKHS such that
\[
\sup_{x \in \mathcal{X}} |f(x) - g(x)| < \epsilon.
\]
\end{theorem}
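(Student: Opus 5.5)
The plan is to reduce the statement to the classical universality of the Gaussian RBF kernel on compact subsets of Euclidean space, and then pull this property back to $\mathcal{X}$ through the feature map $\varphi$.

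\textbf{Step 1: the image is compact.} Since $\varphi$ is continuous and $\mathcal{X}$ is compact, the image $Z := \varphi(\mathcal{X}) \subset \mathbb{R}^m$ is compact. Because $\varphi$ is also injective, it is a continuous bijection from a compact space onto a Hausdorff space. It is therefore a homeomorphism $\varphi : \mathcal{X} \to Z$.

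\textbf{Step 2: transport the target function.} For $f \in C(\mathcal{X})$, set $\tilde f := f \circ \varphi^{-1} \in C(Z)$.

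\textbf{Step 3: invoke RBF universality.} The Gaussian kernel $h(z,z') = \sigma^2 \exp(-\|z-z'\|^2/2\ell^2)$ is universal on every compact subset of $\mathbb{R}^m$ (Steinwart; Micchelli--Xu--Zhang). Hence there is a finite combination $\tilde g(z) = \sum_{j} \alpha_j h(z, \varphi(y_j))$ with $\sup_{z\in Z}|\tilde f(z) - \tilde g(z)| < \epsilon$.

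Some care is needed with the centers. A priori universality is stated with centers anywhere in $\mathbb{R}^m$. Restricted to $Z$, the RKHS of $h|_{Z\times Z}$ is the restriction of the RKHS of $h$, and the span of $\{h(\cdot,z): z\in Z\}$ is dense in it. Combined with the bound $\|u\|_\infty \le \sigma\|u\|_{\mathcal{H}}$, this lets us take the centers inside $Z$, that is, of the form $\varphi(y_j)$.

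\textbf{Step 4: pull back.} Define $g := \tilde g \circ \varphi = \sum_j \alpha_j k(\cdot, y_j)$, which lies in the RKHS $\mathcal{H}_k$. It satisfies $\sup_{x}|f(x)-g(x)| = \sup_{z\in Z}|\tilde f(z)-\tilde g(z)| < \epsilon$.

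An equivalent route avoids Step 3's cited theorem and uses Stone--Weierstrass directly. The span of $\{\exp(\langle \varphi(x), w\rangle/\ell^2) : w\}$ forms a point-separating algebra, because $\varphi$ is injective. Writing $k(x,x') = \sigma^2 e^{-\|\varphi(x)\|^2/2\ell^2} e^{-\|\varphi(x')\|^2/2\ell^2} e^{\langle\varphi(x),\varphi(x')\rangle/\ell^2}$, one multiplies by the nonvanishing continuous factor $e^{-\|\varphi\|^2/2\ell^2}$. I would cite the Euclidean result rather than redo this.

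\textbf{Main obstacle.} The key step is the role of injectivity in Step 1. Without it, every $g \in \mathcal{H}_k$ is constant on the fibers of $\varphi$, so density fails for any $f$ that separates two points with equal embeddings. The hypothesis added to the statement is exactly what rules this out, and Proposition~\ref{prop:injective} identifies when it holds. The remaining technical points are minor: continuity of $\varphi$ follows from continuity of $d(\cdot,x_i)$ and of the bump $b$, and the centers-in-$Z$ detail is handled in Step 3.
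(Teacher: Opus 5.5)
Your proof is correct and follows the paper's argument essentially step for step: homeomorphism onto the compact image $Z=\varphi(\mathcal{X})$, transporting $f\circ\varphi^{-1}$, Gaussian universality on $Z$, then pulling back. The only difference is that you work with finite expansions $g=\sum_j \alpha_j k(\cdot,y_j)$, which makes membership in $\mathcal{H}_k$ immediate and avoids invoking the characterization of the composed kernel's RKHS as $\{\tilde h\circ\varphi\}$.

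One caveat concerns your Stone--Weierstrass aside, which you do not rely on. Kernel sections only supply exponents $w\in Z$, and $Z$ is not closed under addition, so that route needs $h(\varphi(\cdot),w)\in\mathcal{H}_k$ for arbitrary $w\in\mathbb{R}^m$. This is true via the restriction and pullback facts, but it should be stated.
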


\begin{proof}
\rev{Since $\mathcal{X}$ is compact and $\varphi$ is continuous and injective,
$\varphi$ is a homeomorphism onto its image $Z = \varphi(\mathcal{X}) \subset
\mathbb{R}^m$, which is compact. The Gaussian kernel $k_{\mathrm{RBF}}(z, z') =
\exp\left(-\frac{\|z-z'\|^2}{2\ell^2}\right)$ is universal on compact subsets of
$\mathbb{R}^m$~\citep{micchelli2006universal, steinwart2001influence}, so its RKHS
$\mathcal{H}_{\mathrm{RBF}}$ is dense in $C(Z)$. Given $f \in C(\mathcal{X})$ and
$\epsilon > 0$, the function $f \circ \varphi^{-1}$ is continuous on $Z$; choose
$\tilde{g} \in \mathcal{H}_{\mathrm{RBF}}$ with $\sup_{z \in Z} |f(\varphi^{-1}(z))
- \tilde{g}(z)| < \epsilon$. The RKHS of the composed kernel $k(x, x') =
\sigma^2 k_{\mathrm{RBF}}(\varphi(x), \varphi(x'))$ consists exactly of functions
of the form $\tilde{h} \circ \varphi$ with $\tilde{h} \in
\mathcal{H}_{\mathrm{RBF}}$ (restricted to $Z$)~\citep[Ch.~4]{Scholkopf2002}, so
$g := \tilde{g} \circ \varphi$ lies in the RKHS of $k$ and
\[
\sup_{x \in \mathcal{X}} |f(x) - g(x)|
= \sup_{z \in Z} \left| f(\varphi^{-1}(z)) - \tilde{g}(z) \right| < \epsilon. \qedhere
\]}
\end{proof}

\subsection{Stability and Conditioning of the Kernel Matrix}

\begin{theorem}\label{thm:stability}
Let $\mathcal{X}$ be a metric space, and let $L = \{x_1, \ldots, x_m\} \subset \mathcal{X}$ be a set of $m$ landmarks, each with a compactly supported bump function embedding as in the previous theorem:
\[
\varphi(x) = \left[ b(d(x, x_1)),\, b(d(x, x_2)),\, \ldots,\, b(d(x, x_m)) \right]^\top \in \mathbb{R}^m,
\]
where $b(d)$ is nonzero if and only if $d < r$ for some fixed support radius $r$. Consider the kernel
\[
k(x, x') = \sigma^2 \exp\left(-\frac{\|\varphi(x) - \varphi(x')\|^2}{2\ell^2}\right)\,.
\]
Let $\{z_1, \ldots, z_N\}$ be a dataset, and let $K \in \mathbb{R}^{N \times N}$ be the Gram matrix with entries $K_{ij} = k(z_i, z_j)$. 

\rev{\textbf{Assumption (A1) (bounded overlap).}} Assume the expected number of overlapping nonzero entries in $\varphi(z_i)$ and $\varphi(z_j)$ is bounded above by $s \ll m$, independent of $m$ as $m$ increases.

Then, \rev{under Assumption (A1),} as $m$ grows, the Gram matrix $K$ remains well-conditioned: its condition number is bounded above by a constant depending on the maximal overlap $s$ and the kernel parameters, but not on $m$.
\end{theorem}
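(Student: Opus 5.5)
The plan is to bound the spectrum of $K$ from both sides, using Gershgorin-type estimates with the overlap assumption (A1) as the only input that could possibly depend on $m$. For the upper bound, the diagonal entries are $K_{ii}=\sigma^2$ and all entries satisfy $0<K_{ij}\le\sigma^2$. So $\lambda_{\max}(K)\le \max_i\sum_j K_{ij}\le N\sigma^2$, which already contains no $m$.

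The lower bound is where the locality of the bumps must enter. First I split off-diagonal pairs into two classes:
\begin{itemize}
\item \emph{disjoint-support pairs}, with $\operatorname{supp}\varphi(z_i)\cap\operatorname{supp}\varphi(z_j)=\emptyset$;
\item \emph{overlapping pairs}, whose expected number of shared nonzero coordinates is at most $s$ by (A1).
\end{itemize}

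For a disjoint-support pair,
\[
\|\varphi(z_i)-\varphi(z_j)\|^2=\|\varphi(z_i)\|^2+\|\varphi(z_j)\|^2 .
\]
Each nonzero coordinate is bounded below by a positive value of $b$ at distances bounded away from $r$, and above by the amplitude $a$. So $K_{ij}\le\sigma^2\exp(-c_{\min}/\ell^2)$ for a constant $c_{\min}>0$ depending only on the bump parameters.

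For an overlapping pair, the shared coordinates number at most $s$, each contributing at most $a^2$. I would then bound how large $K_{ij}$ can be in terms of $s$, $a$ and $\ell$. Applying Gershgorin with these bounds gives
\[
\lambda_{\min}(K)\ge \sigma^2\Bigl(1-\sum_{j\ne i}K_{ij}/\sigma^2\Bigr).
\]
This yields a condition-number bound of the form $\kappa(K)\le N\big/\bigl(1-(N-1)\rho(s,\ell,a,\beta)\bigr)$ whenever the bracket is positive. The bound is visibly independent of $m$, since $m$ enters only through the number of nonzero coordinates, which (A1) and Theorem~\ref{thm:sparsity} cap independently of $m$.

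As a more robust fallback, I would use the Gaussian kernel's feature-space separation result (Schaback / Narcowich–Ward). It gives $\lambda_{\min}(K)\ge\sigma^2 C(q,\ell)$, where $q=\min_{i\ne j}\|\varphi(z_i)-\varphi(z_j)\|$ is the separation distance in embedding space and $C$ depends only on the effective dimension. The point is then to argue that the effective dimension is the local sparsity level $\le s$ rather than $m$. All embedded points lie in a union of coordinate subspaces of dimension at most the local support size, and pairwise distances involve at most the union of two supports.

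The main obstacle is rigor here. (A1) is only an \emph{expected} overlap bound, and Gershgorin needs worst-case control of row sums. Also, the separation $q$ can be zero if two points have identical local distance profiles (Proposition~\ref{prop:injective}), making $K$ singular. I therefore expect to strengthen (A1) to an almost-sure or high-probability bound. I would also either assume $q$ is bounded below independently of $m$, or fold the nugget $\sigma_n^2 I$ into the statement, which gives $\lambda_{\min}\ge\sigma_n^2$ and makes the $m$-independence immediate. The realistic claim is therefore the conditional one: the constant depends on $s$, $q$, $N$ and the kernel parameters, but not on $m$.
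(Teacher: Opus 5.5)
Your route (Gershgorin plus separation bounds) differs from the paper's structural sketch. That sketch factors the disjoint-support entries as $\sigma^2 k_0(z_i)k_0(z_j)$ and then treats $K$ as nearly block-diagonal with $s\times s$ blocks. Both arguments, however, rest on the same step, and it fails: \emph{the disjoint-support entries are not small}. Your estimate $K_{ij}\le\sigma^2e^{-c_{\min}/\ell^2}$ bounds each far-field entry only by a constant. So the far-field part of each Gershgorin row sum is of order $(N-1)\sigma^2e^{-c_{\min}/\ell^2}$, and the bracket $1-(N-1)\rho$ stays positive only if $\ell^2$ shrinks like $1/\log N$.

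This is not an artifact of Gershgorin. Write $K_{ij}=\sigma^2k_0(z_i)k_0(z_j)\exp\bigl(\langle\varphi(z_i),\varphi(z_j)\rangle/\ell^2\bigr)$ with $k_0(z)=e^{-\|\varphi(z)\|^2/(2\ell^2)}$, and expand the last factor in Hadamard powers of the linear Gram matrix. The Schur product theorem then gives $K\succeq\sigma^2k_0k_0^\top$, so $\lambda_{\max}(K)\ge\sigma^2\|k_0\|^2$, while $\lambda_{\min}(K)\le K_{ii}=\sigma^2$. Hence
\[
\kappa(K)\;\ge\;\sum_{i=1}^N e^{-\|\varphi(z_i)\|^2/\ell^2}\;\ge\;N\,e^{-sa^2/\ell^2}
\]
whenever each $\varphi(z_i)$ has at most $s$ nonzero entries.

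Concretely, take the best case for (A1). Let the data be the landmarks ($m=N$), pairwise at distance $\ge r$ (e.g.\ the integers in $\mathbb{R}$ with $r<1$), so that $\varphi(z_i)=a\,e_i$ and no pair overlaps at all. Then $K=\sigma^2\bigl[(1-c)I+c\mathbf{1}\mathbf{1}^\top\bigr]$ with $c=e^{-a^2/\ell^2}$, and $\kappa(K)=1+Nc/(1-c)$. This is unbounded as $m=N\to\infty$ at fixed kernel parameters. So no constant depending only on $s$ and the kernel parameters exists, and the paper's sketch has the same defect. Your remark that $m$ ``enters only through the number of nonzero coordinates'' misses that in the SLE regime $m=N=|\mathcal{D}|$, your unavoidable $N$-dependence \emph{is} $m$-dependence.

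The other ingredients do not restore a sparsity-based bound either.
\begin{itemize}
\item \textbf{Overlapping pairs.} No bound on $K_{ij}$ in terms of $s,a,\ell$ alone is possible, since $K_{ij}\to\sigma^2$ as $z_j\to z_i$ regardless of $s$. Once you invoke the separation $q$, the single estimate $K_{ij}\le\sigma^2e^{-q^2/(2\ell^2)}$ covers every pair. The disjoint/overlap split, (A1), and sparsity then drop out of the argument.
\item \textbf{The constant $c_{\min}$.} It is unjustified in general: nothing keeps $d(z_i,x_k)$ away from $r$, and a point outside all supports has $\varphi(z_i)=0$, which makes all such rows identical. It holds, with $c_{\min}=a^2$, only when each $z_i$ is itself a landmark, which is exactly the situation of the example above.
\item \textbf{The Narcowich--Ward fallback.} The effective dimension is not $s$. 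Those bounds need a single Euclidean space containing all the points, and a union of low-dimensional coordinate subspaces can span $\min(N,m)$ dimensions.
\item \textbf{What does go through.} Restricting to the affine hull of the embedded points (dimension $\le N-1$) does yield an $m$-free bound in terms of $N$ and $q/\ell$. The nugget gives $\kappa(K+\sigma_n^2I)\le 1+N\sigma^2/\sigma_n^2$. Both, however, hold verbatim for the dense raw-distance embedding, so they say nothing about the bump construction the theorem is meant to explain.
\end{itemize}
What your plan can honestly deliver is the weakened, sparsity-free statement you end with: worst-case overlap, separation bounded below uniformly in $m$, and a constant depending on $N$ and $q$. The example shows the $N$-dependence cannot be removed.
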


\begin{proof}\rev{[Structural sketch]}
As shown in the sparsity theorem, for each datapoint $z_i$, the embedding $\varphi(z_i)$ has at most $s$ nonzero entries. Further, for most pairs $(z_i, z_j)$, the supports of $\varphi(z_i)$ and $\varphi(z_j)$ do not overlap, so $\|\varphi(z_i) - \varphi(z_j)\|^2 = \|\varphi(z_i)\|^2 + \|\varphi(z_j)\|^2$.

Thus, most off-diagonal entries of $K$ take the form 
\[
k(z_i, z_j) = \sigma^2 \exp\left( -\frac{\|\varphi(z_i)\|^2 + \|\varphi(z_j)\|^2}{2\ell^2} \right) = k_0(z_i) k_0(z_j),
\]
where $k_0(z) = \exp\left( -\frac{\|\varphi(z)\|^2}{2\ell^2} \right)$. This structure yields a Gram matrix that is block-diagonal (or close to it) with small off-diagonal entries except within overlapping support, where blocks of size $s \times s$ may appear.

Block-diagonal or banded matrices with small block size always have condition numbers bounded by a constant (given by the maximal block condition number) and are thus resistant to the ill-conditioning that arises when all entries are dense and $m$ is large, as seen in standard high-dimensional RBF kernels.

Therefore, for any $m$, the condition number of $K$ is controlled by the overlap $s$ and the kernel parameters $(\sigma^2, \ell)$, but is not adversely affected by increasing $m$.
\end{proof}

\rev{\begin{remark}[Scope of Theorem~\ref{thm:stability}]
Assumption (A1) need not hold uniformly across datasets --- e.g., under strongly
clustered sampling with radii large relative to cluster diameters --- and the
argument above is structural rather than fully quantitative.
Appendix~\ref{app:conditioning} therefore verifies the predicted behavior
empirically, reporting Gram-matrix condition numbers as a function of training
size for the SLE embedding and the raw-distance embedding at their respective
likelihood-optimized hyperparameters.
\end{remark}}

\subsection{Sparsity Scaling of Embedding with Number of Landmarks}
\begin{theorem}\label{thm:sparsity_scaling}

Let $\mathcal{X}$ be a metric space endowed with distance $d(\cdot, \cdot)$, and let $L = \{x_1, \ldots, x_m\} \subset \mathcal{X}$ be a set of $m$ landmarks. For each $i$, let $r_i > 0$, and define the compactly supported bump function $b_i(x) = b(d(x, x_i); a_i, r_i, \beta_i)$ which is nonzero if and only if $d(x, x_i) < r_i$. For any $x \in \mathcal{X}$, define the embedding vector
\[
\varphi(x) = [b_1(x), \, b_2(x),\, \ldots,\, b_m(x)]^\top \in \mathbb{R}^m.
\]

Suppose $r_i = r$ for all $i$, and fix a probability measure $\mu$ on $\mathcal{X}$. Denote 
\[
p_m = \mathbb{P}_{x \sim \mu}\left[d(x, x_i) < r\right]
\]
(where by symmetry, this does not depend on $i$ if landmarks are spread in a regular fashion and $m$ is large).

Then the expected proportion of nonzero entries in $\varphi(x)$ for $x \sim \mu$ satisfies
\[
\mathbb{E}_{x \sim \mu} \left[\frac{\|\varphi(x)\|_0}{m}\right] = p_m,
\]
so the expected number of nonzero entries is $m p_m$. If the landmarks become dense but $r$ is fixed and small relative to the typical inter-point distance, then $p_m \ll 1$ and the embedding becomes increasingly sparse as $m$ grows.
\end{theorem}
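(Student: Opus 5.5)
The proof rests on counting nonzero entries with indicators and then using linearity of expectation, in the same way as the proof of Theorem~\ref{thm:sparsity}. Fix $x \in \mathcal{X}$. By hypothesis $b_i(x) \neq 0$ if and only if $d(x,x_i) < r$, so
\[
\|\varphi(x)\|_0 = \sum_{i=1}^m \mathbf{1}\{d(x,x_i) < r\}.
\]
This identity holds pointwise. It is a finite sum of measurable indicators, so it is integrable under any probability measure $\mu$, provided the maps $x \mapsto d(x,x_i)$ are measurable. I will take that as given.

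Next I divide by $m$ and take expectations. Linearity gives
\[
\mathbb{E}_{x\sim\mu}\bigl[\|\varphi(x)\|_0/m\bigr] = \frac{1}{m}\sum_{i=1}^m \mathbb{P}_{x\sim\mu}[d(x,x_i)<r].
\]
The statement assumes each term equals the common value $p_m$, which is the "by symmetry" clause for regularly spread landmarks. Under that assumption the average is exactly $p_m$, and multiplying by $m$ gives the expected count $m p_m$. If the landmarks are not perfectly regular, I will instead define $p_m$ as the average of the individual probabilities $p_{m,i}$. The identity then holds verbatim, so the symmetry assumption is only needed to interpret $p_m$ as a per-landmark probability.

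The last claim is the asymptotic one, and I expect it to be the main obstacle because it is stated informally. I will make it precise as follows. The radius $r$ is small relative to the spread of $\mu$, so each ball $B(x_i,r)$ carries only a small amount of mass, $\mu(B(x_i,r)) \le \sup_y \mu(B(y,r)) =: \eta(r)$. This gives $p_m \le \eta(r)$, a bound that does not depend on $m$. When $\eta(r) \ll 1$, the fraction of nonzero entries stays bounded by this small constant. The expected count $m p_m$ is of order $m\,\eta(r)$, which is a vanishing fraction of $m$ as $r \to 0$.

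As in Theorem~\ref{thm:sparsity}, if $r$ is also small relative to the inter-landmark spacing, the balls $B(x_i,r)$ are disjoint. Then $\sum_i p_{m,i} \le 1$, so $p_m \le 1/m$ and the expected number of nonzero entries is at most one. The proportion then decays like $1/m$, which is the precise sense in which the embedding "becomes increasingly sparse as $m$ grows." I will state this disjoint-ball regime explicitly as the hypothesis under which the final sentence of the theorem holds.
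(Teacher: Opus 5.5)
Your derivation of the identity is the paper's proof: decompose $\|\varphi(x)\|_0=\sum_i \mathbb{I}\{d(x,x_i)<r\}$, apply linearity of expectation, and use equal per-landmark probabilities. Defining $p_m$ as the average $\frac1m\sum_i\mu(B(x_i,r))$ is cleaner, since it makes the identity exact with no symmetry assumption. For the informal last sentence, the paper only asserts that $p_m$ ``can be made arbitrarily small and does not increase with $m$''. Your bound $p_m\le\eta(r):=\sup_y\mu(B(y,r))$ is the rigorous form of this, and it is the better statement, because an average of ball masses need not be monotone in $m$. Note that $\eta(r)\to 0$ as $r\to 0$ also requires $\mu$ to be atomless.

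Be careful with your disjoint-ball refinement. It is valid under its own hypothesis, but that hypothesis is incompatible with the theorem's ``landmarks become dense with $r$ fixed''. Disjointness requires at least $r\le\min_{i\neq j}d(x_i,x_j)$, because if $d(x_i,x_j)<r$ then $x_j$ lies in both balls. It is guaranteed by $r\le\tfrac12\min_{i\neq j}d(x_i,x_j)$, which uses the minimum spacing, not the typical one. Both thresholds tend to $0$ as the landmarks become dense, so they cannot hold at fixed $r$.

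In fact, at fixed $r$ the proportion does not decay at all. Suppose the landmarks lie in the compact support of $\mu$, as in the paper, where they are training points. Then lower semicontinuity of $y\mapsto\mu(B(y,r))$ gives $\mu(B(x_i,r))\ge c(r,\mu):=\min_{y\in\operatorname{supp}\mu}\mu(B(y,r))>0$ for every $i$. So $p_m$ stays bounded away from zero and the expected count $mp_m$ grows linearly in $m$.

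The $1/m$ decay is therefore only available when $r$ shrinks with $m$. At fixed $r$, the most you can honestly conclude is your $\eta(r)$ bound, a small fraction independent of $m$. That is also all the paper's proof delivers. It matches the paper's own remark in Appendix~\ref{app:cost} that the $\bar s=\mathcal{O}(1)$ regime requires $r$ to shrink with $N$. Present the disjoint-ball case as a separate, shrinking-radius regime rather than as the meaning of the theorem's last sentence.
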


\begin{proof}
For any $x \in \mathcal{X}$, the $i$-th entry of $\varphi(x)$ is nonzero if and only if $d(x, x_i) < r$. Thus,
\[
\|\varphi(x)\|_0 = \sum_{i=1}^m \mathbb{I}\{d(x, x_i) < r\}.
\]
Taking expectation over $x \sim \mu$, linearity of expectation gives
\[
\mathbb{E}_{x \sim \mu}[\|\varphi(x)\|_0] = \sum_{i=1}^m \mathbb{P}_{x \sim \mu}[d(x, x_i) < r].
\]
If the distribution of landmarks is regular and each $p_m := \mathbb{P}_{x \sim \mu}[d(x, x_i) < r]$ is (approximately) the same for all $i$, then
\[
\mathbb{E}_{x \sim \mu}[\|\varphi(x)\|_0] = m p_m.
\]
Dividing by $m$ yields the expected proportion. For small fixed $r$ compared to the domain size or typical landmark spacing, $p_m$ can be made arbitrarily small and does not increase with $m$. Thus, even as $m$ increases, the expected number of nonzero coordinates remains small compared to $m$, so the embedding is sparse.
\end{proof}

\subsection{Mitigation of Distance Concentration in High Dimensions}
\begin{theorem}\label{thm:concentration}
Let $\mathcal{X}$ be a space in which standard Euclidean embeddings are subject to distance concentration (i.e., as the feature dimension $m \to \infty$, pairwise distances between random points become nearly equal). Let $\varphi : \mathcal{X} \to \mathbb{R}^m$ be the compactly supported bump-function embedding defined as in previous theorems, so that each component $\varphi_i(x) = b(d(x, x_i); a_i, r_i, \beta_i)$ is nonzero if and only if $d(x, x_i) < r_i$ for landmark $x_i$.

Consider the kernel:
\[
k(x, x') = \sigma^2 \exp\left(-\frac{\|\varphi(x) - \varphi(x')\|^2}{2 \ell^2}\right).
\]
Then, as $m$ increases, provided the radii $\{r_i\}$ remain small relative to the domain, the following hold:
\begin{enumerate}
    \item \textbf{Locality.} The overlap $\langle \varphi(x), \varphi(x') \rangle$ is nonzero for a pair $(x, x')$ if and only if they fall within the support of at least one common bump, i.e., $d(x, x_i) < r_i$ and $d(x', x_i) < r_i$ for some $i$.
    \item \textbf{Suppression of Distance Concentration.} For most pairs $(x, x')$, $\varphi(x)$ and $\varphi(x')$ have disjoint support, so that $\|\varphi(x) - \varphi(x')\|^2 = \|\varphi(x)\|^2 + \|\varphi(x')\|^2$, making $k(x, x')$ small, often exactly zero. Only for nearby $x$, $x'$ will $k(x, x')$ be large.
    \item \textbf{Preservation of Informative Local Structure.} The nonzero entries in the kernel matrix reflect local neighborhoods determined by the supports of the bump functions, preserving meaningful similarity relations in high dimensions and overcoming the loss of discriminative power associated with distance concentration.
\end{enumerate}
Consequently, the kernel does not suffer from the distance concentration effect typically observed in high-dimensional Euclidean feature spaces. \rev{An empirical verification of the predicted conditioning behavior --- which would be the first casualty of distance concentration --- is provided in Appendix~\ref{app:conditioning}.}
\end{theorem}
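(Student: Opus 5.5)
The plan is to prove the three items in order, using only the coordinatewise structure of $\varphi$ already exploited in Theorems~\ref{thm:sparsity} and~\ref{thm:sparsity_scaling}.

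\textbf{Item 1 (Locality).} Write
\[
\langle \varphi(x),\varphi(x')\rangle=\sum_i \varphi_i(x)\varphi_i(x').
\]
Every term is nonnegative, because the admissible bump functions are strictly positive on their support and zero outside it. A sum of nonnegative terms is nonzero iff some term is nonzero. The $i$-th term is nonzero iff both factors are nonzero, i.e.\ iff $d(x,x_i)<r_i$ and $d(x',x_i)<r_i$. That gives the claimed equivalence directly.

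\textbf{Item 2 (Suppression).} Expand
\[
\|\varphi(x)-\varphi(x')\|^2=\|\varphi(x)\|^2+\|\varphi(x')\|^2-2\langle\varphi(x),\varphi(x')\rangle .
\]
When no common bump contains both points, the inner product vanishes by Item 1, which yields the stated identity and
\[
k(x,x')=\sigma^2 e^{-\|\varphi(x)\|^2/2\ell^2}e^{-\|\varphi(x')\|^2/2\ell^2}.
\]
To justify that this case covers ``most pairs'', I would bound the probability of overlap for $x,x'\sim\mu$ independently:
\[
\mathbb{P}[\langle\varphi(x),\varphi(x')\rangle\neq0]\le\sum_i p_i^2\le(\max_i p_i)\sum_i p_i ,
\]
where $p_i=\mathbb{P}_\mu[d(x,x_i)<r_i]$. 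Theorem~\ref{thm:sparsity} identifies $\sum_i p_i=\mathbb{E}\|\varphi(x)\|_0$. Under the standing hypothesis that the radii stay small relative to the domain, $\max_i p_i\to 0$ while $\sum_i p_i$ stays bounded, so the overlap probability tends to zero.

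For the ``small, often exactly zero'' claim in the RBF case, I would note that in the disjoint case $k(x,x')$ is a product of two factors, each at most $1$. The kernel is exactly zero only in the degenerate sense that a compactly supported inner kernel would give exact zeros; otherwise it is a fixed product that carries no pairwise information. Only overlapping (nearby) pairs can increase $k$ beyond this baseline.

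\textbf{Item 3 (Local structure).} This follows by combining the first two items. The pairwise-dependent part of $k$, namely the factor $\exp(\langle\varphi(x),\varphi(x')\rangle/\ell^2)$, differs from $1$ only on pairs sharing a bump. For such pairs, Proposition~\ref{prop:injective} shows that the shared coordinates encode the exact local distances. The variation of $\|\varphi(x)-\varphi(x')\|$ therefore does not shrink with $m$: it is determined by the $O(\sum_i p_i)$ active coordinates rather than by all $m$. This contrasts with dense embeddings, where sums of $m$ comparable terms concentrate around their mean relative to their spread.

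\textbf{Main obstacle.} The hardest step is making ``most pairs'' and ``does not suffer from distance concentration'' quantitative. The statement is informal, so I would settle for the overlap-probability bound above together with a ratio comparison. Specifically, the relative spread $\mathrm{Var}\|\varphi(x)-\varphi(x')\|^2/(\mathbb{E}\|\varphi(x)-\varphi(x')\|^2)^2$ stays bounded below when the active count $\sum_i p_i$ is bounded. This follows because the squared distance is a sum of a bounded number of nonzero bounded terms, whereas concentration requires the effective number of summands to grow with $m$.
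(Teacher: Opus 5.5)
Your three-step structure matches the paper's proof, and your Item~1 is tighter than the paper's: nonnegativity of the bumps is exactly what gives the ``if'' direction, which the paper's proof does not address. The gap is the scaling claim in Item~2. The hypothesis that the radii are ``small relative to the domain'' does not give $\max_i p_i\to 0$ with $\sum_i p_i$ bounded.

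Take the reading the paper's own proof adopts: radii fixed and small while landmarks become dense. Then Theorem~\ref{thm:sparsity} gives $\sum_i p_i = m p_r\to\infty$, $\max_i p_i$ does not decrease, and your union bound $\sum_i p_i^2\approx m p_r^2$ eventually exceeds $1$. The conclusion itself also fails in that regime. The overlap event $\bigcup_i\{d(x,x_i)<r,\ d(x',x_i)<r\}$ can only grow as landmarks are added. Its probability tends to a positive limit independent of $m$; in a geodesic space with landmarks dense in $\mathcal{X}$, that limit is $\mathbb{P}[d(x,x')<2r]$. The paper's proof makes the same ``vanishingly small as $m$ increases'' claim informally; your quantification simply makes the missing hypothesis visible.

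There are two honest repairs.
\begin{itemize}
\item Assume the scaling explicitly: radii shrink with the landmark spacing so that $\sum_i p_i\le s$ uniformly. This is the analogue of Assumption (A1) in Theorem~\ref{thm:stability} and of the $\bar s=\mathcal{O}(1)$ regime discussed in Appendix~\ref{app:cost}. Your bound then gives overlap probability at most $s\max_i p_i\to 0$.
\item Keep the radii fixed and use the triangle inequality. A shared bump forces $d(x,x')<2\max_i r_i$, so $\mathbb{P}[\langle\varphi(x),\varphi(x')\rangle\neq 0]\le\mathbb{P}[d(x,x')<2\max_i r_i]$ uniformly in $m$.
\end{itemize}
The second bound is what makes ``most pairs are disjoint'' and ``only nearby pairs'' rigorous. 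It also shows that both are consequences of $r$ being small relative to the domain, not of $m$ growing.

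Your correction of ``often exactly zero'' is right. With the RBF inner kernel, the disjoint-support value $\sigma^2 e^{-(\|\varphi(x)\|^2+\|\varphi(x')\|^2)/2\ell^2}$ is strictly positive. It is not even necessarily small: it is near $\sigma^2$ when both norms are small compared with $\ell$. So ``baseline times a pairwise factor $\exp(\langle\varphi(x),\varphi(x')\rangle/\ell^2)\ge 1$'' is the defensible form.

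However, the relative-spread argument in your final paragraph does not establish Item~3. Write $D=\|\varphi(x)-\varphi(x')\|^2$.
\begin{itemize}
\item A bounded expected number of bounded summands does not by itself keep $\mathrm{Var}\,D/(\mathbb{E}D)^2$ away from zero. Nearly flat bumps with a nearly constant active count make $\|\varphi(x)\|^2$ nearly constant, so you would need a nondegeneracy assumption on the bump profile and the landmark layout.
\item More importantly, in exactly the regime where your overlap bound works, the spread of $D$ over typical pairs comes entirely from $\|\varphi(x)\|^2+\|\varphi(x')\|^2$. That is a local-density term with no pairwise geometry in it: the far-field fold-over documented in Appendix~\ref{app:distortion}. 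Non-concentration in this sense is therefore not evidence of informative structure.
\end{itemize}
Item~3 is better closed the way you began it. The pairwise factor exceeds $1$ only on pairs with $d(x,x')<2\max_i r_i$. On those pairs, Proposition~\ref{prop:injective} says the shared coordinates are injective functions of the exact native distances.
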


\begin{proof}
1. By the construction of the bump embedding, $\varphi_i(x)$ is nonzero only if $d(x, x_i) < r_i$. Thus, for both $\varphi_i(x)$ and $\varphi_i(x')$ to be nonzero requires that both $x$ and $x'$ are within $r_i$ of $x_i$. If this is not the case for any $i$, then $\varphi(x)$ and $\varphi(x')$ have disjoint support.
\vspace{0.5em}

2. In high dimensions, for randomly selected $x, x'$, the likelihood that they share support in any coordinate $i$ (i.e., that $x$ and $x'$ both fall within the small ball of radius $r_i$ around $x_i$) is vanishingly small as $m$ increases, assuming the supports $r_i$ are fixed and small relative to the domain or inter-landmark distances. Thus, $\|\varphi(x) - \varphi(x')\|^2 = \|\varphi(x)\|^2 + \|\varphi(x')\|^2$ for most pairs, making $k(x, x')$ small or exactly zero except for local neighborhoods.
\vspace{0.5em}

3. Nontrivial (large) $k(x, x')$ values can only arise if there is substantial overlap in the supports of $\varphi(x)$ and $\varphi(x')$, i.e., $x$ and $x'$ are close to at least one common landmark. This means the kernel matrix is supported only on genuinely local neighborhoods, and entry magnitudes retain their informativeness even as $m$ grows. 

Therefore, the notorious phenomenon of distances becoming non-informative in high dimensions is avoided: the kernel remains locally discriminative and informative due to the sparsity and locality of the embedding.
\end{proof}

\subsection{Reducibility to Standard Stationary Kernels}
\begin{theorem}\label{thm:reduce}
Let $\mathcal{X}$ be an input space equipped with a distance function $d(\cdot, \cdot)$, and let $L = \{x_1, \ldots, x_m\} \subset \mathcal{X}$ be a set of landmarks. Define the embedding
\[
\varphi(x) = \left[ b(d(x, x_1); a_1, r_1, \beta_1),\; b(d(x, x_2); a_2, r_2, \beta_2),\; \ldots,\; b(d(x, x_m); a_m, r_m, \beta_m) \right]^\top,
\]
where each bump function $b(d; a, r, \beta)$ is continuous and strictly positive for $d < r$, and zero otherwise. Consider the kernel
\[
k(x, x') = \sigma^2 \exp\left( -\frac{ \|\varphi(x) - \varphi(x')\|^2 }{ 2\ell^2 } \right).
\]
Suppose for all $i$, $r_i \to \infty$ and $a_i, \beta_i$ are fixed so that $b(\cdot)$ becomes a globally supported, smooth, strictly positive function of $d(x, x_i)$. 

Then, for all $x, x' \in \mathcal{X}$,
\begin{enumerate}
    \item $\varphi(x)$ is a dense feature vector depending only on the set $\{d(x, x_i)\}_i$;
    \item $k(x, x')$ reduces to a function that depends on $\{d(x, x_i)\}_i$ and $\{d(x', x_i)\}_i$;
    \item If $d$ is a (conditionally) negative definite metric, then as $m \to \infty$ and with suitable choice of $b(\cdot)$, the kernel converges to a stationary RBF kernel $k_\text{RBF}(x, x') = \exp\left(-\frac{d(x, x')^2}{2 \tilde{\ell}^2}\right)$ on $(\mathcal{X}, d)$.
\end{enumerate}
\end{theorem}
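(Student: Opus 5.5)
Parts 1 and 2 are direct consequences of the construction; part 3 is a limiting statement whose exact meaning I will have to pin down.

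\textbf{Parts 1 and 2.} Fix $x$. For each $i$, once $r_i > d(x,x_i)$, the $i$-th coordinate $b(d(x,x_i);a_i,r_i,\beta_i)$ is strictly positive. As $r_i\to\infty$ this holds for every $i$, so $\varphi(x)$ is eventually dense. From Eq.~\eqref{eq:bump} we have $1-d^2/r_i^2\to 1$, so the bump converges pointwise to $a_i$. To keep the limit informative, I would read ``globally supported, smooth, strictly positive function'' as a limiting profile $g_i(d)$ obtained by rescaling $\beta_i$ or $a_i$ jointly with $r_i$. An example is $\beta_i r_i^{-2}\to\gamma$, which gives $b\to a\,e^{-\gamma d^2}$. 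Either way, each coordinate is a function of $d(x,x_i)$ alone, which gives part 1. Part 2 then follows because $\|\varphi(x)-\varphi(x')\|^2=\sum_i (g_i(d(x,x_i))-g_i(d(x',x_i)))^2$.

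\textbf{Part 3.} I will use the Hilbertian characterization from Section~\ref{sec:background}: if $d^2$ is CND, there is an isometric embedding $\psi:\mathcal{X}\to\mathcal{H}$ with $d(x,x')=\|\psi(x)-\psi(x')\|$. Let the landmarks be i.i.d.\ draws from a probability measure $\nu$ on $\mathcal{X}$. Normalize the embedding by $m^{-1/2}$, absorbing this into $a_i$ or $\ell$. By the law of large numbers,
\[
\tfrac1m\|\varphi(x)-\varphi(x')\|^2 \;\longrightarrow\; \int \bigl(g(d(x,u))-g(d(x',u))\bigr)^2\,d\nu(u) =: D_g(x,x')^2 .
\]
The ``suitable choice of $b$'' must make $D_g(x,x')$ proportional to $d(x,x')$. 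For Euclidean $\mathcal{X}=\mathbb{R}^n$ I would take $g(t)=e^{-\gamma t^2}$ and $\nu$ equal to Lebesgue measure, as an improper limit of landmarks becoming uniformly dense. Then $\int (e^{-\gamma|x-u|^2}-e^{-\gamma|x'-u|^2})^2du = 2c\,(1-e^{-\gamma|x-x'|^2/2})$, a Gaussian convolution identity. This is a monotone function $F$ of $d(x,x')$ alone, so the limit kernel is $\exp(-F(d)/2\ell^2)$, which is stationary. Its Taylor expansion near $0$ gives the RBF form $\exp(-d^2/2\tilde\ell^2)$ with $\tilde\ell^2=\ell^2/(c\gamma)$. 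An alternative that is exact for every $d$ is to take $\gamma\to 0$ jointly with $\ell$ rescaled. For a general Hilbertian $d$ I would run the same computation in $\mathcal{H}$ via $\psi$, with $\nu$ a translation-invariant (Gaussian-cylinder) reference so that the integral depends only on $\psi(x)-\psi(x')$.

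\textbf{Main obstacle.} The hard step is part 3. Exact convergence to an RBF in $d$ needs both a specific limiting profile and a translation-invariant landmark distribution, and such a distribution does not exist as a probability measure in infinite-dimensional $\mathcal{H}$. I would therefore state part 3 under the explicit assumption that $(\mathcal{X},d)$ embeds in $\mathbb{R}^n$ with landmarks asymptotically uniform on a region containing the points. I would handle boundary effects by letting that region grow, and obtain the exact RBF through the joint scaling limit $\gamma\to0$, $\ell^2\propto\gamma$.
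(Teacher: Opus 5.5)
\paragraph{Parts 1--2.} Your argument here matches the paper's: once $r_i > d(x,x_i)$, every coordinate is positive and is a function of $d(x,x_i)$ alone, and the squared embedding distance decomposes over the distance profiles.

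Your side remark is correct and is not noticed in the paper. For the bump of Eq.~\eqref{eq:bump} with $a_i,\beta_i$ fixed, $b\to a_i$ as $r_i\to\infty$. Hence $\varphi$ becomes constant and $k\equiv\sigma^2$. Parts 1--2 then hold only trivially, and part 3 rests entirely on the ``suitable choice of $b$''. Your rescaling $\beta_i/r_i^2\to\gamma$ departs from the stated hypothesis, so present it as part of that choice.

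\paragraph{Part 3: a different route.} The paper simply asserts that, for large $m$ and suitable global bumps, $\|\varphi(x)-\varphi(x')\|$ ``encodes $d(x,x')$ up to a scale'', and concludes. You actually compute a limit, and the computation checks out:
$\int(e^{-\gamma|x-u|^2}-e^{-\gamma|x'-u|^2})^2\,du = 2c(1-e^{-\gamma|x-x'|^2/2})$ with $c=(\pi/2\gamma)^{n/2}$.

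The exact RBF arises only from the joint limit $\gamma\to 0$, $\ell^2=c\gamma\tilde\ell^2$. At fixed $\gamma$ the limit kernel is stationary but tends to $e^{-c/\ell^2}>0$ as $d\to\infty$, so the Taylor remark alone is not convergence.

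Some bookkeeping needs fixing:
\begin{itemize}
\item With landmarks uniform on a box of volume $V$, the normalization is $(V/m)^{1/2}$ (a Riemann sum), not an i.i.d.\ LLN with $m^{-1/2}$.
\item The box must grow faster than $\gamma^{-1/2}$.
\item $\ell^2\propto\gamma$ holds only after absorbing $c(\gamma)$ into the amplitude.
\end{itemize}

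\paragraph{The gap: scope, sharper than you state.} The landmarks must lie in $\mathcal{X}$. So ``landmarks asymptotically uniform on a growing region'' requires $\psi(\mathcal{X})$ to contain arbitrarily large balls. Merely embedding in $\mathbb{R}^n$ is not enough, and your argument really covers $\mathcal{X}=\mathbb{R}^n$ with Euclidean $d$. The same objection applies to your Hilbertian sketch, where $\nu$ must be carried by $\psi(\mathcal{X})$, on top of the non-existence issue you note.

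The restriction is not an artifact of your method. Take a compact convex $\mathcal{X}\subset\mathbb{R}^2$ with Euclidean distance, which is CND. No finite-$m$ embedding with coordinates $g_i(|x-x_i|)$ is ever an isometry. A Hilbert-space isometry of a convex set preserves midpoints and is therefore affine. But a function $g(|x-u|)$ that is affine on an open subset of $\mathbb{R}^2$ is constant there, since it would be constant on arcs of circles. The translation-invariant landmark distribution your limit needs is also unavailable there.

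So part 3 for general CND $d$ is not proved by your proposal. It is not proved by the paper either: its one-sentence assertion is precisely the step you identified as the obstacle. Your restricted statement is the defensible version.
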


\begin{proof}
1. When all $r_i \to \infty$, for any $x \in \mathcal{X}$ and any $i$, $d(x, x_i) < r_i$ always holds. Therefore, each coordinate $b(d(x, x_i); a_i, r_i, \beta_i)$ is strictly positive and only depends on $d(x, x_i)$.
\vspace{0.5em}

2. The vector $\varphi(x)$ encodes the global structure of $x$ with respect to all landmarks, and the difference $\varphi(x) - \varphi(x')$ depends only on the vector differences $\{ b(d(x, x_i)) - b(d(x', x_i)) \}_{i=1}^m$.

3. If $d$ is (conditionally) negative definite, the classic result for kernel methods states that the standard RBF kernel $k_\text{RBF}(x, x') = \exp\left(-\frac{d(x, x')^2}{2\tilde{\ell}^2}\right)$ is positive-definite and stationary on $(\mathcal{X}, d)$. For sufficiently large $m$ and appropriately chosen, smooth, global bump functions, the feature embedding $\varphi(x)$ can be made to approximate an injective mapping from $\mathcal{X}$ into $\mathbb{R}^m$ such that $\|\varphi(x) - \varphi(x')\|$ encodes $d(x, x')$ up to a scale. Thus, in the limit $r_i \to \infty$ and $m \to \infty$, $k(x, x')$ converges to the standard RBF kernel over $d(\cdot, \cdot)$.

Therefore, the bump-embedding kernel recovers the standard RBF kernel on the original space when the bumps become globally supported.
\end{proof}

\subsection{Continuity and Smoothness}
\begin{theorem}\label{thm:smooth}
Let $\mathcal{X}$ be a topological space, and let $d : \mathcal{X} \times \mathcal{X} \to \mathbb{R}$ be a continuous function. Consider a collection of smooth, compactly supported bump functions $b(d; a, r, \beta)$ that are $C^{\infty}$ (infinitely differentiable) on their support. Define the embedding
\[
\varphi(x) = \left[ b(d(x, x_1); a_1, r_1, \beta_1),\; b(d(x, x_2); a_2, r_2, \beta_2),\; \ldots,\; b(d(x, x_m); a_m, r_m, \beta_m) \right]^\top,
\]
for a set of fixed landmarks $\{ x_i \}_{i=1}^m$. The kernel is given by
\[
k(x, x') = \sigma^2 \exp\left( -\frac{ \| \varphi(x) - \varphi(x') \|^2 }{ 2\ell^2 } \right).
\]

If $d(x, x_i)$ is smooth in $x$, and $b$ is smooth in $d$, then $k(x, x')$ is smooth (infinitely differentiable) as a function of each argument.
\end{theorem}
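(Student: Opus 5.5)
The plan is to write the kernel as a composition of maps, each of which is smooth, and then invoke the chain rule. Concretely, $k(x,x') = \sigma^2 \exp\bigl(-\tfrac{1}{2\ell^2}\sum_{i=1}^m (\varphi_i(x)-\varphi_i(x'))^2\bigr)$ with $\varphi_i(x) = b(d(x,x_i);a_i,r_i,\beta_i)$. The outer map $u \mapsto \sigma^2 e^{-u/(2\ell^2)}$ and the map $(z,z')\mapsto \|z-z'\|^2$ on $\mathbb{R}^m\times\mathbb{R}^m$ are polynomial/analytic and hence $C^\infty$. So smoothness of $k$ in each argument reduces to smoothness of each coordinate $\varphi_i$, since a finite sum of smooth functions is smooth. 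Throughout, $\mathcal{X}$ is tacitly assumed to carry a smooth structure for the statement to make sense, e.g.\ a smooth manifold or an open subset of $\mathbb{R}^n$.

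The key step is therefore to show that $x \mapsto b(d(x,x_i))$ is $C^\infty$. On the open set $U_i=\{x : d(x,x_i)<r_i\}$, this is the composition of the smooth function $d(\cdot,x_i)$ with $b$, which is $C^\infty$ on $[0,r_i)$ by hypothesis. The chain rule (Faà di Bruno for higher orders) then gives smoothness there. On the open set $\{x: d(x,x_i)>r_i\}$ the coordinate is identically zero, hence smooth.

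The main obstacle is the boundary set $\{d(x,x_i)=r_i\}$, where the two pieces meet. Here I would use the specific form of Eq.~\eqref{eq:bump}. Write $b(d) = a e^{\beta} g(1-d^2/r^2)$ with $g(t)=e^{-\beta/t}$ for $t>0$ and $g(t)=0$ for $t\le 0$. The classical fact that $g$ is $C^\infty$ on all of $\mathbb{R}$, with all derivatives vanishing at $0$, shows that $b$ extends to a $C^\infty$ function of $d$ on all of $[0,\infty)$. In fact $b$ is a $C^\infty$ function of $d^2$ on $\mathbb{R}$, and I would exploit this to also sidestep any non-smoothness of $d$ at $d=0$. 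Then $\varphi_i = \tilde b\circ d(\cdot,x_i)$ is a composition of globally smooth maps, and the chain rule applies everywhere without case distinction.

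For a general bump that is merely ``smooth on its support'', I will explicitly add the hypothesis that all one-sided derivatives vanish at $r$, and note that the theorem should be read with this hypothesis. Finally, symmetry of $k$ gives smoothness in the second argument by the same argument, and joint smoothness in fact follows as well.
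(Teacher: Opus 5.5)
Your proposal is correct and is essentially the paper's own argument: $k$ is the composition of $\varphi$, the polynomial map $(z,z')\mapsto\|z-z'\|^2$, and the exponential, so the chain rule gives smoothness once each coordinate $\varphi_i=b(d(\cdot,x_i))$ is smooth. The paper simply takes $b$ to be $C^\infty$ in $d$ globally, so your additions are extra rigor rather than a different route: the explicit check at the support boundary $d=r_i$ via the classical $e^{-\beta/t}$ function, the vanishing-derivative hypothesis you add for general bumps (which is genuinely needed), and your remarks that $\mathcal{X}$ must carry a smooth structure and that one should work with $\|\cdot\|^2$ rather than the norm itself.
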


\begin{proof}
Since $d(x, x_i)$ is assumed smooth in $x$ (for all fixed $x_i$), and $b(\cdot)$ is $C^\infty$ as a function of $d$, each coordinate of $\varphi(x)$ is a composition of smooth functions and hence is $C^\infty$ in $x$. Therefore, $\varphi(x)$ is $C^\infty$ as a mapping from $\mathcal{X}$ to $\mathbb{R}^m$.

The Euclidean norm, squaring, and difference are all smooth operations in $\mathbb{R}^m$, so $F(x, x') = \|\varphi(x) - \varphi(x')\|^2$ is a smooth function of both $x$ and $x'$. The function $k(x, x')$ is then a composition of $F(x, x')$ with the exponential function, which is also smooth.

Thus, $k(x, x')$ is smooth in both arguments; that is, $k \in C^{\infty}(\mathcal{X} \times \mathcal{X})$.
\end{proof}

\subsection{Empirical Scaling and Complexity}
\begin{theorem}\label{thm:scaling}
Let $\mathcal{X}$ be a metric space, let $L = \{x_1, \ldots, x_m\} \subset \mathcal{X}$ be $m$ landmarks, and let $b(d; a, r, \beta)$ denote a compactly supported bump function as in previous theorems. Define the embedding
\[
\varphi(x) = \left[ b(d(x, x_1); a_1, r_1, \beta_1),\, b(d(x, x_2); a_2, r_2, \beta_2),\, \ldots,\, b(d(x, x_m); a_m, r_m, \beta_m) \right]^\top \in \mathbb{R}^m.
\]
Let $s_x = \|\varphi(x)\|_0$ denote the number of nonzero entries in $\varphi(x)$. The kernel is given by
\[
k(x, x') = \sigma^2 \exp\left(-\frac{\|\varphi(x) - \varphi(x')\|^2}{2\ell^2}\right).
\]

Then:
\begin{enumerate}
    \item For any pair $x, x' \in \mathcal{X}$, computing $\|\varphi(x) - \varphi(x')\|^2$ and thus $k(x, x')$ requires $\mathcal{O}(s_{x,x'})$ operations, where $s_{x,x'}$ is the number of indices $i$ such that at least one of $\varphi_i(x)$ or $\varphi_i(x')$ is nonzero (i.e., at most $s_x + s_{x'}$).
    \item If all bump radii $r_i$ are small compared to the domain and the landmark set is sufficiently large, then $s_x \ll m$ for typical $x$, so computational cost is sublinear in $m$.
    \item The total number of nonzero entries in the $N \times m$ embedding matrix for $N$ data points is $\mathcal{O}(N \bar{s})$, where $\bar{s}$ is the average sparsity per embedding, and all kernel matrix and matrix operation costs (e.g., matrix-vector products) scale accordingly.
\end{enumerate}
Therefore, kernel evaluation and matrix operations scale with embedding sparsity (local bump overlap), not with the full ambient embedding dimension $m$. \rev{Note that this concerns the kernel and linear-algebra stage; the cost of forming the distance profiles themselves is discussed in Appendix~\ref{app:cost}.}
\end{theorem}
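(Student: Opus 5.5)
The three claims are essentially bookkeeping consequences of the support structure already identified in Theorem~\ref{thm:sparsity} and Theorem~\ref{thm:sparsity_scaling}. The plan is to store each embedding $\varphi(x)$ in sparse form, i.e.\ as the index set $S_x=\{i:\varphi_i(x)\neq 0\}$ (which equals $\{i: d(x,x_i)<r_i\}$ by the proof of Theorem~\ref{thm:sparsity}) together with the corresponding values sorted by index. Everything then reduces to counting operations on these sparse lists. I take the distance profiles and $\varphi$ as given, and defer their formation cost to Appendix~\ref{app:cost}, as the statement itself does.

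For item~1, write
\[
\|\varphi(x)-\varphi(x')\|^2=\sum_{i\in S_x\cup S_{x'}}\bigl(\varphi_i(x)-\varphi_i(x')\bigr)^2 ,
\]
since every index outside $S_x\cup S_{x'}$ contributes $0$. Merging two index-sorted lists costs $\mathcal{O}(|S_x|+|S_{x'}|)$, and each index in the union contributes one subtraction, one square and one addition. This gives $\mathcal{O}(s_{x,x'})$ with $s_{x,x'}=|S_x\cup S_{x'}|\le s_x+s_{x'}$. Exponentiating and scaling by $\sigma^2$ adds $\mathcal{O}(1)$. Alternatively, one can use $\|\varphi(x)\|^2+\|\varphi(x')\|^2-2\langle\varphi(x),\varphi(x')\rangle$ with precomputed norms, where the inner product only runs over $S_x\cap S_{x'}$.

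For item~2, I invoke Theorem~\ref{thm:sparsity_scaling}: with $r_i=r$ small and landmarks dense, $\mathbb{E}[s_x]=m p_m$ with $p_m\ll 1$. Markov's inequality then shows that $s_x$ exceeds, say, $m p_m/\delta$ with probability at most $\delta$, so typical $x$ has $s_x\ll m$. Combining this with item~1, the per-pair cost is $\mathcal{O}(m p_m)=o(m)$ whenever $p_m\to 0$.

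For item~3, the nonzero count of the $N\times m$ embedding matrix is $\sum_{n=1}^N s_{x_n}=N\bar s$ by definition of the average $\bar s$. A matrix--vector product with a sparse matrix costs time linear in its number of nonzeros, so the cost is $\mathcal{O}(N\bar s)$. Assembling all $N^2$ kernel entries via item~1 costs $\sum_{n,n'}\mathcal{O}(s_{x_n}+s_{x_{n'}})=\mathcal{O}(N^2\bar s)$ rather than $\mathcal{O}(N^2 m)$.

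The main obstacle is the informal phrase ``all kernel matrix \dots\ costs scale accordingly''. The SLE-RBF Gram matrix itself is dense, because disjoint-support pairs still give $k_0(x)k_0(x')\neq 0$, as noted in the proof of Theorem~\ref{thm:stability}. So I will restrict the claim precisely to the dependence on $m$: every cost is of the form (operation count in $N$) times $\bar s$, with no factor of $m$. I will also note explicitly that the $N^2$ or $N^3$ dependence of dense GP linear algebra is not removed by the embedding.
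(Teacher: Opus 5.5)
Your proof is correct, and for items 1 and 2 it follows the paper's argument: restrict the squared norm to $S_x\cup S_{x'}$, and let small radii keep $s_x$ small. Your Markov step and the explicit hypothesis $p_m\to 0$ make ``typical $x$'' and ``sublinear in $m$'' precise where the paper is informal. That hypothesis is genuinely needed. With a fixed radius in a fixed domain, the summands of $\mathbb{E}[s_x]=\sum_i\mathbb{P}_{x\sim\mu}[d(x,x_i)<r]$ do not shrink as landmarks are added. So $s_x\ll m$ then holds only in the weak sense $s_x\approx p\,m$ with small $p$, which is not sublinear.

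The substantive divergence is item 3. The paper's proof asserts that matrix--vector products with the Gram matrix $K$ itself cost $\mathcal{O}(N\bar s)$ ``for sparse kernels'', which implicitly treats $K$ as sparse. You correctly observe that disjoint-support pairs give $K_{ij}=\sigma^2k_0(x_i)k_0(x_j)>0$, so $K$ is dense and that claim does not hold as written. Your restriction to ``no factor of $m$'' is the version the argument actually supports: $\mathcal{O}(N\bar s)$ for products with the embedding matrix $\Phi\in\mathbb{R}^{N\times m}$, $\mathcal{O}(N^2\bar s)$ for Gram assembly, and the dense $N^2$/$N^3$ linear algebra untouched.

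If you want to recover the paper's ``scales with overlaps'' intuition for the RBF inner kernel, use the exact factorization $K=\sigma^2DMD$. Here $D=\operatorname{diag}\bigl(k_0(x_1),\ldots,k_0(x_N)\bigr)$ and $M_{ij}=\exp\bigl(\langle\varphi(x_i),\varphi(x_j)\rangle/\ell^2\bigr)$. Because $\Phi\ge 0$, the matrix $M-\mathbf{1}\mathbf{1}^\top$ has exactly the sparsity pattern of $\Phi\Phi^\top$, so $Kv$ costs $\mathcal{O}\bigl(N+\mathrm{nnz}(\Phi\Phi^\top)\bigr)$. That count is the number of overlapping pairs, bounded by $\sum_l c_l^2\le c_{\max}N\bar s$, where $c_l$ are the column counts of $\Phi$. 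It is therefore $m$-independent but not $\mathcal{O}(N\bar s)$ in general.

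This rank-one-plus-sparse structure disappears for the Mat\'ern inner kernels used in the experiments. There, disjoint-support entries depend on $\bigl(\|\varphi(x_i)\|^2+\|\varphi(x_j)\|^2\bigr)^{1/2}$ and do not factor, so your $m$-independence formulation is the one that covers those kernels too.
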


\begin{proof}
1. By construction, $b(\cdot)$ is compactly supported, so for each $x$ only a small fraction $s_x$ of the coordinates in $\varphi(x)$ are nonzero. The squared Euclidean distance $\|\varphi(x) - \varphi(x')\|^2$ involves only dimensions where at least one entry is nonzero (i.e., the union of nonzero indices in $\varphi(x)$ and $\varphi(x')$). Thus, its computation is $\mathcal{O}(s_{x,x'})$.

2. If bump radii are small and landmarks are widely dispersed, $s_x$ remains small and does not increase with $m$. Thus, the per-kernel evaluation and per-row storage cost are both $\mathcal{O}(s_x) \ll m$.

3. For a dataset of $N$ points, the total number of floating-point operations for forming all $\varphi(x^{(i)})$ is $\mathcal{O}(N \bar{s})$ for average sparsity $\bar{s}$. Matrix operations such as matrix-vector products with the Gram matrix $K$ also scale with the number of nonzero overlaps between pairs of embeddings, yielding $\mathcal{O}(N \bar{s})$ scaling for sparse kernels, far more efficient than the $\mathcal{O}(Nm)$ scaling of dense embeddings.

Thus, the complexity is governed by embedding sparsity rather than by the full embedding dimension $m$, as claimed.
\end{proof}

\section{The Non-Stationary SLE Kernel}
\label{app:ns_kernel}

In the \rev{stationary} formulation of the SLE kernel, each bump function $b(d(x, x_i); 
a_i, r_i, \beta_i)$ carries the same amplitude $a$, radius $r$, and shape 
parameter $\beta$. So far, we have treated these as free hyperparameters to be 
learned globally. However, a more powerful and principled choice is to let them 
vary as functions (arbitrary parametric, NNs, polynomial) of position in $\mathcal{X}$, making the kernel explicitly \emph{non-stationary}: the similarity structure it encodes can differ across 
different regions of the input space.

Concretely, we allow each landmark $x_i$ to carry its own local parameters
    $a_i = a(x_i), \quad r_i = r(x_i), \quad \beta_i = \beta(x_i),
    $
that are functions defined on 
$\mathcal{X}$. These functions can be specified by the user based on prior 
knowledge of the input domain, or learned from data. This yields the 
\emph{non-stationary SLE kernel}, in the case of RBF,
\begin{equation}
    k_{\mathrm{SLE-RBF}}^{\mathrm{NS}}(x, x') = \sigma^2 \exp\!\left( 
    -\frac{\|\phi^{\mathrm{NS}}(x) - \phi^{\mathrm{NS}}(x')\|^2}{2\ell^2} \right),
    \label{eq:sle_ns}
\end{equation}
where the non-stationary embedding is
\begin{equation}
    \phi^{\mathrm{NS}}(x) = \bigl[b(d(x,x_1);\,a(x_1),\,r(x_1),\,\beta(x_1)),\; 
    \ldots,\; b(d(x,x_{|\mathcal{D}|});\,a(x_{|\mathcal{D}|}),\,r(x_{|\mathcal{D}|}),\,\beta(x_{|\mathcal{D}|}))\bigr]^\top.
    \label{eq:ns_embedding}
\end{equation}

The non-stationarity enters entirely through the domain-varying \rev{bump parameters}, 
and the PSD property is unaffected, as the following proposition confirms.

\begin{proposition}[PSD of the Non-Stationary SLE Kernel]
\label{prop:ns_psd}
Let $a(\cdot)$, $r(\cdot)$, and $\beta(\cdot)$ be arbitrary positive-valued 
functions on $\mathcal{X}$. Then $k_{\mathrm{SLE}}^{\mathrm{NS}}$ as defined in 
Eq.~\eqref{eq:sle_ns} is a positive semi-definite kernel on $\mathcal{X}$, for 
any distance $d(\cdot,\cdot)$, whether or not it is CND.
\end{proposition}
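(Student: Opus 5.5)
The plan is to reduce Proposition~\ref{prop:ns_psd} to Theorem~\ref{thm:psd}. That theorem holds for \emph{any} map $\varphi:\mathcal{X}\to\mathbb{R}^m$, so it suffices to show that $\phi^{\mathrm{NS}}$ in Eq.~\eqref{eq:ns_embedding} is a well-defined, deterministic map $\mathcal{X}\to\mathbb{R}^{|\mathcal{D}|}$. Nothing about $d$, the bump shape, or the parameter fields beyond well-definedness will be used.

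\textbf{Step 1: well-definedness.} Fix the landmarks $x_1,\dots,x_{|\mathcal{D}|}$ and the functions $a(\cdot),r(\cdot),\beta(\cdot)$. The parameters attached to coordinate $i$ are then the fixed positive numbers $a(x_i)$, $r(x_i)$ and $\beta(x_i)$. It is important that they are evaluated at the landmark $x_i$, not at the argument $x$ or $x'$. Hence
\[
\phi^{\mathrm{NS}}_i(x)=b\bigl(d(x,x_i);\,a(x_i),r(x_i),\beta(x_i)\bigr)
\]
is a single-valued real function of $x$ alone. By Eq.~\eqref{eq:bump}, positivity of $r(x_i)$ makes the case split $d<r$ meaningful and keeps $1-d^2/r^2>0$ on the support, so each coordinate is a finite real number. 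Nonnegativity of $d$ is not even needed, since $d^2$ enters. Consequently $\phi^{\mathrm{NS}}(x)\in\mathbb{R}^{|\mathcal{D}|}$, and $k^{\mathrm{NS}}_{\mathrm{SLE}}(x,x')=\sigma^2 h(\phi^{\mathrm{NS}}(x),\phi^{\mathrm{NS}}(x'))$ with $h$ the Gaussian kernel on $\mathbb{R}^{|\mathcal{D}|}$.

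\textbf{Step 2: invoke Theorem~\ref{thm:psd}.} Take any finite set $\{z_1,\dots,z_N\}\subset\mathcal{X}$ and any $\mathbf c\in\mathbb{R}^N$. Set $w^{(j)}=\phi^{\mathrm{NS}}(z_j)$. The quadratic form $\sum_{j,k}c_jc_k k^{\mathrm{NS}}(z_j,z_k)$ equals $\sigma^2\sum_{j,k}c_jc_k\exp(-\|w^{(j)}-w^{(k)}\|^2/2\ell^2)\ge 0$ by Bochner's theorem, exactly as in the proof of Theorem~\ref{thm:psd} with $\varphi=\phi^{\mathrm{NS}}$ and $m=|\mathcal{D}|$. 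Symmetry of $k^{\mathrm{NS}}$ is immediate because it depends on its arguments only through $\|\phi^{\mathrm{NS}}(x)-\phi^{\mathrm{NS}}(x')\|$. This holds even if $d$ is asymmetric, consistent with Remark~\ref{rem:distreq}. The same argument covers any other inner kernel that is PSD on $\mathbb{R}^{|\mathcal{D}|}$, such as the Mat\'ern family.

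\textbf{Main obstacle.} The only point needing care is the distinction between parameters indexed by the landmark and parameters indexed by the input. If one instead let the radius depend on the evaluation points, for example through $r(x,x')$, the kernel would no longer factor through a single feature map, and PSD could fail. The proof must therefore stress that in Eq.~\eqref{eq:ns_embedding} the parameter fields enter only through the fixed values $a(x_i),r(x_i),\beta(x_i)$. Under that reading the parameter fields can be arbitrary, including learned functions such as neural networks, since they are frozen once the hyperparameters are fixed. The claim then follows with no further conditions.
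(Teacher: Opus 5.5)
Your proposal is correct and follows the same route as the paper: both observe that $\phi^{\mathrm{NS}}$ is simply a map $\mathcal{X}\to\mathbb{R}^{|\mathcal{D}|}$ and then invoke the argument of Theorem~\ref{thm:psd}, with your Step~1 spelling out the well-definedness the paper takes for granted. Two of your asides are not quite right: nonnegativity of $d$ \emph{is} needed for well-definedness, since $d=-r(x_i)$ satisfies $d<r(x_i)$ yet makes $1-d^2/r(x_i)^2=0$; and evaluating the parameters at the embedded argument itself (using $r(x)$ for $\phi(x)$ and $r(x')$ for $\phi(x')$) would still give a single feature map, so only pair-dependent parameters such as $r(x,x')$ break the argument.
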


\begin{proof}
The non-stationary embedding $\phi^{\mathrm{NS}} : \mathcal{X} \rightarrow 
\mathbb{R}^{|\mathcal{D}|}$ is a mapping into Euclidean space, regardless of how its 
parameters vary across $\mathcal{X}$. By the argument of 
Theorem~\ref{thm:psd}, any kernel of the form $k(x,x') = \rev{h}(\phi(x), 
\phi(x'))$ with \rev{$h$} PSD on $\mathbb{R}^{|\mathcal{D}|}$ is PSD on $\mathcal{X}$. Since 
the RBF kernel is PSD on $\mathbb{R}^{|\mathcal{D}|}$, the result follows immediately.
\end{proof}

The three parameter fields $a(\cdot)$, $r(\cdot)$, and $\beta(\cdot)$ each control a distinct aspect of the non-stationarity and can be defined via any parametric function to avoid an excessive number of hyperparameters. \rev{The individual roles of the parameter fields are described next.}

\subsection{Non-Stationary Parameter Fields} \label{app:ns}
\textbf{Radius $r(x_i)$.} The support radius controls the \emph{spatial reach} of 
landmark $x_i$: how large a neighborhood around $x_i$ contributes to the 
similarity structure. In regions where the function being modeled varies rapidly, 
smaller radii are appropriate, encoding the intuition that only very nearby points 
should be considered similar. In smoother regions, larger radii allow information 
to propagate further. Varying $r(\cdot)$ therefore adapts the effective length 
scale of the kernel to local function complexity, analogously to the 
input-dependent length scales of non-stationary kernels such as those proposed by 
\citet{Paciorek2003}.

\textbf{Amplitude $a(x_i)$.} The amplitude controls the \emph{contribution} of 
landmark $x_i$ to the overall embedding. Landmarks in regions of high data density 
or high functional relevance can be upweighted, while those in sparse or 
uninformative regions can be downweighted. This provides a mechanism for the kernel 
to allocate representational capacity unevenly across $\mathcal{X}$, analogously 
to signal variance modulation in non-stationary GP models.

\textbf{Shape $\beta(x_i)$.} The shape parameter controls the \emph{profile} of 
the bump: how steeply similarity decays with distance from $x_i$ within the 
support. Large $\beta$ produces a bump that is nearly flat near $x_i$ and drops 
sharply near the boundary $r$, while small $\beta$ produces a smoother, more 
gradual decay. Varying $\beta(\cdot)$ therefore allows the kernel to encode 
different local smoothness assumptions in different parts of $\mathcal{X}$.

\textbf{Signal Variance $\sigma(x)$.} In addition, we can make the signal variance non-stationary
by considering $\sigma^2=\sigma^2(x)=\sigma(x) \sigma(x)$\rev{.}

Together, these four spatially varying parameters give the non-stationary SLE 
kernel considerable flexibility. We note that the stationary SLE kernel is recovered as the special case where $a(\cdot)$, 
$r(\cdot)$, and $\beta(\cdot)$ are constant functions.

\begin{remark}[Sparsity Under Non-Stationarity]
The sparsity properties established in Theorems~\ref{thm:sparsity} 
and~\ref{thm:sparsity_scaling} carry over directly to the non-stationary case. 
For any $x \in \mathcal{X}$, the $i$-th entry of $\phi^{\mathrm{NS}}(x)$ is 
nonzero if and only if $d(x, x_i) < r(x_i)$. The expected number of nonzero 
entries is therefore $\sum_{i=1}^{|\mathcal{D}|} \mathbb{P}_{x \sim \mu}[d(x, x_i) < 
r(x_i)]$, which remains small provided the radii $r(x_i)$ are small relative to 
the typical inter-point spacing. Non-stationarity in $r(\cdot)$ thus affects the 
\emph{local} sparsity pattern but does not compromise the overall sparsity of the 
embedding.
\end{remark}

\section{Complete Manifold Benchmarking Results}\label{app:manifold_results}

Here we present the full performance curves across all evaluated training dataset 
sizes for the GP regression experiments on the Dragon and Teddy Bear manifolds introduced 
in Section~\ref{sec:experiments}. These figures complement the summary statistics reported 
in Tables~\ref{tab:dragon_comparison} and~\ref{tab:teddy_comparison} of the main text, 
and provide a complete view of how predictive accuracy and uncertainty quantification 
evolve with training dataset size.

For the Dragon manifold, the Riemannian Mat\'{e}rn kernel approximates the covariance matrix via a
truncated spectral expansion $K = \Phi_X \operatorname{diag}(S) \Phi_X^\top$,
where $l$ eigenpairs are retained~\citep{borovitskiy2020Matern}. This
truncation causes the kernel matrix to become rank-deficient when the number of
training points $n$ approaches $l$, leading to numerical failure of the
Cholesky decomposition and, consequently, of model training. The resulting
instability is visible as sharp spikes in RMSE and CRPS at $n \approx l$
($n = \{100, 500, 1000\}$) in Figure~\ref{fig:dragon_comparison}b--c, where
divergent values are indicated by dashed lines rather than reported numerically.
Most critically, this instability corrupts uncertainty quantification: the
posterior variance becomes negative, requiring it to be clamped to zero and
collapsing the predictive distribution to a point estimate. This renders CRPS
undefined and drives PICP to 0\%, explaining the empty cells in
Table~\ref{tab:dragon_comparison}. In contrast, the SLE kernel maintains a
stable PICP near the nominal 95\% level and a smoothly decreasing CRPS with
increasing training size, demonstrating reliable, calibrated uncertainty
quantification without a spectral truncation parameter. For the Teddy Bear manifold, the full results are shown in Figure \ref{fig:teddy_comparison}. In contrast to the previous tests, the Geometric kernel is stable across dataset sizes, so these results represent robust, stable runs.  

\begin{figure}[htbp]
    \centering
    \includegraphics[width=\textwidth]{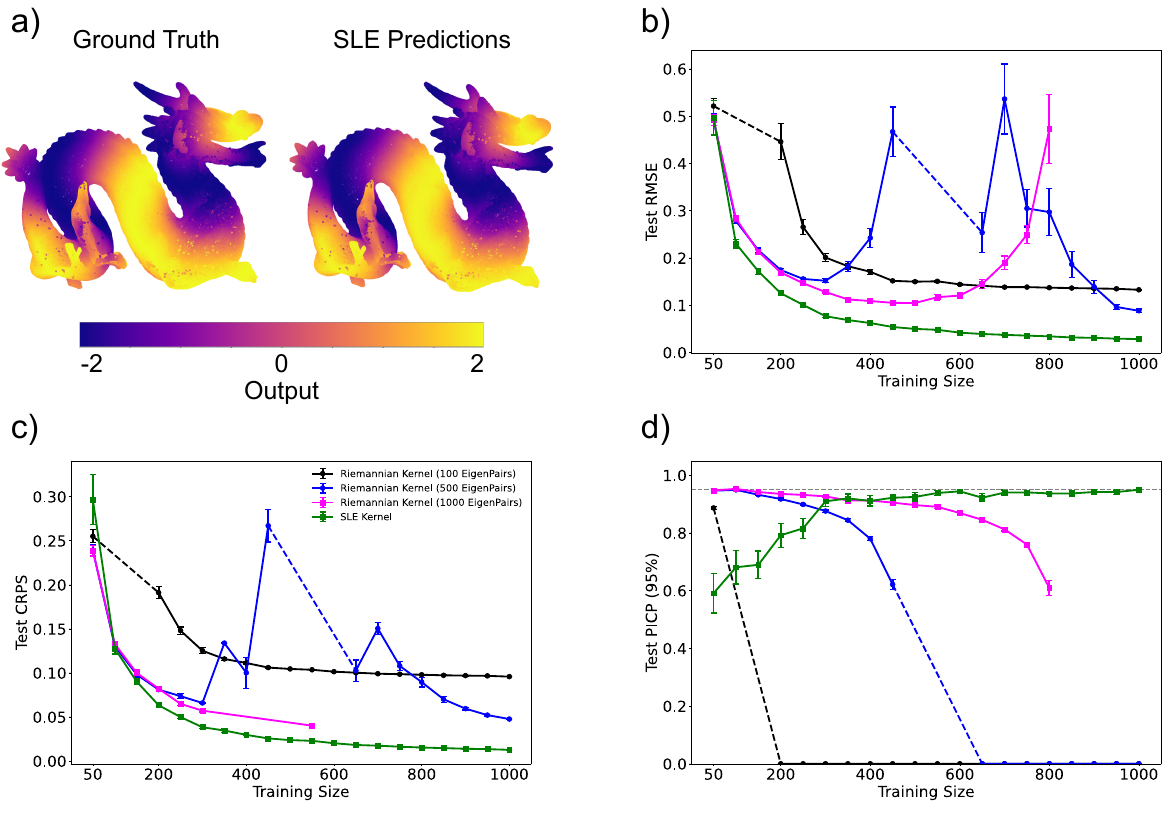}
    \caption{Benchmarking the SLE kernel against the Riemannian Mat\'{e}rn 
    kernel~\citep{borovitskiy2020Matern} on the Dragon manifold. (a) Ground truth and 
    SLE predicted output values represented by the color of the mesh vertices. 
    (b) Test RMSE, (c) CRPS, and (d) PICP (95\% interval) as a function of training 
    dataset size for the SLE kernel and the Riemannian Mat\'{e}rn kernel with 100, 500, 
    and 1000 eigenpairs. Dashed lines indicate training sizes where results are omitted 
    due to numerical divergence caused by ill-conditioning of the Riemannian kernel 
    matrix when $n \approx l$; this instability also accounts for the degraded 
    performance of the 500-eigenpair variant relative to the 100-eigenpair variant 
    at certain training sizes. Error bars represent the standard error of the mean 
    across 30 trials.}
    \label{fig:dragon_comparison}
\end{figure}

\begin{figure}[htbp]
    \centering
    \includegraphics[width=\textwidth]{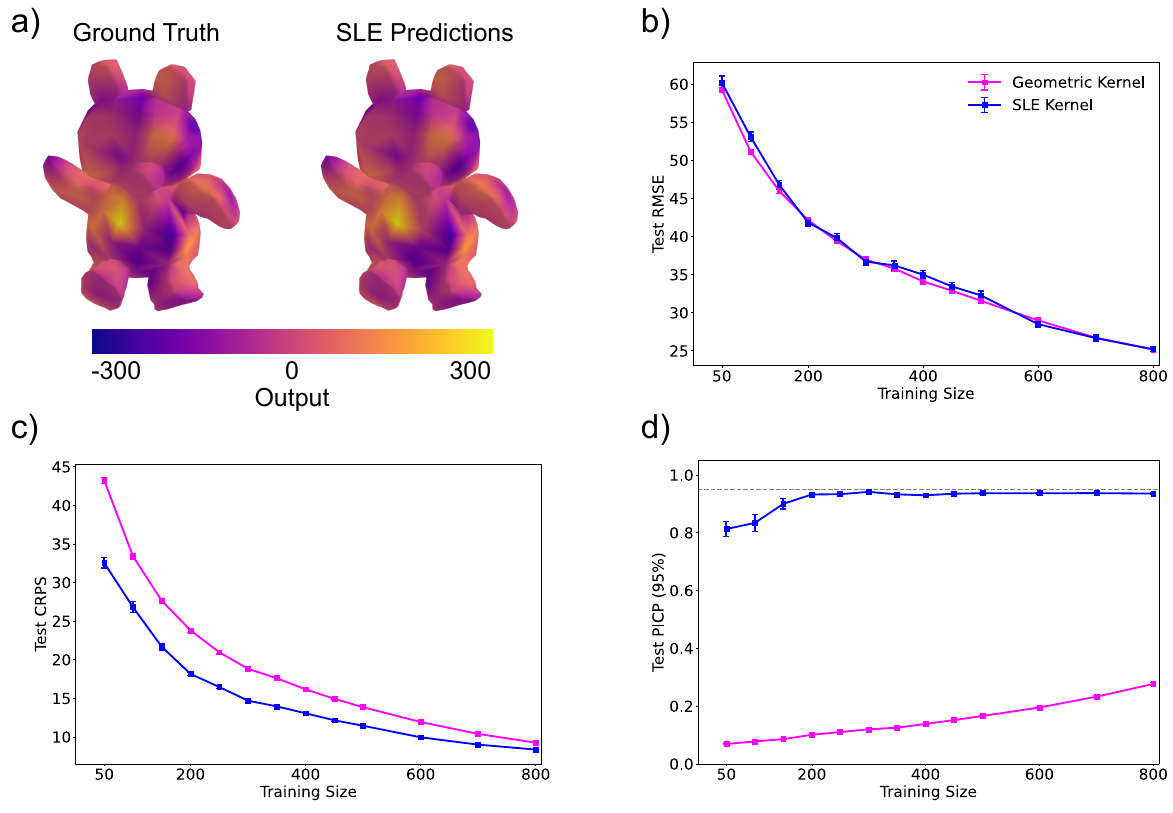}
    \caption{Benchmarking the SLE kernel against the Geometric 
    kernel~\citep{Mostowsky2025} on the Teddy Bear manifold. (a) Ground truth and SLE 
    predicted output values represented by the color of the mesh vertices. 
    (b) Test RMSE, (c) CRPS, and (d) PICP (95\% interval) as a function of training 
    dataset size for the SLE kernel and the Geometric kernel. Error bars represent the 
    standard error of the mean across 30 trials.}
    \label{fig:teddy_comparison}
\end{figure}

\section{Additional Information on Experiments}

\subsection{Dragon Manifold}
\label{app:more_info_dragon}

\textbf{Dataset}. The experiment is conducted on the Stanford Dragon, a standard 3D benchmark geometry represented as a triangulated surface mesh. The mesh and associated scalar field values are generated using Firedrake. The mesh contains two arrays: `vertices' of shape $(N, 3)$ storing the 3D Cartesian coordinates of all $N = 100{,}179$ mesh vertices, and `ground truth' of shape $(N,)$ storing the corresponding output. A precomputed symmetric geodesic distance matrix of shape $N \times N$ is also required. Each entry $(i,j)$ contains the shortest-path distance between vertex $i$ and vertex $j$ measured along the mesh surface. The exact geodesic distance matrix was computed once using a graph-based shortest-path algorithm and stored for later extraction. This pre-computation took approximately 12 hours on a single CPU. The dominant memory cost is the $N \times N$ pairwise distance matrix and the derived covariance matrix, both of which store $N^2$  floating-point values. Once the distance matrix is available, training the GP model is inexpensive: with 100 training points, hyperparameter optimization completes in under a minute, scaling to roughly 3–4 minutes for 1000 points on a single CPU.

\textbf{Experimental Design}. To assess how performance scales with training set size, the GP is trained across multiple sizes, with 30 independent random trials per size. Training configurations (random seeds and training indices) are pre-generated and stored in a JSON file. A fixed held-out test set of 5,000 points is shared across all experiments to ensure consistent evaluation. An initial single exploratory run (seed 2355, $n_{\text{train}} = 1{,}000$, $n_{\text{test}} = 5{,}000$) is first conducted using global hyperparameter optimization to verify the setup before launching the full experiment.

\textbf{GP Model Setup}. The GP model is implemented using the `gpCAM` library with three components. They are implemented to match the Riemannian Kernel implementation discussed in~\citep{borovitskiy2020Matern}.

\textit{Prior Mean}. A zero prior mean is used throughout.

\textit{Noise}. A homoscedastic zero-mean white noise with variance of $10^{-5}$ is used, reflecting non-noisy data, while still ensuring numerical stability.

\textit{Kernel}. The geodesic SLE kernel is used. Each input point is mapped to a feature vector by evaluating the bump function at its geodesic distances to all $n_{\text{train}}$ training points, and a $\nu=3/2$ Matérn kernel is then applied to the Euclidean distance between these embeddings. Concretely, for a point $\mathbf{x}$, the $i$-th component of its feature vector is:

$$\phi(x)_i = \text{bump}(d_g(x, z_i),\ r,\ \beta=1), \quad i = 1, \ldots, n_{\text{train}}$$

where $d_g$ denotes geodesic distance and the bump function is:

$$\text{bump}(d, r, \beta) = \begin{cases} \exp\!\left(\dfrac{-\beta}{1 - d^2/r^2} + \beta\right) & \text{if } d < r \\ 0 & \text{otherwise} \end{cases}$$

The kernel value between two points is then:

$$k(x, x') = \sigma_f^2 \cdot \left(1 + \frac{\sqrt{3}\,\|\phi(x) - \phi(x')\|_2}{\ell}\right) \exp\!\left(-\frac{\sqrt{3}\,\|\phi(x) - \phi(x')\|_2}{\ell}\right)$$

The kernel has three hyperparameters: signal variance $\sigma_f^2$, bump radius $r$, and Matérn length scale $\ell$. Since `gpCAM` passes 3D coordinates rather than vertex indices to the kernel, vertex indices are recovered at evaluation time via a k-d tree nearest-neighbor lookup over the full vertex array.

\textbf{Hyperparameter Bounds and Initialization}. Bounds are set adaptively per trial. The signal variance is bounded between $0.01 \cdot \text{Var}(y_{\text{train}})$ and $10 \cdot \text{Var}(y_{\text{train}})$, anchoring it to the observed scale of the target field. The bump radius is bounded between the minimum and maximum non-zero pairwise geodesic distances within the training set. The Matérn length scale is given broad, uninformative bounds of $[0.01,\ 100]$. All hyperparameters are initialized to the midpoint of their respective bounds.

\textbf{Training and Evaluation}. Hyperparameters are optimized by maximizing the log marginal likelihood. Optimization is performed using Markov Chain Monte Carlo (MCMC) with up to 4,000 iterations, providing more thorough exploration of the hyperparameter space. Results are saved incrementally after each trial, so the experiment can be interrupted and resumed without data loss.

Four metrics are computed on the held-out test set: \textit{RMSE} on the test set to measure fit and generalization; \textit{CRPS} (mean and standard deviation) as a proper scoring rule evaluating the full predictive distribution; and \textit{PICP} at the 95\% level, measuring the fraction of test points covered by the posterior predictive interval. A well-calibrated model should achieve PICP $\approx 0.95$.

\subsection{Teddy Bear Manifold}
\label{app:additional_info_on_teddy}
\textbf{Dataset}. The second experiment is conducted on a teddy bear mesh, loaded from an .obj file using the Mesh class from the Geometric Kernels library \citep{Mostowsky2025}. The geodesic distance matrix is computed using the same graph-based shortest-path approach as above and loaded directly into memory. This pre-computation took approximately one \rev{hour} on a single CPU. Once the distance matrix is available, training the GP model on the teddy bear dataset takes approximately 4 seconds for 100 training points and 2921 seconds (~49 minutes) for 1000 training points on a single CPU.

\textbf{Experimental Design}. The design mirrors that of the Dragon experiment: multiple training set sizes are evaluated with 30 independent random trials per size, using a fixed held-out test set across all trials. An initial exploratory run is performed with seed 3256, $n_{\text{train}} = 50$, and $n_{\text{test}} = 50$, using global optimization to verify the setup.

\textbf{GP Model Setup}. The same GP framework is used, with the following differences.

\textit{Input representation}. Rather than passing 3D vertex coordinates to the kernel, vertex indices are passed directly as integer-valued inputs of shape $(n, 1)$. This eliminates the need for the k-d tree nearest-neighbor lookup used in the Dragon experiment, since geodesic distances can be indexed directly.

\textit{Kernel}. The same geodesic SLE kernel structure is used, with the bump function defined as above. However, the inner stationary kernel is replaced with a $\nu=5/2$ Matérn:
$k(x, x') = \sigma_f^2 \cdot \left(1 + \frac{\sqrt{5}\,D}{\ell} + \frac{5D^2}{3\ell^2}\right)\exp\!\left(-\frac{\sqrt{5}\,D}{\ell}\right), \quad D = \|\phi(x) - \phi(x')\|_2$.
The $\nu=5/2$ Matérn is used here to be consistent with the kernel order adopted in \citep{Mostowsky2025}.

\textit{Bump amplitude}. Unlike the Dragon experiment, where the bump amplitude was fixed at 1, here it is treated as a free hyperparameter $a$, allowing the model to control the scale of the feature embedding independently of the signal variance.

\textit{Noise}. Rather than a fixed noise level, the noise variance is also treated as a learnable hyperparameter, reflecting greater uncertainty about the noise level in this dataset.

\textbf{Hyperparameter Bounds}. The model has five hyperparameters: signal variance $\sigma_f^2 \in [10^2, 10^6]$; bump radius $r$ bounded by the minimum and maximum non-zero pairwise geodesic distances within the training set; bump amplitude $a \in [0.1, 50]$; Matérn length scale $\ell \in [10^{-3}, 40]$; and noise variance $\in [10^{-6}, 10]$. All hyperparameters are initialized at the midpoint of their bounds.

\textbf{Training and Evaluation}. Hyperparameters are optimized by maximizing the log marginal likelihood using global optimization with up to 4,000 iterations. The same four metrics are reported: train and test RMSE, CRPS, and PICP at the 95\% level.

\subsection{X-Ray Scattering Data Disguised as Distributions}
\rev{\label{app:additional_info_saxs}}

\textbf{Dataset}. The third experiment uses a dataset of 500 synthetic Small Angle X-ray Scattering (SAXS) images designed to mimic real-world patterns from oriented soft-matter thin films measured at synchrotron beamlines; examples are shown in Appendix Figure~\ref{fig:saxs}. The target output is the effective elastic modulus, $y$, computed along the measurement axis. Two precomputed pairwise distance matrices between images are used: a Wasserstein distance (WD) matrix and a Sliced Wasserstein distance (SWD) matrix with 50 projections, both of full size $n \times n$ where $n$ is the total number of images. The experiment is run separately for each metric, and results are compared against a baseline GP using a standard  $\nu=3/2$ Matérn kernel applied directly to the sliced Wasserstein distances, without the bump embedding. The Wasserstein and sliced Wasserstein distance matrices were precomputed once and stored for later use; this pre-computation took approximately one hour on 32 CPUs. Once the distance matrix is available, training the GP model on the SAXS dataset takes approximately 11 seconds for 50 training points and 537 seconds (~9 minutes) for 500 training points on a single CPU.

\begin{figure}[ht]
    \centering
    \includegraphics[width=\linewidth]{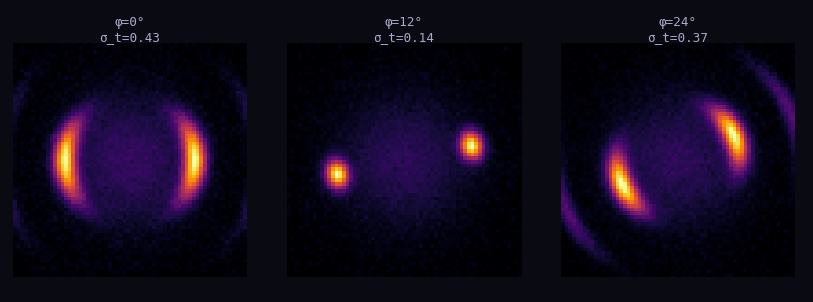}
    \caption{Three examples of the 500 SAXS images for our computational experiments.}
    \label{fig:saxs}
\end{figure}

\textbf{Experimental Design}. The same multi-trial design is used, with 30 independent random trials per training set size and a fixed held-out test set across all trials. An initial exploratory run is performed with a 70/30 train/test split (random state 3345) and global optimization to verify the setup. As in the mesh experiments, image indices are passed as integer-valued inputs of shape $(n, 1)$, and the selected distance matrix is indexed directly.

\textbf{GP Model Setup}. The SLE kernel with the bump function and $\nu=3/2$ Matérn inner kernel is used, as described above. The key structural difference from the mesh experiments is that the pairwise distances here are not geodesic distances on a surface but rather optimal transport distances between image distributions, specifically WD or SWD. The bump embedding therefore maps each image into a feature vector encoding its transport-distance neighborhood structure relative to the training set.
Two additional hyperparameters are introduced compared to the Dragon experiment. The prior mean is no longer fixed at zero but is instead a learnable constant $\mu_0$, bounded between the minimum and maximum observed training output values. This is appropriate here since the elastic modulus has a non-zero global mean that may vary across trials. The noise variance is also treated as a free hyperparameter, as in the teddy bear experiment.

\textbf{Hyperparameter Bounds}. The model has six hyperparameters in total: signal variance $\sigma_f^2 \in [1, 10^{10}]$; bump radius $r$ bounded between the minimum and twice the maximum non-zero pairwise distance within the training set (the upper bound is doubled to allow the bump to cover the full range of distances); bump amplitude $a \in [0.01, 10]$; Matérn length scale $\ell \in [10^{-4}, 100]$; noise variance $\in [10^{-5}, 10]$; and mean offset $\mu_0 \in [0, 10]$. All hyperparameters are initialized at the midpoint of their bounds.

\textbf{Training and Evaluation}. Hyperparameters are optimized by maximizing the log marginal likelihood using global optimization with up to 10,000 iterations. The same four metrics are reported: train and test RMSE, CRPS, and PICP at the 95\% level.

\section{Ablation Study}
\label{app:ablation}

To assess the contribution of the bump function in the SLE kernel, we conduct an ablation study comparing two embedding strategies: the proposed bump embedding, which applies a compactly supported smooth mask to the geodesic distances, and a distance-based embedding, which uses the raw geodesic distance vector to train landmarks directly. \rev{We note that the distance-based embedding is precisely the raw-distance embedding map underlying D2KE~\citep{wu2018d2ke} and classical landmark constructions, so this ablation doubles as a controlled empirical comparison against that family, isolating the contribution of the bump construction (cf.\ Section~\ref{sec:relatedwork}).} All other components of the kernel are held identical, including the inner Matérn covariance and the hyperparameter optimization procedure. We evaluate both variants on the Dragon and Teddy Bear meshes across a range of training set sizes, using three metrics: RMSE for predictive accuracy, CRPS for probabilistic sharpness, and PICP at the 95\% level for uncertainty calibration. The two manifolds offer complementary perspectives: the Dragon presents a geometrically intricate surface with thin features, while the Teddy Bear is a smoother, more compact shape.

\subsection{Dragon Manifold}

For the Dragon ablation, we use a target function constructed as a sum of sinusoids of geodesic distances from multiple well-separated source vertices with incommensurate periods. This deconfounded ground truth ensures that the function is genuinely manifold-defined but cannot be reduced to a one-dimensional function of distance from any single source, providing a fair test of both embeddings.

Figure~\ref{fig:ablation_dragon_comparison} reports the ablation results on the Dragon mesh. In terms of point prediction Figure~\ref{fig:ablation_dragon_comparison}(a), the two embeddings track each other closely across the entire training range, with no meaningful difference in RMSE. The same pattern is observed in CRPS Figure~\ref{fig:ablation_dragon_comparison}(b), where the two methods produce nearly identical probabilistic sharpness throughout. The picture changes for uncertainty calibration Figure~\ref{fig:ablation_dragon_comparison}(c). The bump embedding reaches the nominal 95\% PICP earlier and more reliably than the distance-based embedding, with a clear advantage maintained up to approximately 650 training points. Beyond that point, both methods converge toward the target coverage, and the distinction becomes minor. This indicates that, even when the two methods produce comparable point predictions, the sparse bump representation yields better-calibrated predictive variances across most of the practically relevant training range — a regime where principled uncertainty quantification is most valuable.

\begin{figure}[htbp]
\centering
\includegraphics[width=\textwidth]{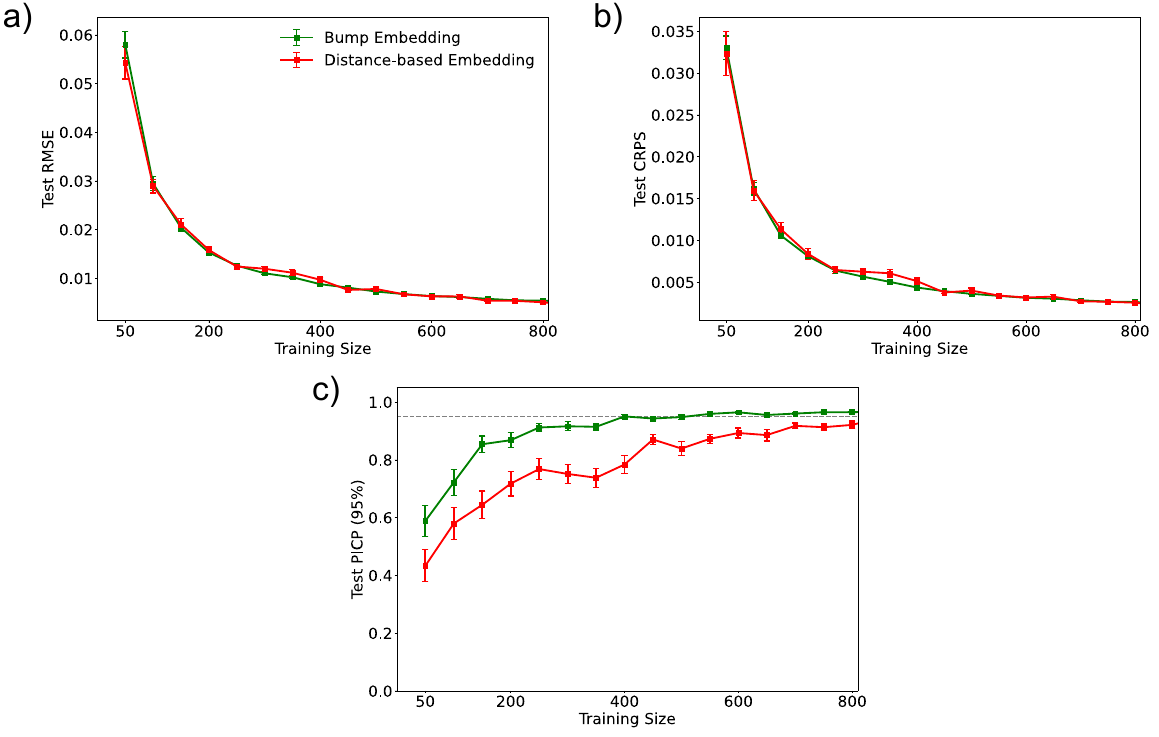}
\caption{Ablation study comparing bump embedding and distance-based embedding on the Dragon mesh. (a) Test RMSE, (b) Test CRPS, and (c) Test PICP at the 95\% confidence level, each as a function of training set size, averaged over 30 independent trials with error bars denoting one standard error. The two embeddings achieve essentially identical point prediction and probabilistic sharpness, while the bump embedding produces better-calibrated predictive intervals up to approximately 650 training points, after which the two methods converge to the nominal 95\% coverage.}
\label{fig:ablation_dragon_comparison}
\end{figure}

\subsection{Teddy Bear Manifold}

For the Teddy Bear ablation, we use the same ground truth as in the main experiments (see Appendix~\ref{app:additional_info_on_teddy} for full experimental details), allowing the ablation to be interpreted directly in the context of the corresponding evaluation. Figure~\ref{fig:ablation_teddy_comparison} reports the results. In terms of RMSE Figure~\ref{fig:ablation_teddy_comparison}(a), the distance-based embedding is competitive at small training sizes, where the embedding dimensionality remains manageable relative to the number of observations. As the training set grows, however, the raw distance embedding operates in an increasingly high-dimensional feature space without any regularization of its structure, and predictive accuracy degrades relative to the bump embedding. The bump embedding acts as a sparse, locally adaptive dimensionality reduction: each point is described only by its relationships to nearby landmarks, producing a compact, geometrically meaningful representation that remains well-conditioned as the training size scales. The bump embedding opens a growing advantage beyond approximately 400 training points and achieves substantially lower error at 800 points.

The benefit of the bump embedding extends to uncertainty quantification on this manifold. Figure~\ref{fig:ablation_teddy_comparison}(b) shows that while the distance-based embedding achieves marginally lower CRPS at very small training sizes, the bump embedding overtakes it at approximately 200 training points and maintains consistently superior probabilistic sharpness thereafter, with the gap widening at larger training sizes. Figure~\ref{fig:ablation_teddy_comparison}(c) further confirms this picture: both methods converge toward the nominal 95\% PICP, but the bump embedding does so faster, with tighter error bars and more stable calibration across the full training size range.

Taken together, the Dragon and Teddy Bear ablations show that the relative merit of the bump embedding depends on the geometric complexity of the manifold and the training regime. On smoother manifolds such as the Teddy Bear, the bump embedding yields clear improvements in both prediction and uncertainty quantification at moderate to large training sizes. On more intricate manifolds such as the Dragon, the bump embedding matches the distance-based embedding in point prediction and probabilistic sharpness while providing better-calibrated uncertainty estimates across most of the training range. In both cases, the bump embedding provides better-calibrated uncertainty in the small-data regime, where principled uncertainty quantification matters most.

\begin{figure}[htbp]
    \centering
    \includegraphics[width=\textwidth]{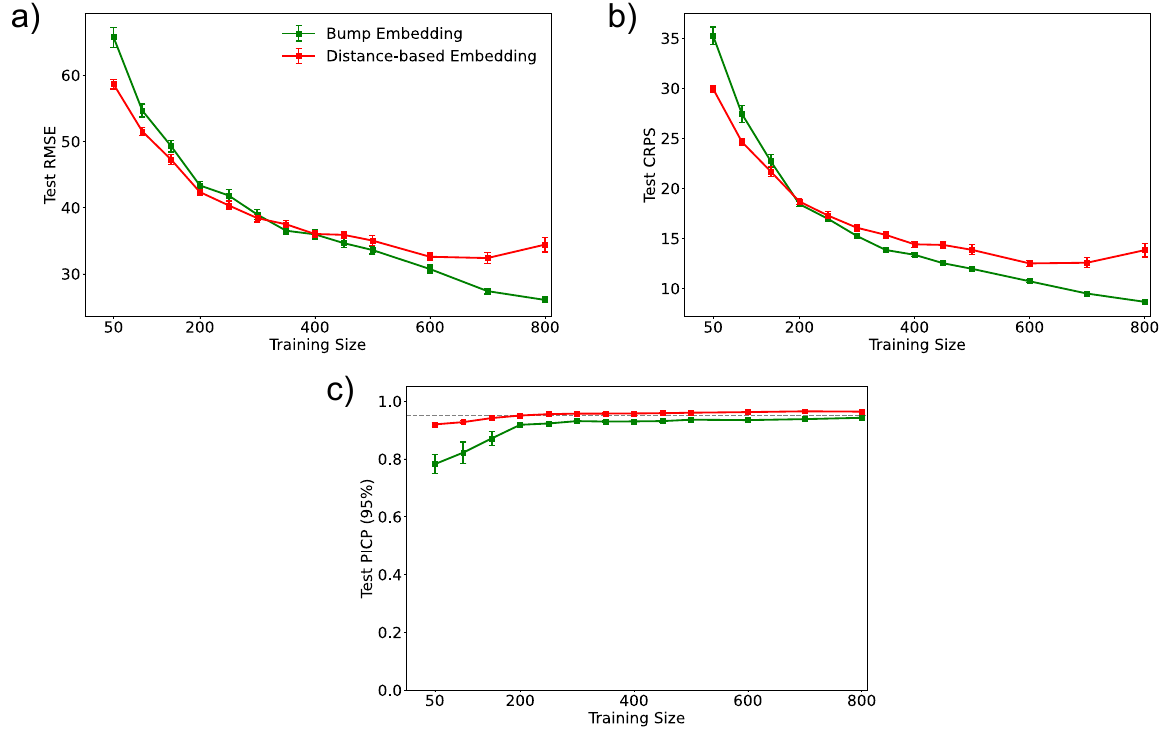}
    \caption{Ablation study comparing bump embedding and distance-based embedding on the teddy bear mesh. (a) Test RMSE, (b) Test CRPS, and (c) Test PICP at the 95\% confidence level, each reported as a function of training set size (50–800 points), averaged over 30 independent trials with error bars denoting one standard error. The distance-based embedding is competitive at small training sizes but degrades relative to the bump embedding as the feature space dimensionality grows with the number of landmarks. The bump embedding, which induces a sparse and locally adaptive representation of the manifold geometry, achieves lower prediction error and better-calibrated uncertainty estimates at moderate to large training sizes, with the crossover occurring at approximately 400 points for RMSE and 200 points for CRPS.}
    \label{fig:ablation_teddy_comparison}
\end{figure}

\rev{\section{Sensitivity Analyses}\label{app:sensitivity}

This section addresses the sensitivity of the SLE kernel to its central design choices: the bump support radius $r$, the bump amplitude $a$, and the choice of inner kernel applied to the embedding. We recall that in all main experiments the radius and amplitude are not hand-tuned but learned by marginal-likelihood maximization with data-adaptive bounds (Appendix~\ref{app:additional_info_on_teddy}), while the inner kernel family (e.g., Mat\'ern $\nu$ = 3/2 or $\nu$ = 5/2) is fixed by design choice, matched to the kernel order of the corresponding baseline; the analyses here characterize how performance varies away from the likelihood-selected radius and amplitude values, and how sensitive results are to the inner kernel choice itself. All three sweeps are conducted on the Teddy Bear manifold, following the same optimization procedure described in Appendix~\ref{app:additional_info_on_teddy}; the specific training size and number of independent trials used for each sweep are stated in the corresponding subsection below. A fixed held-out test set is shared across all grid points and trials in every sweep, consistent with the evaluation protocol used elsewhere in the paper.

\subsection{Bump Radius $r$}
The radius sweep uses a fixed training size of 300 points, the $\nu = 5/2$ Mat\'ern inner kernel (consistent with the main Teddy Bear experiment, Appendix~\ref{app:additional_info_on_teddy}), and 15 independent random trials per grid point. At each grid point, the radius $r$ is held fixed at its grid value---swept between the minimum and maximum nonzero pairwise geodesic distances within the training set (the same data-adaptive bounds used for $r$ during marginal-likelihood optimization elsewhere in the paper), up to $r \approx 56$---while the remaining hyperparameters (signal variance, bump amplitude, Mat\'ern length scale, and noise variance) are re-optimized by marginal-likelihood maximization, following the same optimization procedure described in Appendix~\ref{app:additional_info_on_teddy}. This isolates the effect of the radius after the model has been allowed to compensate through its remaining degrees of freedom, rather than showing raw sensitivity under an otherwise frozen model.

Figure~\ref{fig:radius_ablation_teddy_comparison} reports test RMSE, CRPS, and PICP as a function of $r$, together with the corresponding re-optimized log marginal likelihood. Predictive accuracy is highly sensitive to $r$ at the small end of the range: as $r$ shrinks toward zero, each embedding coordinate activates only a vanishingly small neighborhood, starving the kernel of local distance information even after re-optimizing the remaining hyperparameters, and both RMSE and CRPS rise sharply. Both metrics reach a minimum around $r \approx 13$–$15$ and then increase only mildly and monotonically thereafter, settling into a broad, shallow plateau for $r$ beyond approximately 20 that persists to the upper bound of the sweep. PICP shows a markedly different pattern: coverage is reasonable near $r = 0$, dips sharply to roughly 0.50 at very small nonzero radii---a regime in which the embedding, even with the remaining hyperparameters re-optimized, is expressive enough to fit the mean well but too locally constrained to produce well-calibrated variance estimates---and then recovers quickly, exceeding 0.90 by $r \approx 10$ and drifting upward toward the nominal 0.95 level as $r$ grows further, with the closest approach to the target near the upper end of the sweep.

\begin{figure}[htbp]
    \centering
    \includegraphics[width=\textwidth]{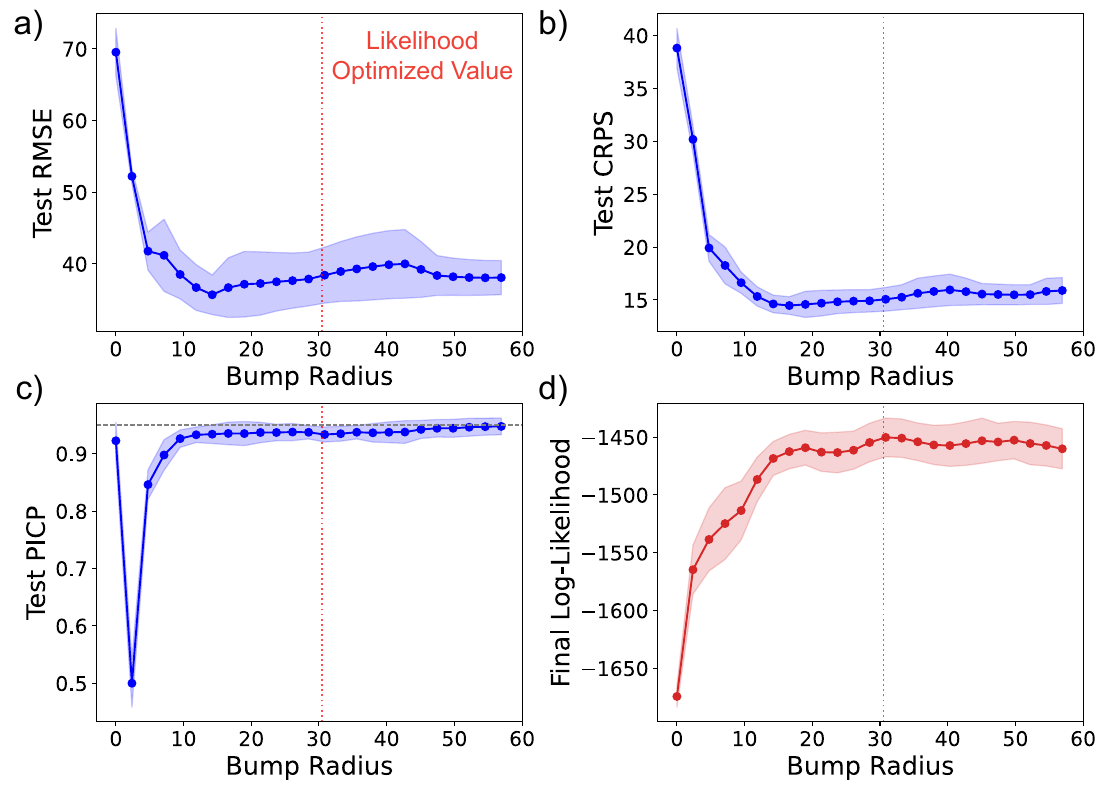}
    \caption{Sensitivity of the SLE kernel to the bump radius $r$ on the Teddy Bear manifold, at a fixed training size of 300 points with the $\nu = 5/2$ Mat\'ern inner kernel and 15 independent trials per grid point. At each grid point $r$ is held fixed while the remaining hyperparameters (signal variance, bump amplitude, Mat\'ern length scale, and noise variance) are re-optimized by marginal-likelihood maximization, isolating the effect of $r$ after the model has been allowed to compensate through its other degrees of freedom. (a) Test RMSE, (b) Test CRPS, (c) Test PICP at the 95\% level (dashed horizontal line marks nominal coverage), and (d) the corresponding re-optimized log marginal likelihood, each as a function of $r$. The red dotted vertical line marks the radius selected by unconstrained marginal-likelihood maximization in the main Teddy Bear experiment (Appendix~\ref{app:additional_info_on_teddy}). Shaded bands denote one standard error across trials.}
    \label{fig:radius_ablation_teddy_comparison}
\end{figure}

The re-optimized log-likelihood tracks this same transition (Figure~\ref{fig:radius_ablation_teddy_comparison}d): it rises sharply from its worst value at $r \approx 0$ and plateaus for $r$ beyond roughly 15–20, mirroring the RMSE/CRPS plateau and indicating that the marginal-likelihood surface itself, not just the point-prediction metrics, favors radii in this broad mid-to-large range over very small ones. Taken together, these results indicate a mild tension between sharpness and calibration: the radius minimizing RMSE and CRPS ($r \approx 13$–$15$) is somewhat smaller than the radius optimizing PICP, though the accuracy cost of choosing a larger, better-calibrated radius is small, since RMSE and CRPS remain within the flat plateau across this region. The value selected by unconstrained marginal-likelihood maximization in the main Teddy Bear experiment ($r \approx 30$, dotted line in Figure~\ref{fig:radius_ablation_teddy_comparison}) falls within this plateau, consistent with the intended role of $r$ as a data-adaptive length-scale analog rather than a parameter requiring manual tuning.

\subsection{Bump Amplitude $a$}

The amplitude sweep uses the same protocol as the radius sweep: a fixed training size of 600 points (it was 300 in the radius sweep), the $\nu = 5/2$ Mat\'ern inner kernel, and 13 independent trials per grid point, with signal variance, bump radius, length scale, and noise variance re-optimized at each fixed value of $a$ by marginal-likelihood maximization, following Appendix~\ref{app:additional_info_on_teddy}. The sweep spans $a \in [0, 50]$, the same bound used during hyperparameter learning in the main experiment.

Figure~\ref{fig:amplitude_ablation_teddy_comparison} shows that the test metrics (RMSE, CRPS, PICP, and log-likelihood) are essentially flat over $a \in [0, 22]$: RMSE and CRPS sit at or near their best values, log-likelihood is at or near its maximum, and PICP is mildly below the nominal 0.95 level. Beyond $a \approx 22$, this stability breaks down: log-likelihood declines steadily for the remainder of the sweep, RMSE and CRPS both worsen substantially, and PICP drifts upward past nominal coverage toward mild over-confidence-in-reverse (over-coverage, $\approx 0.96$) by $a = 50$.

\begin{figure}[htbp]
    \centering
    \includegraphics[width=\textwidth]{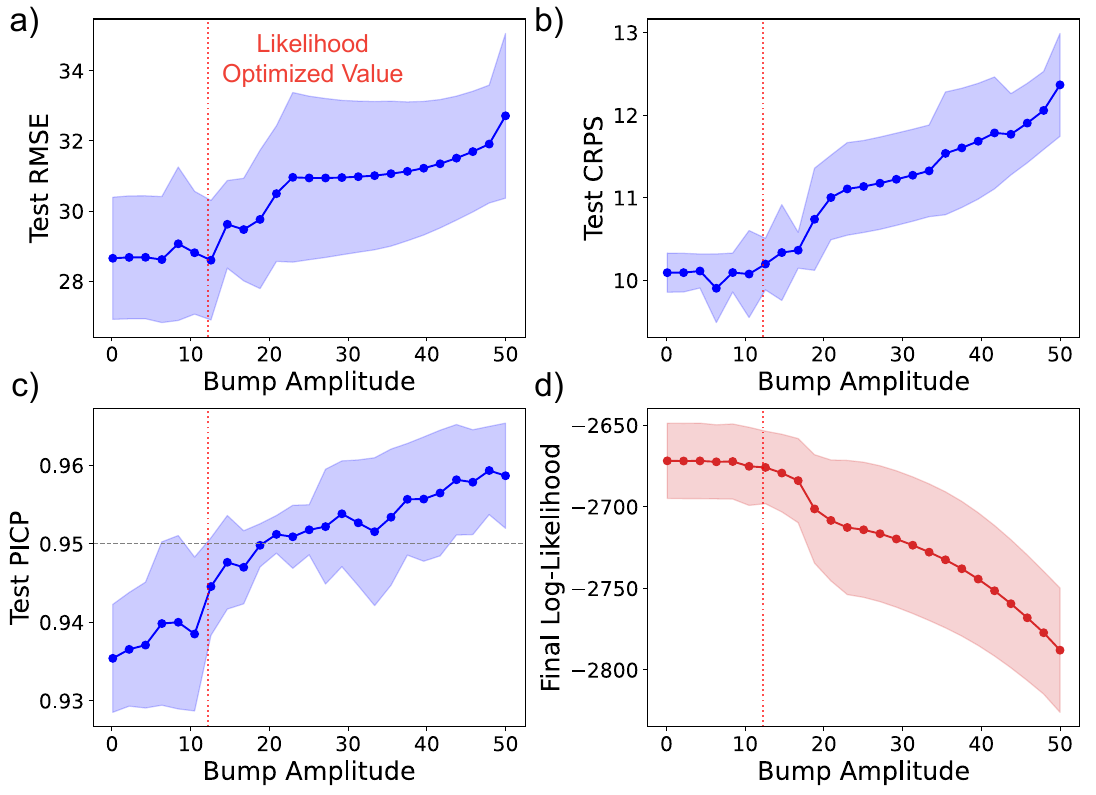}
    \caption{Sensitivity of the SLE kernel to the bump amplitude $a$ on the Teddy Bear manifold, at a fixed training size of 600 points with the $\nu = 5/2$ Mat\'ern inner kernel and 13 independent trials per grid point. At each grid point, $a$ is held fixed while the remaining hyperparameters (signal variance, bump radius, Mat\'ern length scale, and noise variance) are re-optimized by marginal-likelihood maximization, isolating the effect of $a$ after the model has been allowed to compensate through its other degrees of freedom. (a) Test RMSE, (b) Test CRPS, (c) Test PICP at the 95\% level (dashed horizontal line marks nominal coverage), and (d) the corresponding re-optimized log marginal likelihood, each as a function of $a$. The red dotted vertical line marks the amplitude selected by unconstrained marginal-likelihood maximization in the main Teddy Bear experiment (Appendix~\ref{app:additional_info_on_teddy}). Shaded bands denote one standard error across trials.}
    \label{fig:amplitude_ablation_teddy_comparison}
\end{figure}

Figure~\ref{fig:amplitude_discussion_ablation_teddy_comparison} makes explicit the mechanism underlying this pattern by tracking the re-optimized hyperparameters themselves rather than only the resulting predictive metrics. Over $a \in [0, 22]$, the flatness of the test metrics in Figure~\ref{fig:amplitude_ablation_teddy_comparison} is not because the underlying model is static — it is because two hyperparameters are actively compensating for the growing amplitude. Because $a$ rescales every nonzero entry of the embedding before the Euclidean distance $\|\phi(x) - \phi(x')\|$ is formed, increasing $a$ inflates typical inter-point distances in embedding space; the re-optimized Mat\'ern length scale $\ell$ and bump radius $r$ both increase steadily over this range (Figure~\ref{fig:amplitude_discussion_ablation_teddy_comparison}b–c) to offset this, keeping the effective kernel — and hence predictive performance — nearly unchanged. Signal variance (Figure~\ref{fig:amplitude_discussion_ablation_teddy_comparison}a), by contrast, remains comparatively flat and noisy over this same range, indicating it plays little role in the compensation while $\ell$ and $r$ still have room to grow.

\begin{figure}[htbp]
    \centering
    \includegraphics[width=\textwidth]{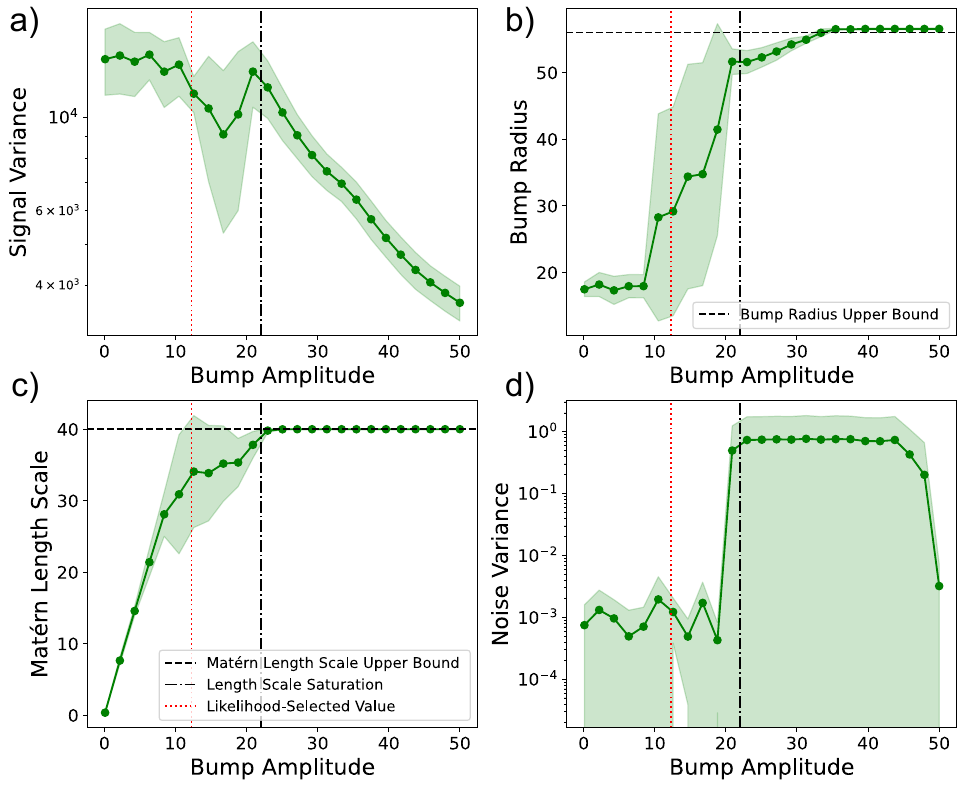}
    \caption{Re-optimized hyperparameters as a function of bump amplitude $a$, corresponding to the sweep in Figure~\ref{fig:amplitude_ablation_teddy_comparison}. At each grid point, signal variance (a), bump radius (b), Matérn length scale (c), and noise variance (d) are re-optimized by marginal-likelihood maximization while $a$ is held fixed. The dash-dotted black line marks the amplitude at which the length scale saturates at its upper bound ($a \approx 22$); the dotted red line marks the amplitude selected by unconstrained marginal-likelihood maximization in the main Teddy Bear experiment ($a=12.25$). Dashed black lines in (b) and (c) mark the configured upper bounds on bump radius (56) and length scale (40), respectively. Shaded bands denote one standard deviation across trials.}
    \label{fig:amplitude_discussion_ablation_teddy_comparison}
\end{figure}

This compensation is only possible while $\ell$ and $r$ have room to grow, and both reach their fixed upper bounds within the sweep. The length scale saturates first, reaching its configured upper bound of 40 at $a \approx 22$ (Figure~\ref{fig:amplitude_discussion_ablation_teddy_comparison}c); the radius continues increasing for a time afterward, reaching its own upper bound of 56 only around $a \approx 35$ (Figure~\ref{fig:amplitude_discussion_ablation_teddy_comparison}b). Once the length scale can no longer increase to track $a$, it is no longer sufficient on its own to keep the effective kernel unchanged, and the model falls back on two alternate mechanisms: signal variance begins declining steadily from that point onward (Figure~\ref{fig:amplitude_discussion_ablation_teddy_comparison}a), and noise variance jumps sharply, from a small, stable value below $10^{-2}$ to roughly 0.6–0.8 (Figure~\ref{fig:amplitude_discussion_ablation_teddy_comparison}d). This saturation-and-fallback sequence — length scale saturating first, radius following, then signal variance and noise variance absorbing the remainder — is the direct explanation for the divergence in RMSE, CRPS, and log-likelihood beyond $a \approx 22$ in Figure~\ref{fig:amplitude_ablation_teddy_comparison}, rather than any qualitative change in the kernel's locality. The sharp rise in noise variance, in particular, explains the mild PICP over-coverage observed in the same regime, since inflated noise variance directly widens predictive intervals, regardless of whether the underlying fit is improving. Noise variance drops again at $a = 50$, the extreme edge of the sweep range; since predictive performance is already substantially degraded throughout this saturated regime and this point simply marks the boundary of the search space rather than a qualitatively new operating condition, we do not interpret this final drop further.

The amplitude selected by unconstrained marginal-likelihood maximization in the main Teddy Bear experiment ($a \approx 12.25$, red dotted line in Figures~\ref{fig:amplitude_ablation_teddy_comparison} and ~\ref{fig:amplitude_discussion_ablation_teddy_comparison} falls well within the region where $\ell$ and $r$ can still freely compensate for $a$, comfortably below the saturation point at $a \approx 22$ where predictive performance begins to degrade. This also clarifies why fixing $a = 1$ in the Dragon experiment, rather than learning it, incurs no cost: at a value well inside this compensating region, amplitude, length scale, and radius trade off freely, and the model retains its full expressivity regardless of which specific value of $a$ is chosen within this regime.

\subsection{Inner Kernel}

For this sweep, the three most common choices of stationary kernel---Mat\'ern $\nu=5/2$, Mat\'ern $\nu=3/2$, and RBF — are each applied to the same bump embedding at a fixed training size of 300 points, with all remaining hyperparameters (signal variance, bump radius, bump amplitude, kernel length scale, and noise variance) independently re-optimized for each kernel by marginal-likelihood maximization, following the same protocol used elsewhere in Appendix~\ref{app:additional_info_on_teddy}. This isolates the effect of the inner kernel's functional form from the effect of the embedding itself, which is held fixed across all three configurations.

Figure~\ref{fig:inner_kernel_ablation_teddy_comparison} shows that predictive accuracy is essentially insensitive to the choice of inner kernel: the RMSE (a) and CRPS (b) distributions for all three kernels overlap substantially, with nearly identical medians and interquartile ranges, and no kernel is a clear or consistent winner across 15 trials. This is consistent with the sparse bump embedding doing the bulk of the representational work, with the inner kernel's functional form acting as a comparatively minor modulation on top of an already well-conditioned, locally structured input space.

Calibration and marginal likelihood, however, do show a modest but consistent separation between kernel choices that predictive accuracy alone does not reveal. Both Mat\'ern variants achieve median PICP closer to the nominal 0.95 level (panel c), with Mat\'ern $\nu=3/2$ slightly ahead of $\nu=5/2$; RBF, by contrast, shows both a lower median PICP ($\approx 0.92$) and a wider spread extending well below nominal coverage, indicating a mild but noticeable tendency toward overconfident intervals relative to the Mat\'ern kernels. This pattern is not mirrored in the log-likelihood panel (d): RBF achieves the highest (least negative) median log-likelihood of the three, with Mat\'ern $\nu=3/2$ the lowest, while Mat\'ern $\nu=5/2$ falls in between. In other words, the kernel most favored by the marginal-likelihood objective (RBF) is also the kernel with the weakest calibration on held-out data, while the best-calibrated kernel (Mat\'ern $\nu=3/2$) has the lowest training-time marginal likelihood of the three. This is a useful reminder that marginal-likelihood maximization selects for in-sample fit and is not a direct proxy for held-out calibration, and it provides a concrete, data-driven justification for the paper's choice to fix the inner kernel to the Mat\'ern family, matched to the kernel order of the corresponding baseline, rather than treating it as a free hyperparameter selected by likelihood alone.

Taken together, these results support a qualified version of the intended narrative for this section: the inner kernel's effect on point-prediction accuracy is negligible, consistent with the embedding---not the inner kernel--- driving predictive performance, but its effect on uncertainty calibration is real, if modest, and argues for the deliberate, baseline-matched choice of a Mat\'ern inner kernel used throughout the main experiments rather than for treating the inner kernel as an arbitrary or inconsequential design choice.

\begin{figure}[htbp]
    \centering
    \includegraphics[width=\textwidth]{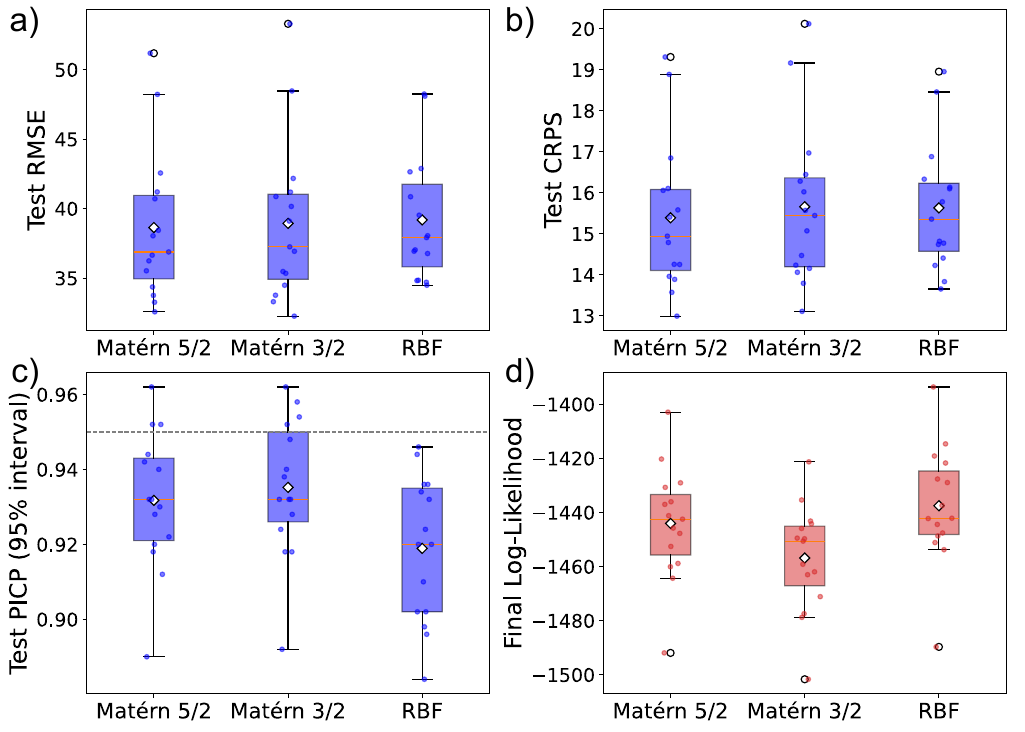}
    \caption{Sensitivity of the SLE kernel to the choice of inner kernel applied to the bump embedding, on the Teddy Bear manifold at a fixed training size of 300 points. For each of three inner kernel choices — Mat\'ern $\nu=5/2$, Mat\'ern $\nu=3/2$, and RBF — the remaining hyperparameters (signal variance, bump radius, bump amplitude, kernel length scale, and noise variance) are independently re-optimized by marginal-likelihood maximization, following the protocol described in Appendix 10.2. (a) Test RMSE, (b) Test CRPS, (c) Test PICP at the 95\% level (dashed horizontal line marks nominal coverage), and (d) the corresponding final log marginal likelihood. Box plots show the median (orange line), mean (white diamond), interquartile range (box), and full range excluding outliers (whiskers); individual trial values are overlaid as jittered points, with outliers outlined in black. Results are computed across 15 independent trials per kernel.}
    \label{fig:inner_kernel_ablation_teddy_comparison}
\end{figure}

\section{Empirical Conditioning of the Gram Matrix}\label{app:conditioning}
Theorem~\ref{thm:stability} predicts, under the bounded-overlap Assumption (A1),
that the SLE Gram matrix remains well-conditioned as the number of landmarks
$|\mathcal{D}|$ grows, in contrast to dense raw-distance embeddings. Here we
test this prediction directly. For the one-dimensional test function of
Figure~\ref{fig:comp}, Figure~\ref{fig:cond1d} shows that the condition number
is far better behaved for the SLE kernel --- growing only as
$\kappa_2 \sim N^{0.38}$ over $N \in [20, 1000]$, compared to
$\kappa_2 \sim N^{\approx 1}$ for the native (dense) distance-to-landmarks embedding. 
Both kernels are evaluated at the same
initial hyperparameters used in the runtime benchmark
(Table~\ref{tab:runtime_1d}), and both matrices are regularized by the
identical noise nugget $\sigma_n^2 = 0.01$ that the GP marginal-likelihood
solve actually uses, so the comparison isolates the effect of the embedding
itself rather than the regularizer. At small $N$, the two embeddings are
indistinguishable ($\kappa_{\text{native}}/\kappa_{\text{SLE}} \approx 0.9$ at
$N=20$), because the nugget floor dominates the smallest singular value on both
sides; as $N$ grows, the sparse-support geometry of the bump embedding caps the
effective feature dimension while the dense embedding's landmark coordinates
progressively concentrate, driving the ratio to
$\kappa_{\text{native}}/\kappa_{\text{SLE}} \approx 9.8$ at $N=1000$.

We note that this constitutes a conservative test of
Theorem~\ref{thm:stability}: in this benchmark the bump radius spans a constant
fraction of the domain, so the average overlap $\bar{s}$ grows linearly with
$N$ (Table~\ref{tab:runtime_1d}) and Assumption (A1) is deliberately
\emph{not} enforced --- yet conditioning still degrades dramatically more
slowly than for the dense embedding, and the gap opens exactly as the ambient
embedding dimension grows. This isolates dimensionality, rather than the
embedding per se, as the cause of the dense embedding's degradation,
consistent with the ablation results of Appendix~\ref{app:ablation}. Compact
support thus decouples the conditioning of $K$ from $|\mathcal{D}|$, keeping
SLE Gram matrices amenable to a numerically stable Cholesky factorization ---
and, consequently, to reliable posterior inference and hyperparameter
learning --- in regimes where the native embedding is already losing precision.

\begin{figure}
    \centering
    \includegraphics[width=0.8\linewidth]{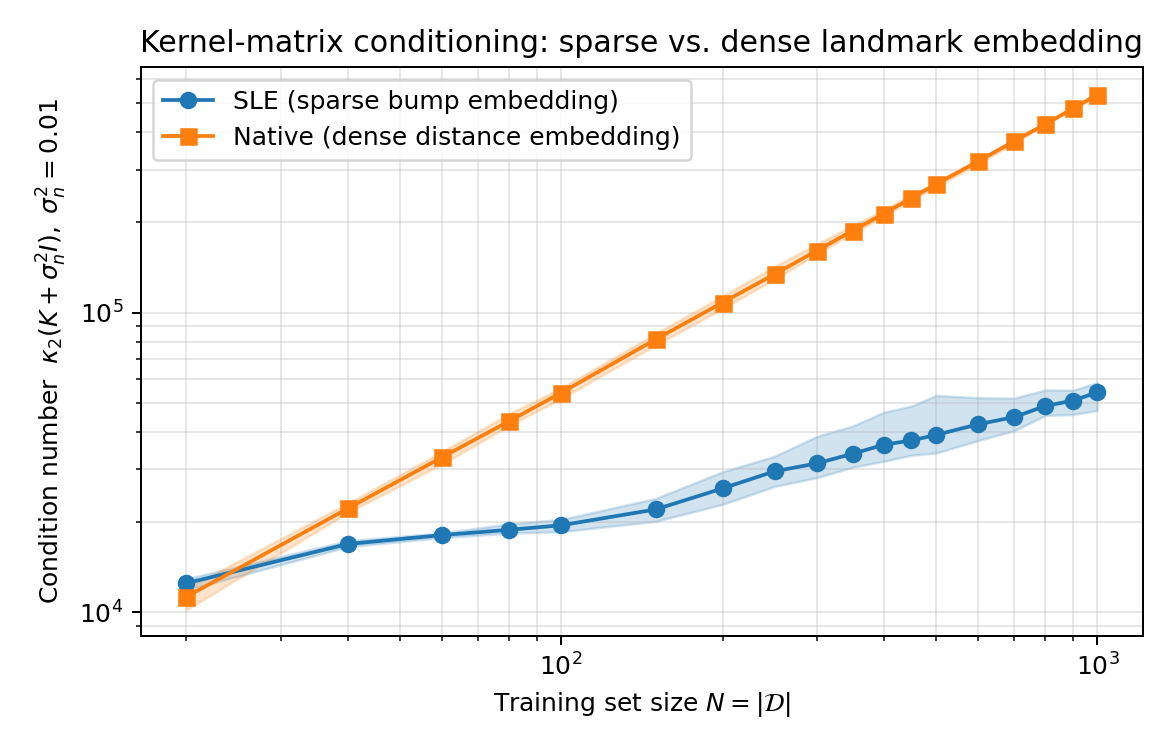}
    \caption{Gram-matrix condition number $\kappa_2(K + \sigma_n^2 I)$ with
    $\sigma_n^2 = 0.01$ for the SLE kernel versus the native (dense)
    distance-to-landmarks embedding on the 1D benchmark of
    Figure~\ref{fig:comp}. Lines are means and shaded bands min--max over 5
    random dataset draws; both kernels use the initial hyperparameters of
    Table~\ref{tab:runtime_1d} (no training). SLE conditioning grows only as
    $\kappa_2 \sim N^{0.38}$ while the native embedding grows as
    $\kappa_2 \sim N^{\approx 1}$ ($\approx 9.8\times$ worse at $N=1000$),
    consistent with Theorem~\ref{thm:stability}: compact bump support
    decouples the conditioning of $K$ from the ambient embedding dimension
    $|\mathcal{D}|$.}
    \label{fig:cond1d}
\end{figure}

\section{Empirical Distortion Analysis}\label{app:distortion}

Proposition~\ref{prop:injective} establishes that the SLE embedding is an injective function of the exact local distance profile, introducing no surrogate metric. Here, we complement this with an empirical comparison of embedding-space distances to native distances, alongside the sliced Wasserstein approximation. The analysis is performed on the SAXS distance matrices of Appendix~\ref{app:additional_info_saxs} using the embedding map directly, with all 500 images serving as landmarks; it characterizes the map itself and involves no fitted model.

Figure~\ref{fig:distortion}a plots the embedding distance $\lVert \phi(x) - \phi(x') \rVert$ against the true $W_2$ distance for all $124{,}750$ pairs, at $r = 0.18$, $\beta = 1$ and $a = 1$, giving a Spearman rank correlation of $\rho = 0.976$. Figure~\ref{fig:distortion}b repeats the comparison against the sliced Wasserstein distance ($\rho = 0.968$), and Figure~\ref{fig:distortion}c compares the sliced surrogate to the true $W_2$ directly ($\rho = 0.992$). The closeness of the values in (a) and (b) is a direct consequence of (c): since the sliced approximation is itself highly rank-correlated with the true $W_2$ distance, the SLE embedding's rank fidelity to one target is necessarily close to its rank fidelity to the other. Rank correlation is invariant to the bump amplitude, since $a$ rescales every embedding coordinate uniformly, but not to $\beta$, which is held at $1$ throughout.

\begin{figure}[t]
    \centering
    \includegraphics[width=\linewidth]{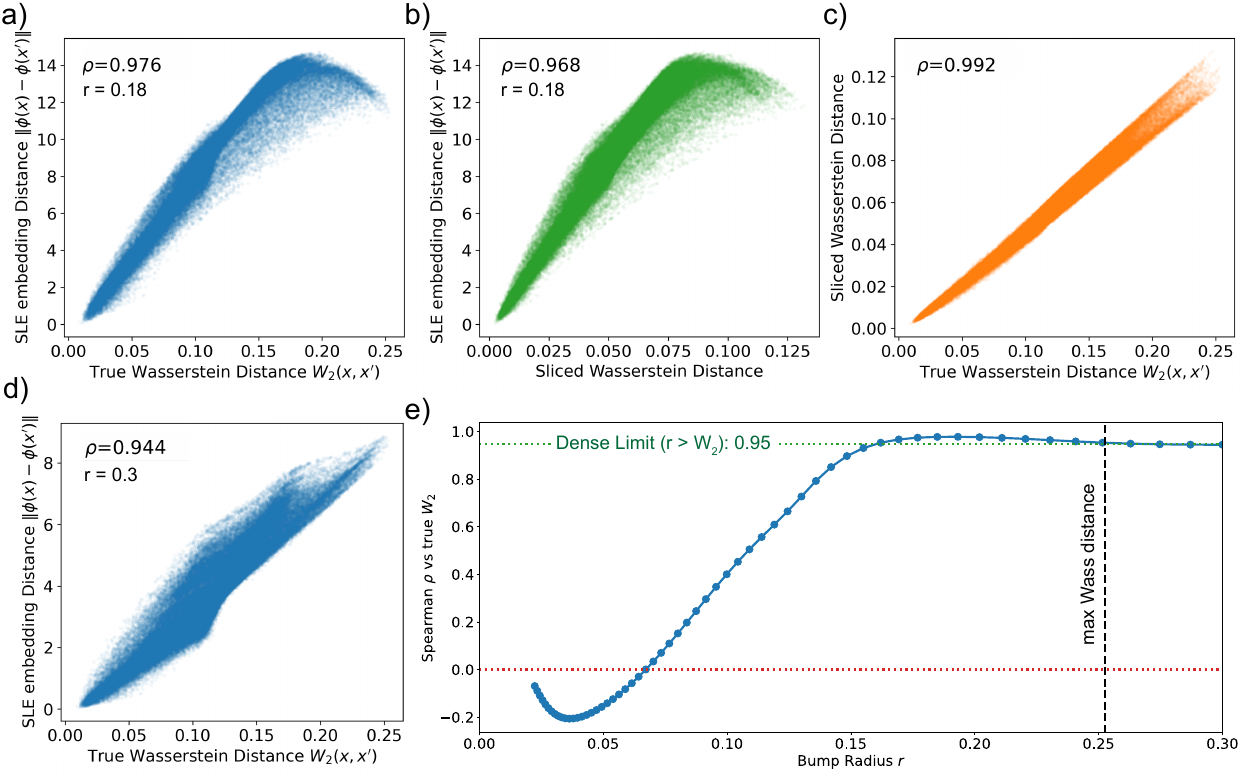}
    \caption{Empirical distortion analysis on the SAXS distance matrices, over all 124,750 pairs
    of the 500 images. (a) SLE embedding distance $\lVert\phi(x)-\phi(x')\rVert$ against the true
    Wasserstein-2 distance, at bump radius $r=0.18$, shape $\beta=1$, and amplitude $a=1$. (b) The
    same embedding distance against the sliced Wasserstein distance (50 projections). (c) Sliced
    Wasserstein against true $W_2$. (d) The same comparison as (a), at a larger bump radius
    $r=0.3$. (e) Rank correlation between the embedding distance and the true $W_2$ as a function
    of the bump radius $r$, over the same pair set. The green dotted line marks the globally
    supported (dense) limit obtained once $r$ exceeds the largest pairwise distance, and the black
    dashed line marks that distance. Spearman rank correlations are inset in panels (a)--(d). The
    analysis uses the embedding map applied directly to the precomputed distance matrices and is
    independent of any fitted GP model.}
  \label{fig:distortion}
\end{figure}

Within the bump reach, the embedding distance is a faithful increasing function of the native distance, with the scatter in Figure~\ref{fig:distortion}a being tight and the ordering essentially preserved. Beyond the reach, the relationship folds over, and this is a direct consequence of compact support rather than a distortion of the geometry. Once two inputs are separated by more than $r$, neither lies in the support of the other's bump, the coordinates that encode their mutual distance vanish, and the embedding distance reduces to the disjoint-support identity $\bigl( \lVert \phi(x) \rVert^{2} + \lVert \phi(x') \rVert^{2} \bigr)^{1/2}$ (Theorem~\ref{thm:concentration}). This residual quantity measures how densely each input's own neighborhood is populated rather than how far apart the two inputs are, and since the most widely separated pairs tend to lie in sparser regions, it decreases with $W_2$ over the far field. Far pairs are therefore no longer ordered by the embedding. This is precisely the far-field information that Proposition~\ref{prop:injective} states is discarded, and that Section~\ref{sec:discussion} records as a limitation of the deliberately local design; at $r = 0.18$ it concerns the 6.7\% of pairs separated by more than the radius.

Figure~\ref{fig:distortion}d repeats this comparison at a substantially larger radius, $r = 0.3$, larger than the largest pairwise distance in the dataset. At this radius every pair lies within reach of every bump, so the embedding is fully dense and the fold-over visible in Figure~\ref{fig:distortion}a is eliminated entirely: the scatter is monotonic across the full range of native distances. The resulting rank correlation, $\rho = 0.944$, is nonetheless slightly below the peak value obtained at $r = 0.18$, illustrating directly the shallow-maximum behavior quantified by the full sweep in Figure~\ref{fig:distortion}e: enlarging the radius removes the far-field fold-over but does not, on this dataset, improve rank fidelity beyond what is already achieved by a much sparser embedding.

Figure~\ref{fig:distortion}e shows that this behavior is not an artifact of the particular radius chosen. Rank fidelity rises steeply with $r$ and varies by less than $0.03$ for all $r \geq 0.15$, remaining high out to and beyond the largest pairwise distance in the dataset; the values shown in Figures~\ref{fig:distortion}a and~\ref{fig:distortion}d are drawn from this range. Two features of the curve are worth noting. First, the shallow maximum near $r \approx 0.18$ lies marginally above the dense limit reached once $r$ exceeds the data diameter and every bump is globally supported, so compact support costs nothing in rank fidelity relative to a dense embedding on this dataset while retaining exactly zero entries (Theorem~\ref{thm:sparsity}). Second, at very small radii, the correlation is mildly negative: below $r \approx 0.07$, almost no pair of inputs shares a landmark in common support, every embedding distance is governed by the density term above, and the ordering it induces runs weakly counter to the native one for the reason given in the preceding paragraph. This regime is far from any radius of practical interest and is shown only for completeness.

Taken together, the comparison confirms that the SLE embedding preserves the ordering of the native Wasserstein geometry within the region its bumps span, without introducing any projection or surrogate metric, and locates the boundary of that region exactly where Proposition~\ref{prop:injective} places it.

\section{Bump Function Visualization}\label{app:bump_vis}

Section~\ref{sec:methodology} introduces the bump function $b(d; a, r, \beta)$ of
Eq.~\eqref{eq:bump} (repeated here for reference)
\begin{equation}
b(d;\, a, r, \beta) =
\begin{cases}
a \, \exp\!\left(-\dfrac{\beta}{1 - d^{2}/r^{2}} + \beta\right), & d < r,\\[2ex]
0, & d \geq r,
\end{cases}
\label{eq:bump-app}
\end{equation}
as the map applied to each coordinate of the sparse landmark embedding
$\phi(x) = \big[b(d(x,x_1); a,r,\beta), \ldots, b(d(x,x_{|\mathcal{D}|}); a,r,\beta)\big]^{\top}$.
Figure~\ref{fig:bump-app} visualizes this function for fixed amplitude and support
radius ($a = 1$, $r = 1$) across three shape parameters,
$\beta \in \{0.5, 1, 5\}$, isolating the three properties on which the paper's
guarantees depend: compact support on $[0, r)$ (Theorems~\ref{thm:sparsity} and~\ref{thm:sparsity_scaling}, which give the sparsity of $\phi$ and hence of the
SLE embedding), $C^{\infty}$ smoothness on the support and at the boundary
$d = r$ (Theorem~\ref{thm:smooth}, which gives smoothness of the resulting
kernel), and strict positivity and monotonic decay on $[0, r)$ (the hypothesis
of Proposition~\ref{prop:injective}, which gives injectivity of $\phi$ with
respect to the local distance profile).

All three curves in Figure~\ref{fig:bump-app} are strictly positive and strictly
decreasing on $[0, 1)$ and drop to exactly zero at $d = r = 1$, with all
derivatives vanishing at the boundary rather than producing a kink --- this is
what allows a training point $x_i$ whose distance from $x$ satisfies
$d(x, x_i) \geq r$ to contribute \emph{exactly} zero to the embedding coordinate
$\phi_i(x)$, rather than a small but nonzero value, which is the mechanism
underlying the sparsity results of Section~\ref{sec:methodology}. The shape
parameter $\beta$ controls how the decay is distributed within the support. At
$\beta = 0.5$ the bump remains close to its peak value $a$ over most of $[0, r)$
and falls off steeply only as $d$ approaches the boundary, so that a landmark
contributes a nearly uniform weight until $x$ nears the edge of its support. At
$\beta = 5$, the bump decays rapidly from the origin and is already small well
before the boundary is reached, so that a landmark's contribution is sharply
concentrated on its immediate neighborhood, while the outer portion of the support contributes little. The $\beta = 1$ curve used throughout this work is intermediate between the two. Note that all three profiles share the same support radius $r$ and therefore the same sparsity pattern: $\beta$ changes the weighting \emph{within} the support, not which coordinates are nonzero. In the main text, $\beta$ is held fixed at $1$ and $r$ is learned by marginal-likelihood maximization with data-adaptive bounds (Appendices~\ref{app:more_info_dragon}--\ref{app:additional_info_saxs});
the sensitivity of predictive performance to $r$ and to the amplitude $a$ is reported separately in Appendix~\ref{app:sensitivity}.

\begin{figure}[t]
    \centering
    \includegraphics[width=0.6\linewidth]{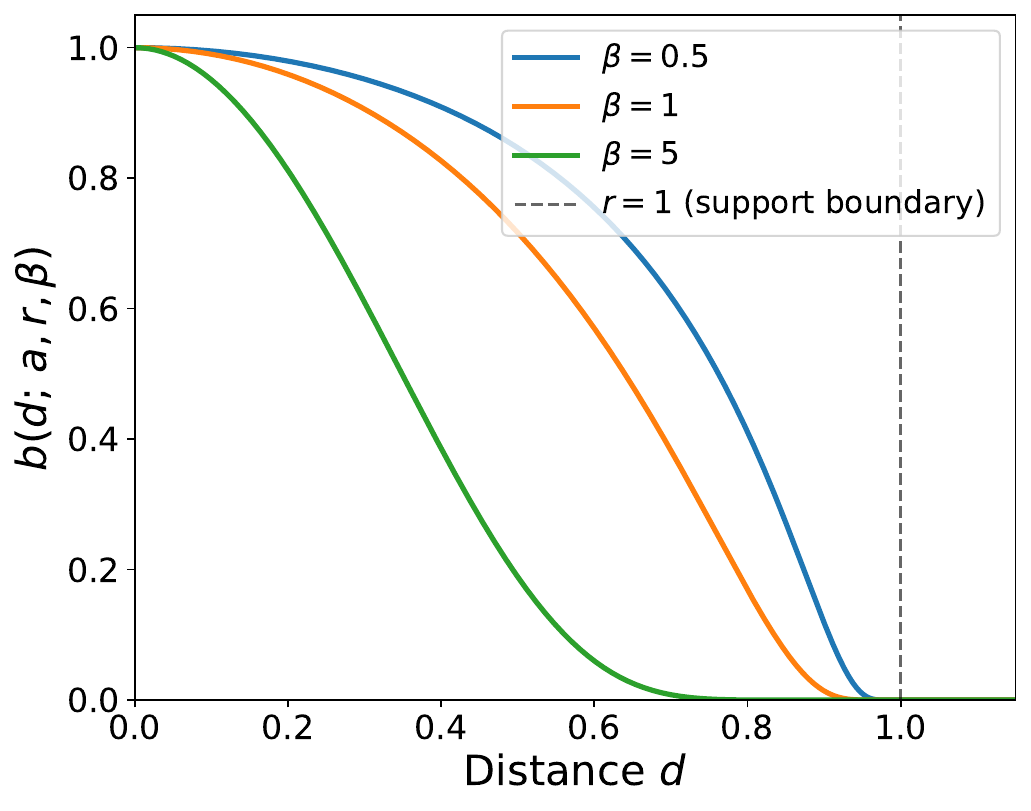}
    \caption{The bump function $b(d; a, r, \beta)$ of Eq.~\eqref{eq:bump-app}, plotted versus
    distance $d$ for fixed $a = 1$, $r = 1$, and three values of the shape parameter
    $\beta \in \{0.5, 1, 5\}$. The dashed vertical line marks the support boundary
    $d = r$, beyond which each embedding coordinate
    $\phi_i(x) = b(d(x,x_i); a,r,\beta)$ is identically zero. All curves attain the
    peak value $a$ at $d = 0$, are strictly positive and strictly decreasing on
    $[0, r)$, and vanish smoothly (all derivatives $\to 0$) at $d = r$, illustrating
    the compact support, $C^\infty$ smoothness, and strict monotonicity relied on by
    Theorems~\ref{thm:sparsity},~\ref{thm:sparsity_scaling}, and~\ref{thm:smooth} and Proposition~\ref{prop:injective}. The three curves differ only in how the
    decay is distributed across the support: larger $\beta$ decays more rapidly from
    the origin and lies below the smaller-$\beta$ curves at every $d \in (0, r)$,
    with the half-maximum crossing moving inward from $d \approx 0.76\,r$ at
    $\beta = 0.5$ to $d \approx 0.35\,r$ at $\beta = 5$.}
    \label{fig:bump-app}
\end{figure}
}

\rev{\section{Computational Cost}\label{app:cost}
Per kernel evaluation, the cost is $\mathcal{O}(s)$, where $s$ is the number of
overlapping nonzero embedding entries (Theorem~\ref{thm:scaling}), independent of
the ambient embedding dimension $|\mathcal{D}|$. The dominant cost is forming the
distance profiles between evaluation points and landmarks:
$\mathcal{O}(N\cdot|\mathcal{D}|)$ distance evaluations if computed naively. Two
observations put this cost in context. First, it is shared by every distance-based
competitor considered in this work: the sliced Wasserstein baseline computes the
same $N\cdot|\mathcal{D}|$ (sliced) distances, and the Riemannian and Geometric
kernels additionally require a Laplace--Beltrami eigendecomposition of the full
mesh. In all our experiments, the distance matrix is precomputed once and shared
across all methods and trials. Second, because only landmarks within radius $r$
contribute to the embedding, metric-space indexing structures --- cover trees,
vantage-point trees, or approximate nearest-neighbor search, which require only
the distance function and no coordinate representation --- reduce test-time
distance computation to range queries.

\paragraph{Memory and time relative to a standard distance-based kernel.}
Relative to a conventional distance-based kernel (e.g., a Mat\'ern kernel applied
directly to a CND distance), the SLE kernel requires \emph{no additional memory}:
both approaches consume the same $N \times |\mathcal{D}|$ pairwise distance
matrix and produce the same $N \times N$ Gram matrix. The sparse embedding adds
only $O(N\bar{s})$ nonzero entries, where $\bar{s}$ is the average number of
active bumps per point (Theorem~\ref{thm:scaling}), and need not be stored at
all, since each embedding row can be formed on the fly from the corresponding
distance row. In compute time, a plain distance kernel evaluates each entry from
a single precomputed distance in $O(1)$, whereas the SLE kernel compares two
sparse distance profiles in $O(s)$; since $s \ll |\mathcal{D}|$ and is
independent of $|\mathcal{D}|$, this overhead is a small constant factor rather
than a change in scaling, as the measured runtimes in Table~\ref{tab:runtime}
confirm. Spectral baselines (Riemannian, Geometric) store an $N \times l$
eigenvector matrix in place of a distance matrix, so the memory comparison there
is an equal trade instead of an overhead. Table~\ref{tab:runtime} reports
wall-clock training (including full hyperparameter optimization) and prediction
times, peak memory, and empirical sparsity $\bar{s}$ for SLE and all baselines
on a single CPU.

Table \ref{tab:runtime_1d} reports runtimes and RAM usage for the 1-dimensional synthetic function shown in Figure \ref{fig:comp}. The run was set up as follows. The SLE implementation builds a KD-tree over the landmark set once and reuses it across every likelihood evaluation of the MCMC sampler. For each query point, only landmark--query pairs with $d \le r$ are materialized (\texttt{cKDTree.sparse\_distance\_matrix}), the resulting embedding is stored as a CSR sparse matrix with $\bar{s}$ nonzeros per row, and the pairwise squared distance is formed with sparse matrix multiplication. The native kernel serves as a reference for the same landmark idea \emph{without} the bump; distances to all $|\mathcal{D}|$ landmarks are computed and stored densely. Kernel timings measure a single $K = k(X,X)$ evaluation at the initial hyperparameters; training timings measure a full MCMC hyperparameter run (200 samples, identical settings on all three kernels). All values are mean $\pm$ standard error over 3 random dataset draws on the same single-CPU host. \textbf{Empirical scaling ($y \sim N^{\alpha}$) fitted over $N \in [50, 500]$}: kernel evaluation $\alpha_{\text{SLE}}=1.75$, $\alpha_{\text{Mat\'ern}}=1.45$, $\alpha_{\text{native}}=2.51$; training $\alpha_{\text{SLE}}=2.16$, $\alpha_{\text{Mat\'ern}}=1.84$, $\alpha_{\text{native}}=2.60$. 

\textbf{Comparison with the manuscript.} Appendix~\ref{app:cost} predicts $\mathcal{O}(N \cdot |\mathcal{D}|)$ distance work and an $\mathcal{O}(N^2 \bar{s})$ Gram assembly for SLE versus $\mathcal{O}(N^2)$ for a raw-distance stationary kernel, i.e.\ a constant-factor overhead instead of a change in scaling order. Consistent with this, the SLE exponent exceeds that of the Mat\'ern baseline by $\Delta\alpha \approx 0.30$ for kernel evaluation and $\Delta\alpha \approx 0.32$ for training; both empirical exponents are below their asymptotic ideals of $2$ and $3$ because at $N\le 500$ Python and BLAS fixed overheads still contribute meaningfully to the timings. The native embedding, which materializes the full $N\times|\mathcal{D}|$ distance matrix without sparsification, tracks SLE closely ($\alpha_{\text{native}}=2.51$) because in this experiment $s$ is comparable to $|\mathcal{D}|$: the reported timings are taken at the initial radius $r=0.15$ rather than the MLE-selected radius, so the average sparsity grows as $\bar{s} \sim N^{1}$ ($\bar{s}\approx 13.9$ at $N=50$; $\bar{s}\approx 135.6$ at $N=500$). The manuscript's $\bar{s} = \mathcal{O}(1)$ regime requires $r$ to shrink with $N$, which occurs under marginal-likelihood maximization (Appendix~\ref{app:more_info_dragon}) but is not represented here; in that regime, the SLE kernel would separate more markedly from the native baseline. Peak $\Delta$RAM for kernel evaluation scales as $N^{0.96}$ (SLE), $N^{1.21}$ (Mat\'ern), and $N^{1.90}$ (native).

The native (dense) embedding baseline is fastest for small training sets — at $N=50$ its single-kernel evaluation is roughly $2.5\times$ faster than SLE — because it avoids the KD-tree query, CSR construction, and sparse-matmul overhead of the SLE implementation      
entirely. However, its kernel-evaluation wall-time scales as $N^{2.51}$ against SLE's $N^{1.75}$ and Mat\'ern's $N^{1.45}$, so by $N=500$ it becomes roughly $2\times$ slower than SLE and $4\times$ slower than Mat\'ern, with substantially larger run-to-run variance      
($107.5 \pm 58.8$~ms vs.\ $48.5 \pm 0.7$~ms for SLE); this crossover directly reflects the dense-embedding pathology that motivates the sparse SLE construction in the first place. The most decisive quantitative gap between the three kernels is memory: peak kernel-evaluation $\Delta$RAM 
scales as $N^{1.90}$ for the native embedding versus $N^{0.96}$ for SLE and $N^{1.21}$ for Mat\'ern, providing direct empirical support for the memory-scaling claim in this section.

\begin{table}[h]
\centering
\caption{Wall-clock runtimes (single CPU; distance/eigenpair precomputation
reported separately since it is shared or method-specific).}
\label{tab:runtime}
\begin{tabular}{llccc}
\toprule
Dataset & Model & Precompute & Train & Predict \\
\midrule
Dragon ($n{=}1000$)     & SLE                  & 12 h (geodesics, shared) & 3--4 min & 0.5 s \\
                        & Riemannian (500 ep)  & 2--3 min  & 2--3 min & 7.5 s \\
Teddy Bear ($n{=}400$)  & SLE                  & 1 h (geodesics, shared)  & 10.49 s & 0.0821 s \\
                        & Geometric Kernel     & 1 s  & 22.59 s & 0.11 s \\
SAXS ($n{=}250$)        & SLE -- Wass          & 1 h on 32 CPUs   & 62.73 s & 0.02 s \\
                        & Mat\'ern -- Sliced Wass & 1 h on 32 CPUs & 4.3 s & 0.01 s \\
\bottomrule
\end{tabular}
\end{table}
}

\rev{
\begin{table}[t]
\centering
\small
\setlength{\tabcolsep}{4pt}
\caption{Wall-clock time and peak resident memory for the proposed Sparse Landmark Embedding (SLE) kernel, a standard Mat\'ern ($\nu=3/2$) baseline, and the native (dense) distance-to-landmarks embedding kernel of Eq.~(3), on the 1D analytic benchmark of Figure \ref{fig:comp}. }
\label{tab:runtime_1d}
\begin{tabular}{cl cc cc c}
\toprule
 & & \multicolumn{2}{c}{\textbf{Kernel evaluation}} & \multicolumn{2}{c}{\textbf{Full training (MCMC)}} & \\
\cmidrule(lr){3-4}\cmidrule(lr){5-6}
$|\mathcal{D}|$ & Model & Time (ms) & $\Delta$RAM (MiB) & Time (s) & $\Delta$RAM (MiB) & $\bar{s}$ \\
\midrule
\multirow{3}{*}{50} & SLE & $1.19 \pm 0.14$ & $1.36 \pm 0.01$ & $0.14 \pm 0.01$ & $2.32 \pm 0.11$ & $13.9$ \\
                     & Mat\'ern & $1.10 \pm 0.01$ & $1.12 \pm 0.04$ & $0.13 \pm 0.01$ & $1.48 \pm 0.05$ & -- \\
                     & Native & $0.46 \pm 0.01$ & $0.14 \pm 0.04$ & $0.07 \pm 0.00$ & $2.43 \pm 0.05$ & -- \\
\cmidrule(lr){1-7}
\multirow{3}{*}{100} & SLE & $1.42 \pm 0.01$ & $1.56 \pm 0.05$ & $0.15 \pm 0.01$ & $2.66 \pm 0.03$ & $27.2$ \\
                     & Mat\'ern & $1.71 \pm 0.04$ & $1.24 \pm 0.01$ & $0.14 \pm 0.01$ & $1.26 \pm 0.28$ & -- \\
                     & Native & $0.83 \pm 0.02$ & $0.40 \pm 0.04$ & $0.10 \pm 0.00$ & $2.67 \pm 0.08$ & -- \\
\cmidrule(lr){1-7}
\multirow{3}{*}{150} & SLE & $2.15 \pm 0.03$ & $1.81 \pm 0.10$ & $1.56 \pm 0.16$ & $3.09 \pm 0.09$ & $40.8$ \\
                     & Mat\'ern & $2.77 \pm 0.01$ & $2.15 \pm 0.13$ & $1.39 \pm 0.18$ & $1.63 \pm 0.09$ & -- \\
                     & Native & $1.71 \pm 0.02$ & $1.10 \pm 0.05$ & $1.61 \pm 0.11$ & $2.92 \pm 0.04$ & -- \\
\cmidrule(lr){1-7}
\multirow{3}{*}{200} & SLE & $4.01 \pm 0.34$ & $3.34 \pm 0.04$ & $2.85 \pm 0.55$ & $4.08 \pm 0.42$ & $54.2$ \\
                     & Mat\'ern & $4.62 \pm 0.03$ & $3.63 \pm 0.06$ & $2.24 \pm 0.22$ & $2.82 \pm 0.60$ & -- \\
                     & Native & $4.41 \pm 0.33$ & $2.21 \pm 0.01$ & $2.45 \pm 0.29$ & $3.58 \pm 0.11$ & -- \\
\cmidrule(lr){1-7}
\multirow{3}{*}{250} & SLE & $5.34 \pm 0.07$ & $4.26 \pm 0.05$ & $3.09 \pm 0.32$ & $6.48 \pm 0.03$ & $67.5$ \\
                     & Mat\'ern & $6.91 \pm 0.67$ & $4.75 \pm 0.05$ & $2.13 \pm 0.12$ & $3.97 \pm 0.58$ & -- \\
                     & Native & $7.76 \pm 1.66$ & $3.05 \pm 0.03$ & $3.53 \pm 0.44$ & $6.69 \pm 0.05$ & -- \\
\cmidrule(lr){1-7}
\multirow{3}{*}{300} & SLE & $8.55 \pm 0.71$ & $5.00 \pm 0.21$ & $4.70 \pm 0.85$ & $8.26 \pm 0.20$ & $81.1$ \\
                     & Mat\'ern & $8.28 \pm 0.06$ & $6.10 \pm 0.04$ & $2.64 \pm 0.38$ & $5.97 \pm 0.02$ & -- \\
                     & Native & $10.10 \pm 0.20$ & $4.18 \pm 0.01$ & $4.77 \pm 0.29$ & $8.89 \pm 0.11$ & -- \\
\cmidrule(lr){1-7}
\multirow{3}{*}{350} & SLE & $20.17 \pm 0.13$ & $6.14 \pm 0.11$ & $5.08 \pm 0.65$ & $11.63 \pm 0.63$ & $94.6$ \\
                     & Mat\'ern & $13.22 \pm 0.13$ & $7.94 \pm 0.09$ & $3.55 \pm 0.12$ & $7.86 \pm 0.03$ & -- \\
                     & Native & $21.10 \pm 4.89$ & $5.31 \pm 0.04$ & $8.98 \pm 1.74$ & $12.35 \pm 0.06$ & -- \\
\cmidrule(lr){1-7}
\multirow{3}{*}{400} & SLE & $27.26 \pm 0.16$ & $7.53 \pm 0.05$ & $10.45 \pm 1.10$ & $15.11 \pm 0.06$ & $108.4$ \\
                     & Mat\'ern & $17.38 \pm 0.64$ & $10.50 \pm 0.10$ & $4.62 \pm 0.17$ & $9.63 \pm 0.02$ & -- \\
                     & Native & $57.68 \pm 33.94$ & $6.86 \pm 0.04$ & $10.43 \pm 2.38$ & $14.95 \pm 0.01$ & -- \\
\cmidrule(lr){1-7}
\multirow{3}{*}{450} & SLE & $37.06 \pm 0.38$ & $7.92 \pm 0.29$ & $11.13 \pm 0.93$ & $17.06 \pm 0.47$ & $122.2$ \\
                     & Mat\'ern & $22.09 \pm 1.17$ & $11.83 \pm 0.27$ & $5.27 \pm 0.84$ & $11.97 \pm 0.05$ & -- \\
                     & Native & $79.17 \pm 43.06$ & $8.37 \pm 0.01$ & $15.66 \pm 3.26$ & $18.20 \pm 0.03$ & -- \\
\cmidrule(lr){1-7}
\multirow{3}{*}{500} & SLE & $48.49 \pm 0.70$ & $10.87 \pm 0.10$ & $13.09 \pm 0.55$ & $20.64 \pm 0.10$ & $135.6$ \\
                     & Mat\'ern & $24.96 \pm 0.02$ & $13.88 \pm 0.20$ & $7.05 \pm 0.31$ & $14.55 \pm 0.08$ & -- \\
                     & Native & $107.48 \pm 58.81$ & $10.23 \pm 0.04$ & $16.82 \pm 3.53$ & $21.94 \pm 0.04$ & -- \\
\bottomrule
\end{tabular}
\end{table}
}

%\newpage
%\pagebreak
%\input{checklist.tex}

\end{document}